\setlist[itemize]{itemsep=0pt}
\newcommand{\R}{\mathbb{R}}
\def\<#1,#2>{\left\langle #1,#2\right\rangle}
\newtheorem{lemma}{Lemma}[section]
\newtheorem{theorem}{Theorem}[section]
\newtheorem{definition}{Definition}[section]
\newtheorem{assumption}{Assumption}[section]
\newtheorem{remark}{Remark}[section]
\newcommand{\argmin}{\mathop{\arg\!\min}}
\newcommand{\cC}{{\cal C}}
\newcommand{\cO}{{\cal O}}
\newcommand{\mM}{{\bf M}}
\newcommand{\EE}{\mathbb{E}}
\newcommand{\PP}{\mathbb{P}}
\newcommand{\Mod}[1]{\ \mathrm{mod}\ #1}
\title{Moshpit SGD: Communication-Efficient\\ Decentralized Training\\ on Heterogeneous Unreliable Devices}
\author{%
  Max Ryabinin\thanks{Equal contribution. Correspondence to \texttt{mryabinin0@gmail.com}.} \\
  Yandex, Russia\\
  HSE University, Russia\\
  \And
  Eduard Gorbunov\footnotemark[1]\\
  MIPT, Russia\\
  HSE University, Russia\\
  Yandex, Russia\\
  \And
  Vsevolod Plokhotnyuk\\
  Yandex, Russia\\
  HSE University, Russia\\
  \And
  Gennady Pekhimenko\\
  University of Toronto, Canada\\
  Vector Institute, Canada
}
\begin{document}

\maketitle

\begin{abstract}
Training deep neural networks on large datasets can often be accelerated by using multiple compute nodes. 
This approach, known as distributed training, can utilize hundreds of computers via specialized message-passing protocols such as Ring All-Reduce.
However, running these protocols at scale requires reliable high-speed networking that is only available in dedicated clusters.
In contrast, many real-world applications, such as federated learning and cloud-based distributed training, operate on unreliable devices with unstable network bandwidth.
As a result, these applications are restricted to using parameter servers or gossip-based averaging protocols.
In this work, we lift that restriction by proposing Moshpit All-Reduce --- an iterative averaging protocol that exponentially converges to the global average.
We demonstrate the efficiency of our protocol for distributed optimization with strong theoretical guarantees.
The experiments show 1.3x speedup for ResNet-50 training on ImageNet compared to competitive gossip-based strategies and 1.5x speedup when training ALBERT-large on preemptible compute nodes.
\end{abstract}

\section{Introduction}\label{sect:intro}

Many recent influential discoveries in deep learning were enabled by the trend of scaling model and dataset size.
Over the last decade, computer vision has grown from training models with 60 million parameters~\cite{alexnet} on 1.3 million images~\cite{imagenet_cvpr09} to 15 times more parameters~\cite{Kolesnikov2020BigT} and 200 times more training data~\cite{jft-300m}. In natural language processing, the state-of-the-art language models~\cite{gpt3} with 175 billion parameters are trained on over 570GB of texts, and even this does not saturate the model quality~\cite{kaplan2020scaling}.
Training these large models can take years even with a top-of-the-line GPU server~\cite{gpt3costlambda}. As a result, researchers and practitioners often have to run distributed training with multiple machines~\cite{mlperf}.

The dominant approach to distributed deep learning is data-parallel training~\cite{valiant1990bridging}, where each worker processes a fraction of the training batch and then exchanges its gradients with peers. If done naïvely, the gradient exchange step can overload the network as the number of workers increases. To combat this issue, modern distributed training algorithms take advantage of communication-efficient protocols, such as all-reduce~\cite{bandwidth_optimal_allreduce}. These protocols 
allow workers to collectively compute the global average gradient with a constant communication overhead, regardless of the total number of peers.

However, this efficiency makes the protocols more fragile: if any single participant fails or takes too long to process its batch, all other nodes are stalled.
Therefore, scaling all-reduce protocols beyond a couple of servers requires specialized infrastructure with dedicated ultra-high bandwidth networking~\cite{mlperf}.
This kind of infrastructure is notoriously expensive compared to regular
GPU servers or preemptible cloud VMs (see Appendix~\ref{sect:cloud_costs} for details).

Hence, it is tempting to consider distributed training on cheap unreliable instances as a cost-efficient alternative. A similar scenario arises in federated learning~\cite{mcmahan2017communication}, where a single model is trained on heterogeneous devices due to privacy concerns.
In both scenarios, workers use a shared network, where both latency and bandwidth can vary drastically due to interference from other users~\cite{variability_azure}\nocite{variability_aws}. Furthermore, compute nodes are also subject to failure (or preemption) caused by factors beyond the protocol's control.

Running large-scale distributed training in these circumstances requires fault- and latency-tolerant algorithms~\cite{lian2017can,sgpush}. Most of these algorithms replace all-reduce averaging with \textbf{gossip}: each participant periodically downloads the latest parameters from their neighbors in a sparsely connected communication graph and averages the results. The updates gradually propagate through the graph over multiple rounds of averaging.
However, the communication required to perform gossip grows linearly with the number of neighbors. Hence, when scaling to hundreds of peers, decentralized SGD has to keep the communication graph sparse, slowing down the convergence.

In this work, we propose an alternative approach. Instead of relying on a predefined communication graph, participants dynamically organize themselves into groups using a fully decentralized matchmaking algorithm called \textbf{Moshpit All-Reduce}. This strategy allows us to use communication-efficient all-reduce protocols that significantly reduce the network load compared to gossip-based averaging, while still being able to operate in unreliable hardware and network conditions.

Our contributions can be summarized as follows:
\begin{itemize}
    \item We propose {\bf Moshpit All-Reduce} --- a novel decentralized averaging protocol for large-scale training with unreliable communication-constrained devices. According to our analysis, this method has exponential convergence rate independent of network topology and size.
    \item Armed with this averaging protocol, we develop {\bf Moshpit SGD} for distributed optimization. We derive convergence rates for this algorithm and establish its equivalence to Centralized (Local) SGD in terms of iteration complexity under realistic assumptions.
    \item Our experiments demonstrate that Moshpit All-Reduce is significantly more efficient under network latency in realistic conditions. In particular, we train ResNet-50 on ImageNet to 75\% accuracy 1.3 times faster than existing decentralized training algorithms and pretrain ALBERT-large 1.5 times faster on preemptible cloud VMs.\footnote{Implementation and code of experiments are at \href{https://github.com/yandex-research/moshpit-sgd}{\texttt{github.com/yandex-research/moshpit-sgd}}.}
\end{itemize}

\vspace{-10px}%
\section{Related Work}\label{sect:related}
\vspace{-4px}
\subsection{Data parallel training}\label{sect:related_data_parallel}
\vspace{-4px}

The most popular way to accelerate neural network training with multiple devices is data-parallel training~\cite{valiant1990bridging,goyal2017accurate,You2020Large}. On each optimization step, this strategy splits the training batch among participants. Each participant then runs forward and backward passes to obtain gradients of the objective function on their part of the training batch. After that, we can aggregate the gradients from workers and perform an optimization step. There are two main strategies for this aggregation.

Historically, the first solution to gradient aggregation was to use Parameter Server (PS)~\cite{parameter_server_first}: a separate process or a dedicated server that keeps track of model parameters and optimizer statistics. After each round, the PS accumulates the gradients from each worker and updates the model parameters using SGD or any other optimizer, such as Adam~\cite{adam}. Finally, the server distributes the updated model parameters to workers.

This strategy is robust and easy to implement, but it requires the server to regularly download full model gradients from every single worker. As a result, the parameter server can quickly become a bottleneck for large-scale training~\cite{survey_distributed2}\nocite{survey_distributed}. Since the original PS, researchers have proposed several modifications that reduce the communication load: accumulating multiple batches~\cite{localsgd_first}, compression~\cite{lin2018deep,pmlr-v97-koloskova19a}, server sharding~\cite{sharded_ps_first,byteps}. A more detailed overview is given in Appendix~\ref{sect:post_related}.

In turn, many practical distributed training systems have instead switched to averaging with All-Reduce ~\cite{goyal2017accurate,mikami2019massively,shoeybi2019megatron,You2020Large}. This name refers to a collection of protocols originally developed for HPC applications. Workers can follow these protocols to collectively compute the average\footnote{All-Reduce works with any commutative associative operation, such as min, max, or product.} gradient more efficiently than with a central server.

\subsection{Communication-efficient All-Reduce}\label{sect:related_allreduce}
There are several all-reduce protocols optimized for different network topologies. The simplest one is known as Butterfly All-Reduce~\cite{bandwidth_optimal_allreduce}. Each of $N$ participants splits its local vector into $N$ chunks. Then, $i$-th worker aggregates $i$-th chunk of data from all peers and sends back the averaged chunk.

\begin{figure}[h!]
    \centering
    \includegraphics[width=0.65\linewidth]{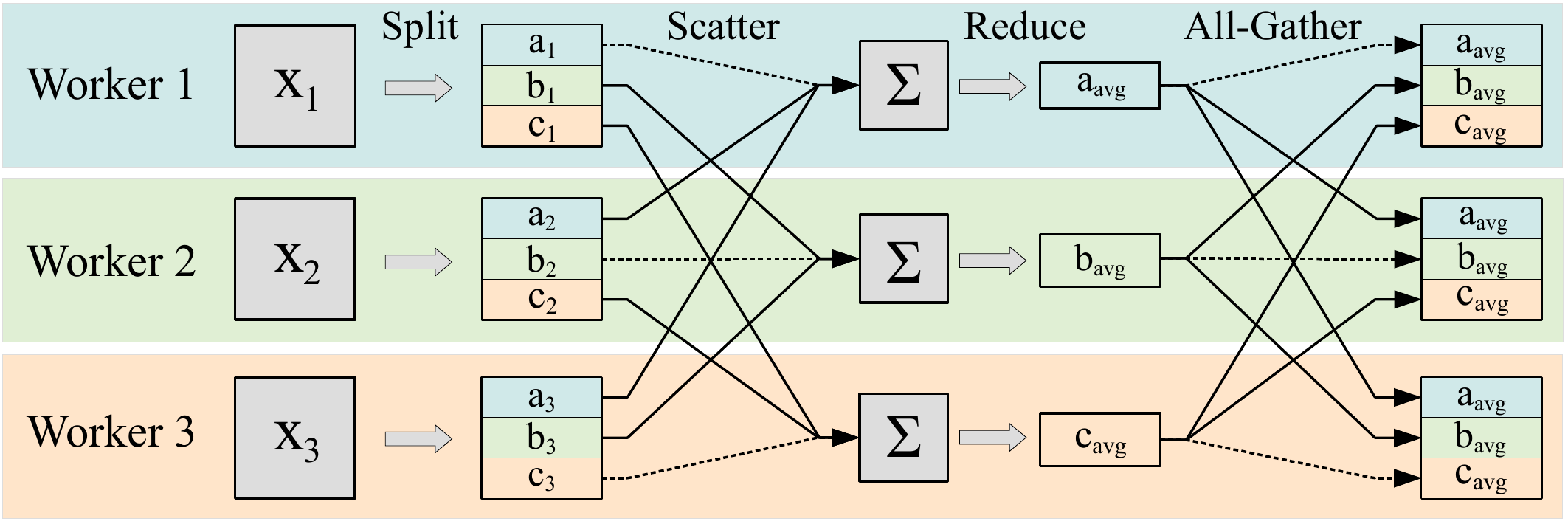}
    \caption{A schematic illustration of Butterfly All-Reduce.}
    \label{fig:butterfly_allreduce}
\end{figure}

As long as the vector size $s$ is greater than $N$, this protocol uses $\cO\left(s \times \frac{N - 1}{N}\right)$ total bandwidth on each worker. However, it requires all-to-all communication, which is not always practical for the HPC infrastructure due to network contention~\cite{bandwidth_optimal_allreduce}. As a result, real-world systems typically use Ring or Tree All-Reduce, where each worker only communicates with a small subset of its peers.

These protocols enable highly efficient and scalable averaging with $\cO(1)$ or $\cO(\log N)$ total communication per worker, but they also share a common drawback: they cannot tolerate node failures or network instability. If any single participant fails to execute its part or takes long to respond, this paralyzes all other workers.

\subsection{Distributed training in unstable conditions}\label{sect:related_unreliable}
Some distributed training applications must deal with unstable network bandwidth and/or unreliable workers. This issue is most prevalent in federated learning~\cite{mcmahan2017communication,secure_aggregation,federatedlearningatscale}. When dealing with privacy-sensitive data distributed across multiple actors, such as hospital servers~\cite{fed_intel,fed_nvidia} or mobile phones~\cite{fed_google1,fed_google2}, one must train the model using whichever hardware and network available to those actors.

Another important motivational factor is cost: HPC-grade infrastructure can be prohibitively expensive, pushing researchers and practitioners towards commodity servers or preemptible cloud VMs that are significantly cheaper (see Appendix~\ref{sect:cloud_costs}). Another solution is to use volunteer computing~\cite{volunteer_dl_async, learning_at_home} with abundant, but even less reliable, compute resources.

Training under these conditions requires specialized strategies. At a small scale, one can deploy one or a few reliable parameter servers to aggregate the updates from workers. This strategy can tolerate individual node failures~\cite{proteus}, but scales poorly due to the reasons discussed in Section~\ref{sect:related_data_parallel}.

\subsection{Decentralized training}\label{sect:related_decentralized_training}
If there are too many participants for PS, it can be advantageous to use decentralized SGD via \textbf{gossip-based} averaging \cite{boyd2006randomized,tsitsiklis1984problems,lian2017can}. In this scenario, participants form a sparse graph: each worker periodically downloads parameters from its neighbors and mixes them with local parameters.

In essence, gossip-based averaging removes the communication bottlenecks of PS at the cost of using different local parameters on each peer. That said, gossip-based optimization algorithms can match, and sometimes even outperform, their centralized counterparts in terms of training speed~\cite{scaman2017optimal,scaman2018optimal,scaman2019optimal,lian2017can,assran2019stochastic}. However, the convergence properties of gossip averaging and gossip-based optimization methods significantly depend on the communication graph through the spectral properties of the mixing matrix~\cite{xiao2004fast,scaman2019optimal} or the Laplacian matrix of the network~\cite{merris1994laplacian,uribe2020dual}. 

Consequently, as the number of peers increases, gossip-based averaging has to either increase the number of neighbors (hence more communication) or accept slower convergence speed. Because of this, gossip is less communication-efficient than all-reduce algorithms reviewed in Section~\ref{sect:related_allreduce}. However, gossip-based algorithms are more robust to changes, which makes them applicable to time-varying networks~\cite{nedic2014distributed,nedic2016stochastic,nedic2018network,rogozin2019projected} and federated learning~\cite{ram2009asynchronous,yan2012distributed,yuan2016convergence}.

\section{Moshpit SGD}\label{sect:method}

Large-scale training with unreliable participants requires a protocol that is both communication-efficient and fault-tolerant. Unfortunately, existing methods have only  provide one of these properties. To better address our conditions, we propose Moshpit All-Reduce --- a fully decentralized averaging protocol that combines the efficiency of all-reduce and the fault tolerance of gossip-based averaging. 

The rest of this section is organized as follows:
\begin{itemize}
    \item Section~\ref{sect:method_algorithm} describes the protocol and proves its correctness and communication efficiency;
    \item Section~\ref{sect:method_convergence} provides the analysis of the protocol and proves exponential convergence rate for averaging and the rate matching the one of centralized Local-SGD for optimization;
    \item Section~\ref{sect:method_implementation_details} contains implementation details for training with heterogeneous compute nodes.
\end{itemize}

\subsection{Moshpit All-Reduce}
\label{sect:method_algorithm}

The core idea of Moshpit All-Reduce is that workers perform averaging in small independent groups. That way, a single failed participant would only affect his current group. In turn, the composition of each group should be chosen dynamically to converge in the least number of steps.
Ideally, if there are 9 peers with local parameters $\theta$, we can average them in 2 rounds, as demonstrated in Figure~\ref{fig:square_allreduce}:

\vspace{-4pt}
\noindent
\begin{minipage}{0.45\textwidth}
\centering
\includegraphics[width=\textwidth]{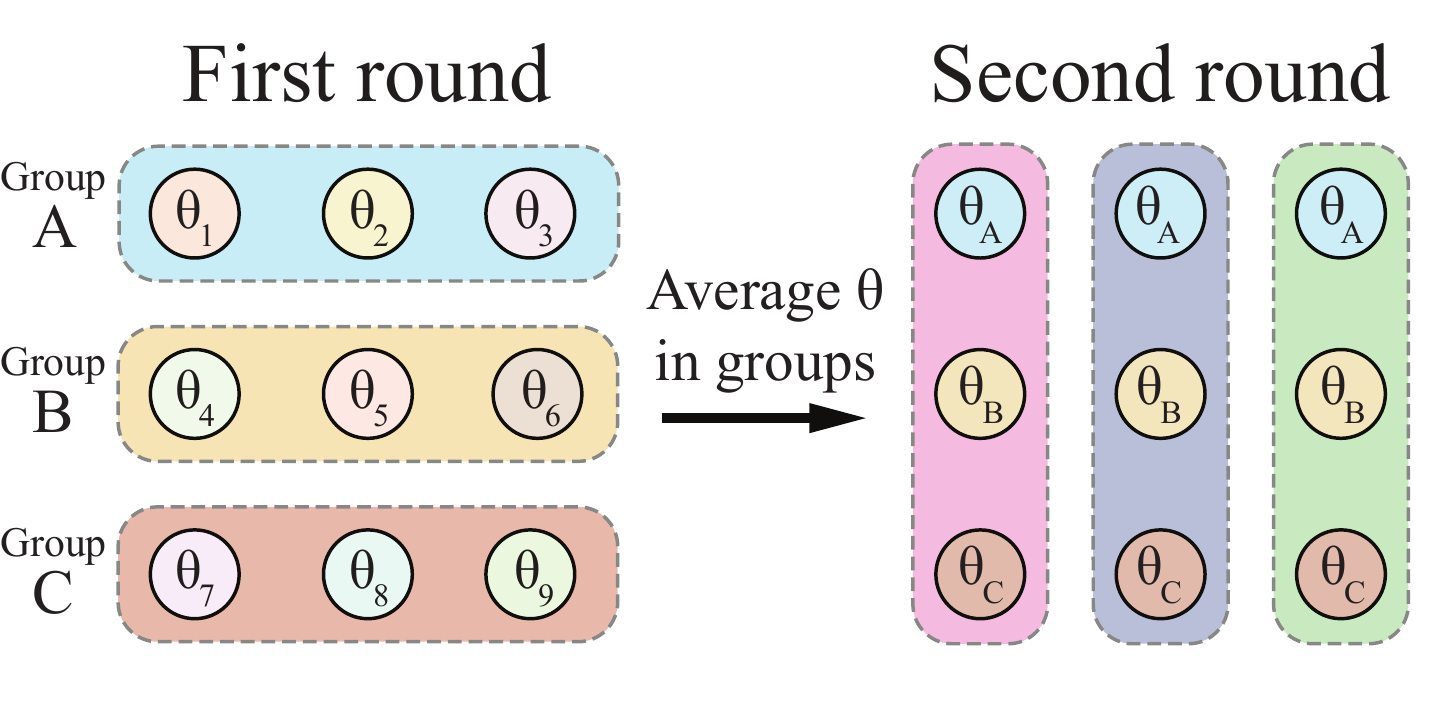}
\captionof{figure}{Example averaging order for 9 peers in 2 rounds. On each round, peers are split into 3 groups that run All-Reduce in parallel.}
\label{fig:square_allreduce}
\end{minipage}
\begin{minipage}{0.55\textwidth}
\begin{algorithm}[H]
\caption{Moshpit All-Reduce (for $i$-th peer)}
   \label{alg:moshpit}
\begin{algorithmic}[H]
   \STATE {\bfseries Input:} parameters $\{\theta_j\}_{j=1}^N$, number of peers $N$, $d$, $M$, number of iterations $T$, peer index $i$

   $\theta_{i}^0 := \theta_i$
   
   $C^0_i :=\texttt{get\_initial\_index(i)}$
   
   \FOR{$t \in 1 \dots T$}
     \STATE $\texttt{DHT}[C^{t-1}_i, t].\texttt{add}(\texttt{address}_i)$
     
     \STATE \texttt{Matchmaking()} // wait for peers to assemble
     
     \STATE $\texttt{peers}_t := \texttt{DHT}.\texttt{get}([C^{t-1}_i, t])$ 
     
     \STATE $\theta_{i}^t, c^t_i := \texttt{AllReduce}(\theta_{i}^{t - 1}, \texttt{peers}_t)$
     
     \STATE $C^t_i := (C^{t-1}_i\texttt{[1:]}, c^t_i)$  // same as eq. (1)
   \ENDFOR
   \STATE {\bfseries Return} $\theta^T_i$
\end{algorithmic}
\end{algorithm}
\end{minipage}

To achieve this in a decentralized system, we use Distributed Hash Tables (DHT) --- a decentralized key-value storage; \autoref{sect:post_related} contains its more detailed description. On each averaging round:
\begin{itemize}
    \item Each worker computes its group key $C_i$;
    \item Workers add their network addresses to the DHT key corresponding to $C_i$;
    \item Each worker can now fetch a full list of peers that have the same $C_i$ and run All-Reduce with those peers.
\end{itemize}

Unfortunately, the averaging structure from Figure~\ref{fig:square_allreduce} is impossible to maintain when participants are constantly joining, leaving, and failing. However, we can achieve equivalent results without global structure using a simple rule: \textit{if two peers were in the same group in round $t$, they must choose different groups in round $t {+} 1$.}

A natural way to enforce this rule is to take advantage of the chunk indices from Butterfly All-Reduce (see Figure~\ref{fig:butterfly_allreduce}). Recall that each worker accumulates a \textit{unique} chunk of parameters defined by an index $c_i$. By setting $C_i := c_i$, we can guarantee that any workers that were in the same group at a round $t$ will have different group indices in round $t {+} 1$.

This averaging scheme can be generalized to more than two dimensions in order to fit a larger number of peers or reduce the group size. For a $d$-dimensional hypercube, nodes should find groups of peers that they have not communicated with during $d {-} 1$ previous rounds. To that end, we define $C_i$ as tuples containing chunk indices from $d{-}1$ previous rounds ($t$ denotes the communication round):
\vspace{-2pt}
\begin{equation}
    C^t_i := (c^{t-d+1}_i, c^{t-d+2}_i, \ldots, c^{t }_i).
    \label{eq:group}
\end{equation}

The above intuition can be formalized with Algorithm \ref{alg:moshpit}.
Here, $N$ peers form a virtual $d$-dimensional grid with $M$ peers per row and average their parameters $\theta_i$ over $T$ rounds. $\texttt{DHT}[\cdot]$ is a shortcut for using the DHT to add or retrieve values for a given key. The  \texttt{Matchmaking} step corresponds to the decentralized matchmaking procedure that organizes active workers with the same index into groups, described in detail in ~\autoref{sect:matchmaking}. In turn, \texttt{AllReduce} denotes running all-reduce to compute the average $\theta$ in a given group. The \texttt{get\_initial\_index} function takes the peer index $i$ and returns $d{-}1$ integers
in range $[0, M)$ such that the size of initial groups does not exceed $M$.
This way, the groups formed on subsequent rounds will also have at most $M$ participants. One possible strategy is:

\vspace{-8pt}
\begin{equation}
    \texttt{get\_initial\_index}(i) = 
    \begin{pmatrix}
           \lfloor i / M^{d{-}1} \rfloor \Mod M \\
         \end{pmatrix}_{j\in \{1,\ \ldots,\ d\}}
    \label{eq:get_initial_index}
\end{equation}

If $N {=} M^d$ and there are no node/network failures, Algorithm~\ref{alg:moshpit} is equivalent to Torus All-Reduce~\cite{torus_allreduce}, achieving the exact average after $d$ rounds of communication (see Appendix~\ref{sect:equiv_to_torus}).
However, our typical use case is far from this perfect scenario; for example, some groups can have less than $M$ members. Furthermore, a peer might fail during all-reduce, causing its groupmates to skip a round of averaging. 
Still, Moshpit All-Reduce is applicable even in these conditions:
\begin{theorem}[Correctness]\label{thm:quality_of_avg_deterministic_vectors_0}
If all workers have a non-zero probability of successfully running a communication round and the order of $\texttt{peers}_t$ is random, then all local vectors $\theta^t_i$ converge to the global average with probability 1:
\vspace{-4px}
\begin{equation}
    \forall i, \Big|\Big|\theta^t_i - \frac1N \sum_i \theta^0_i\Big|\Big|^2_2 \xrightarrow[t\to\infty]{} 0.
\end{equation}
\end{theorem}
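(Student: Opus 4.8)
The plan is to control a single scalar Lyapunov quantity measuring the spread of the local vectors around the global mean, show it can never increase, and then force it to zero almost surely. Fix the target $\mean\theta \eqdef \frac1N\sum_i \theta_i^0$ and define the potential
\[
V_t \eqdef \sum_{i=1}^N \left\|\theta_i^t - \mean\theta\right\|_2^2 .
\]
The first observation is that each \texttt{AllReduce} call is mean-preserving: replacing the vectors of a group $G$ by their common average $\theta_G=\frac{1}{|G|}\sum_{j\in G}\theta_j^t$ leaves $\sum_{j\in G}\theta_j^t$ fixed, so $\frac1N\sum_i\theta_i^t=\mean\theta$ for all $t$; this identifies the limit in the statement with $\mean\theta$. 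The second observation is monotonicity: the parallel-axis (bias--variance) identity applied inside each group gives
\[
V_t - V_{t+1} \;=\; \sum_{G}\,\sum_{j\in G}\left\|\theta_j^t-\theta_G\right\|_2^2\;\ge\;0,
\]
the sum running over the groups formed at round $t$, with a failed or isolated peer contributing a singleton group and hence $0$. Thus $V_t\ge0$ is non-increasing, so $V_t\downarrow V_\infty$ almost surely for some $V_\infty\ge0$, and it remains only to prove $V_\infty=0$.

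I would then reduce the almost-sure statement to an expectation bound. Since $V_t$ is monotone and bounded by $V_0$, Fatou's lemma gives $\EE[V_\infty]\le\liminf_t\EE[V_t]$, so it suffices to show $\EE[V_t]\to0$. For this I would establish a uniform contraction over windows of some fixed length $\tau$ (with $\tau=d$ the natural choice, since in the ideal failure-free case with $N=M^d$ one full cycle already yields exact averaging): there is a constant $p\in(0,1]$, depending only on $M$, $d$, and the minimal per-round success probability $p_{\min}>0$, with
\[
\EE\!\left[V_{t+\tau}\,\middle|\,\cF_t\right]\;\le\;(1-p)\,V_t\qquad\text{for all }t,
\]
where $\cF_t$ is the history through round $t$. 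Iterating and using monotonicity gives $\EE[V_t]\le(1-p)^{\lfloor t/\tau\rfloor}V_0\to0$, hence $V_\infty=0$ a.s.; since each term satisfies $\|\theta_i^t-\mean\theta\|_2^2\le V_t$, this is exactly the claim.

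The heart of the matter, and the main obstacle, is the contraction inequality. I would prove it by exhibiting a favourable event of probability bounded away from zero uniformly in $t$. Conditioned on the (sufficient, though not necessary) event that every peer communicates successfully in each of the next $\tau$ rounds, the anti-matching rule --- two peers sharing a group at round $t$ receive distinct group keys at round $t+1$, enforced by the chunk indices in \eqref{eq:group} --- together with the random order of $\texttt{peers}_t$ makes the union of the $\tau$ group-partitions a connected structure on $\{1,\dots,N\}$ with positive probability. On any realization where this union is connected, the product $W_{t+\tau}\cdots W_{t+1}$ of the block-averaging (doubly stochastic, projection-type) matrices has $\operatorname{span}(\mathbf 1)$ as its only fixed space, so it contracts the disagreement by some factor strictly below $1$; because there are only finitely many grouping configurations over $\tau$ rounds, this factor can be taken uniform, yielding a contraction of $V$ by a fixed $\rho<1$. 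On the complementary event monotonicity gives $V_{t+\tau}\le V_t$, and combining the two cases produces the displayed bound with $p$ equal to the (positive) probability of the connecting event, which is at least $p_{\min}^{N\tau}$ times the positive probability that the random chunk assignment connects the partitions. The two delicate points I expect to spend the most effort on are (i) showing the union of the partitions is connected with probability bounded away from zero \emph{uniformly in $t$} even when $N\neq M^d$ and the groups have unequal size, where the randomized chunk-index assignment is essential; and (ii) verifying that joint connectivity indeed forces strict contraction of the iterated averaging operator, which I would justify via the spectral/ergodic properties of products of doubly stochastic matrices.
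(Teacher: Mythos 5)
Your proposal is correct in its overall strategy, but it is a genuinely different proof from the one the paper gives in Appendix~\ref{sect:correctness_proof}. Amusingly, your first two observations (mean invariance and monotone decrease of the spread) are exactly the paper's \emph{main-text sketch}, but the appendix proof then abandons the Lyapunov viewpoint and works coordinate-wise: it forms the nested intervals $I_{j,t}=\mathrm{conv}\{\theta_1^t(j),\ldots,\theta_N^t(j)\}$, invokes Cantor's intersection theorem, and derives a contradiction from nondegeneracy of the limit interval, using only that every pair of peers shares an averaging group infinitely often with probability $1$ (peers clustered near the left endpoint are repeatedly pulled up by increments of order $\Delta/(2N)$ until the cluster empties). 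The trade-off is clear: the paper's argument needs no uniformity over fixed windows and no spectral machinery, but is purely asymptotic; your argument requires a uniform-in-$t$ lower bound on the probability of a ``connecting'' window, but in exchange yields geometric decay of $\EE[V_t]$, i.e.\ a quantitative statement in the spirit of Theorem~\ref{thm:quality_of_avg_deterministic_vectors}, which the paper's proof cannot produce. Your delicate point (ii) is indeed standard (a product of orthogonal block-averaging projections fixes only vectors constant on the connected components of the union of its partitions, these projections preserve the zero-mean subspace, and finiteness of the set of partition sequences gives a uniform $\rho<1$). Your delicate point (i) — the real crux — is resolvable, and in fact more cleanly than you anticipate: conditioned on every peer succeeding in $d$ consecutive rounds, connectivity of the union is \emph{deterministic}, not merely of positive probability. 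Let $S_k$ be the peers that drew chunk index $0$ in each of the first $k$ rounds of the window; since every group assigns index $0$ to exactly one member, and since after $k$ rounds the last $k$ entries of a peer's key are precisely its window chunk indices (so $S_k$ peers only ever share groups with $S_k$ peers), an easy induction shows every peer is connected in the union graph to some member of $S_k$, while $S_{d-1}$ consists of peers holding the all-zero key, which therefore merge into a single group at round $d$. Hence the good event has conditional probability at least $p_{\min}^{Nd}$, and the randomness of the order of $\texttt{peers}_t$ is not even needed for this step (though, like the paper's Borel--Cantelli step, you must read the hypothesis as a time-uniform conditional success bound $p_{\min}>0$). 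One harmless slip: combining your two cases gives $\EE[V_{t+\tau}\mid\cF_t]\le\bigl(1-p(1-\rho)\bigr)V_t$ rather than $(1-p)V_t$; renaming the constant fixes the iteration.
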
\vspace{-16pt}
\begin{proof}[Proof (sketch, complete in Appendix~\ref{sect:correctness_proof})]
Running all-reduce with a subset of peers preserves the invariant $\frac1N \sum_i \theta^t_i=\frac1N \sum_i \theta^{t-1}_i$ and reduces the deviation of $\theta^t_i$ from the overall average.
\end{proof}\vspace{-6pt}

\textbf{Complexity.} The matchmaking protocol is implemented over Kademlia DHT~\cite{kademlia}, meaning that each read and write operation needs at most $\cO(\log N)$ requests and $\cO(M)$ bandwidth to load $\texttt{peers}_t$.

After the matchmaking is over, each group runs a single all-reduce round to compute the average. In principle, Moshpit Averaging can use any general-purpose all-reduce protocol. We opted for a butterfly-like version (Figure~\ref{fig:butterfly_allreduce}), as it is simpler than Ring All-Reduce while still being communication-efficient. The communication complexity of this algorithm is $\cO\left(\max(s, M) \times \frac{M - 1}{M}\right)$, where $s$ is the size of vector $\theta$. Thus, the total time complexity of Algorithm \ref{alg:moshpit} becomes:
\begin{equation}
    \cO\left(T \times \left[\log_2{N} + M + \max(s, M) \times {\frac{M - 1}{M}}\right]\right).
\end{equation}
This compares favorably to gossip, where network load grows linearly with the number of neighbors.

\vspace{-2pt}
\subsection{Convergence analysis}\label{sect:method_convergence}
\subsubsection{Mixing properties of Moshpit Averaging}\label{sect:theory_about_avg}
As stated in the previous section, Moshpit All-Reduce computes the exact average when $N = M^d$, which cannot be guaranteed in practice. Therefore, additional analysis is needed to establish how quickly Moshpit Averaging approximates the actual average of $N$ vectors stored on peers.

In the following theorem, we provide such analysis for a simplified version of Moshpit Averaging. One can find the full proof in Appendix~\ref{sec:proof_quality_of_avg_deterministic_vectors}.
\begin{theorem}\label{thm:quality_of_avg_deterministic_vectors}
    Consider a modification of Moshpit All-Reduce that works as follows: at each iteration $k\ge 1$, 1) peers are randomly split in $r$ disjoint groups of sizes $M_1^k,\ldots, M_r^k$ in such a way that $\sum_{i=1}^r M_i^k = N$ and $M_i^k \ge 1$ for all $i = 1,\ldots,r$ and 2) peers from each group compute their group average via All-Reduce. Let $\theta_1,\ldots,\theta_N$ be the input vectors of this procedure and $\theta_1^T,\ldots,\theta_N^T$ be the outputs after $T$ iterations. Also, let $\overline{\theta} = \frac{1}{N}\sum_{i=1}^N\theta_i$ Then,
    \begin{equation}
         \hspace{-0.1cm}\EE\left[\frac{1}{N}\sum\limits_{i=1}^N\|\theta_i^T - \overline{\theta}\|^2\right]= \left(\frac{r-1}{N} + \frac{r}{N^2}\right)^T\frac{1}{N}\sum\limits_{i=1}^N\|\theta_i - \overline{\theta}\|^2. \label{eq:determ_quality_of_avg}
    \end{equation}
\end{theorem}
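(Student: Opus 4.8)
The plan is to reduce the claim to a one-step variance contraction and then iterate. First I would record the structural fact already exploited in Theorem~\ref{thm:quality_of_avg_deterministic_vectors_0}: replacing every vector in a group by that group's mean leaves the global sum $\sum_{i=1}^N \theta_i$ unchanged, so $\overline{\theta}$ is invariant across all iterations. This lets me center once and for all and assume $\overline{\theta}=0$ (both sides of \eqref{eq:determ_quality_of_avg} are invariant under a common shift of the inputs), so that it suffices to track the dispersion $D^t := \sum_{i=1}^N \|\theta_i^t\|^2$. The target becomes $\EE[D^T] = q^T D^0$ with $q = \frac{r-1}{N} + \frac{r}{N^2}$.

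The heart of the argument is the single-iteration estimate. Conditioning on the configuration $\{\theta_i^{t-1}\}$ at the start of round $t$ and on the random partition into groups $G_1,\dots,G_r$ of sizes $M_1,\dots,M_r$, each peer in $G_g$ is overwritten by the group mean, so $D^t = \sum_{g=1}^r \frac{1}{M_g}\big\|\sum_{i\in G_g}\theta_i^{t-1}\big\|^2$. I would expand each squared norm into a diagonal part $\sum_{i\in G_g}\|\theta_i^{t-1}\|^2$ and an off-diagonal part $\sum_{i\neq i'\in G_g}\langle \theta_i^{t-1},\theta_{i'}^{t-1}\rangle$, then take expectation over the random split. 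By exchangeability of the split, the diagonal part contributes $\EE[1/M_{(i)}]$ per peer (with $M_{(i)}$ the size of the group containing $i$), a weight common to all peers, and the off-diagonal part contributes $\sum_{i\neq i'}\langle\theta_i^{t-1},\theta_{i'}^{t-1}\rangle$ times the expected inverse size on the event that $i,i'$ share a group, a weight common to all ordered pairs. The decisive simplification is the centering identity $\sum_{i\neq i'}\langle\theta_i^{t-1},\theta_{i'}^{t-1}\rangle = \big\|\sum_i\theta_i^{t-1}\big\|^2 - D^{t-1} = -D^{t-1}$, which collapses all cross terms onto $D^{t-1}$. After collecting the two contributions, every dependence on the actual vectors cancels and one is left with $\EE[D^t \mid \{\theta_i^{t-1}\}] = q\,D^{t-1}$, where $q$ is determined purely by the law of the random split.

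The remaining work is to evaluate that constant: to compute $\EE[1/M_{(i)}]$ and the same-group inverse-size weight for the splitting distribution and verify their combination is exactly $\frac{r-1}{N}+\frac{r}{N^2}$. This combinatorial step, which must correctly account for the coupling between which group a peer lands in and that group's random size, is the one I expect to be the main obstacle; it is also where the precise rate, rather than merely its leading order, is pinned down. Once $q$ is known to be a deterministic constant independent of the current vectors, the conclusion is immediate: the splits across rounds are independent of one another and of the data, so total expectation and the tower property give $\EE[D^t] = q\,\EE[D^{t-1}]$ for every $t$, hence $\EE[D^T] = q^T D^0$ by induction. Dividing through by $N$ recovers \eqref{eq:determ_quality_of_avg}.
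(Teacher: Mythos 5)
Your plan follows the paper's own proof almost line for line: mean invariance, the one-step decomposition of the group-averaged dispersion $D^t = \sum_g \frac{1}{M_g}\bigl\|\sum_{i\in G_g}\theta_i^{t-1}\bigr\|^2$ into diagonal and cross terms, expectation over the random split via exchangeability, the centering identity $\sum_{i\neq i'}\langle\theta_i^{t-1},\theta_{i'}^{t-1}\rangle = -D^{t-1}$ to collapse the cross terms, and finally the tower property plus induction over rounds. The structure is sound, but you stop exactly at the step that produces the theorem's specific constant, and that step is where the whole content of \eqref{eq:determ_quality_of_avg} lives, so the proof is genuinely incomplete as written. To finish it along the paper's lines: with fixed group sizes $M_1,\ldots,M_r$, one has $\PP\{i\in G_g\} = \frac{M_g}{N}$, so the diagonal weight is $\EE\bigl[\frac{1}{M_{(i)}}\bigr] = \sum_{g=1}^r \frac{1}{M_g}\cdot\frac{M_g}{N} = \frac{r}{N}$; the paper then takes $\PP\{i,i'\in G_g\} = \frac{M_g(M_g-1)}{N^2}$, so the same-group pair weight is $\sum_{g=1}^r \frac{1}{M_g}\cdot\frac{M_g(M_g-1)}{N^2} = \frac{N-r}{N^2}$, and the contraction factor becomes $q = \frac{r}{N} - \frac{N-r}{N^2} = \frac{r-1}{N} + \frac{r}{N^2}$, exactly the claimed rate.

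There is a further wrinkle that vindicates your caution about "the coupling between which group a peer lands in and that group's random size." Under the splitting model the paper's proof actually describes (a uniformly random permutation cut into contiguous blocks of sizes $M_1,\ldots,M_r$, i.e., sampling without replacement), the exact pair probability is $\frac{M_g(M_g-1)}{N(N-1)}$, not $\frac{M_g(M_g-1)}{N^2}$. Plugging that into your program gives $q = \frac{r}{N} - \frac{N-r}{N(N-1)} = \frac{r-1}{N-1}$, a slightly smaller (stronger) contraction factor that does not coincide with the stated equality; the constant $\frac{r-1}{N}+\frac{r}{N^2}$ is recovered only with the with-replacement approximation of the pair probability that the paper uses. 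So the step you deferred is not routine bookkeeping: it is the place where the precise form of the theorem is decided, and completing your argument rigorously would in fact lead you to either a corrected constant or to the need to adopt the paper's approximate pair probability explicitly.
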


\begin{algorithm}[h]
   \caption{Moshpit SGD}
   \label{alg:moshpit_local_sgd}
\begin{algorithmic}[1]
   \STATE {\bfseries Input:} starting point $\theta^0$, learning rate $\gamma > 0$, communication period $\tau \ge 1$
   \FOR{$k = 0, 1, \ldots$}
   \FOR{each peer $i\in P_{k+1}$ in parallel}
   \STATE Compute the stochastic gradient $g_i^k$ at the current point $\theta_i^k$
   \IF{$k+1 \mod \tau = 0$}
   \STATE $\theta_i^{k+1} = \text{Moshpit All-Reduce}_{j\in P_{k+1}}(\theta_j^k - \gamma g_j^k)$ for $i$-th peer (Algorithm~\ref{alg:moshpit})
   \ELSE
   \STATE $\theta_i^{k+1} = \theta_i^k - \gamma g_i^k$
   \ENDIF
   \ENDFOR
   \ENDFOR
\end{algorithmic}
\end{algorithm}\setlength{\textfloatsep}{12pt}

In particular, this result implies that even if workers are randomly split into pairs at each iteration, the simplified version of Moshpit Averaging makes the average distortion (the left-hand side of Equation~\ref{eq:determ_quality_of_avg}) less than $\varepsilon$ in expectation after $\cO\left(\log(\nicefrac{1}{\varepsilon})\right)$ iterations. That is, this algorithm finds $\varepsilon$-accurate average on each node with the rate that \textit{does not} depend on the spectral properties of the communication graph like gossip and its variants (see Section~\ref{sect:related_decentralized_training} and Appendix~\ref{sect:post_related_gossip}). Since Moshpit Averaging prevents two peers from participating in the same groups during successive iterations, the actual algorithm should find $\varepsilon$-accurate averages on participating peers even faster than Equation~\ref{eq:determ_quality_of_avg} predicts. Moreover, in Appendix~\ref{sec:proof_quality_of_avg_deterministic_vectors} we explain how this result can be generalized to the case when $\{M_i^k\}_{i=1}^N$ and $r$ depends on $k$ or even is random. In Appendix~\ref{sec:mix_rand_proof}, we also provide the guarantees measuring how fast Algorithm~\ref{alg:moshpit} reduces the variance when averaging random vectors.

\vspace{-4pt}
\subsubsection{Moshpit SGD}\label{sect:optim_theory}
We consider a classical distributed optimization problem
\vspace{-6pt}
\begin{equation}
    \min\limits_{\theta\in\R^n}\left\{f(\theta) = \frac{1}{N}\sum\limits_{i=1}^N f_i(\theta)\right\}, \label{eq:main_problem}
\end{equation}
\vspace{-6pt}
where $N$ is the number of workers and worker $i$ has access only to the function $f_i$.

We propose a new algorithm called Moshpit SGD to solve this problem (see Algorithm~\ref{alg:moshpit_local_sgd}). In this algorithm, workers perform independent local SGD steps and periodically synchronize their parameters $\theta_i^k$ with other peers using Moshpit All-Reduce. Moreover, we define the indices of participating nodes at iteration $k$ as $P_{k+1}$ ($P_0 = \{1,\ldots,N\}$) allowing peers to vanish.

First of all, we list the key assumptions that we use in the convergence analysis of Moshpit SGD.
\begin{assumption}[Bounded variance]\label{as:bounded_var}
    We assume that for all $k\ge 0$ and $i=1,\ldots, N$ stochastic gradients $g_i^k$ satisfy $\EE\left[g_i^k\mid \theta_i^k\right] = \nabla f_i(\theta_i^k)$ and
    \begin{eqnarray}
        \EE\left[\|g_i^k - \nabla f_i(\theta_i^k)\|^2\mid \theta_i^k\right] &\le& \sigma^2.\label{eq:bounded_variance}
    \end{eqnarray}
\end{assumption}\vspace{-6px}
This assumption is classical in the stochastic optimization literature \cite{nemirovski2009robust,ghadimi2013stochastic}. We notice that our analysis can be generalized to the settings when the stochastic gradients satisfy less restrictive assumptions such as expected smoothness \cite{gower2019sgd} or have more sophisticated structure similar to \cite{karimireddy2020scaffold} using the theoretical framework from \cite{gorbunov2020local}.

The following assumption controls the averaging properties and the effect of the peers' vanishing.
\begin{assumption}[Averaging quality \& peers' vanishing]\label{as:averaging_quality}
    We assume that the vanishing of peers does not change the global average of the iterates of Moshpit SGD too much, i.e., $P_{k+1}\subseteq P_{k}$ and $|P_k| \ge N_{\min}$ for all $k\ge 0$, $|P_{a\tau}| \le 2|P_{a(\tau+1)}|$ for all non-negative integers $a\ge 0$, and there exist such $\widetilde{\theta}\in \R^n$ and a sequence of non-negative numbers $\{\Delta_{pv}^k\}_{k\ge 0}$ that $\forall k \ge 0$
    \begin{align}
        \EE\left[\langle\theta^{k+1} - \widehat{\theta}^{k+1}, \theta^{k+1}+\widehat{\theta}^{k+1} - 2\widetilde\theta\rangle\right] \!\le\! \Delta_{pv}^k\label{eq:stationary_avg_almost}&,f\text{ convex;}\\
        \EE\!\left[\langle\nabla f(\theta^k), \theta^{k+1}-\widehat{\theta}^{k+1}\rangle + L\|\widehat{\theta}^{k+1} - \theta^{k+1}\|^2\right] \!\le\! \Delta_{pv}^k\label{eq:stationary_avg_almost_2}&,f\text{ non-convex, $L$-smooth, (Def.~\ref{def:L_smoothness})}
    \end{align}
    where $N_k = |P_k|$, $\theta^{k+1} = \frac{1}{N_{k+1}}\sum_{i\in P_{k+1}}\theta_i^{k+1}$, and $\widehat \theta^{k+1} = \frac{1}{N_{k}}\sum_{i\in P_{k}}(\theta_i^{k}-\gamma g_i^k)$ for $k\ge 0$. 
    
    Moreover, we assume that for some $\delta_{aq} \ge 0$ and for all non-negative integers $a\ge 0$,
    \begin{eqnarray}
        \EE\left[\frac{1}{N_{a\tau}}\sum\limits_{i\in P_{a\tau}}\|\theta_i^{a\tau} - \theta^{a\tau}\|^2\right] &\le& \gamma^2\delta_{aq}^2.\label{eq:quality_of_avg}
    \end{eqnarray}
\end{assumption}
If $P_k = P_{k+1} = \{1,\ldots,N\}$ for all $k\ge 0$, i.e., peers do not vanish, then $\theta^{k} = \widehat{\theta}^{k}$ and properties (\ref{eq:stationary_avg_almost}, \ref{eq:stationary_avg_almost_2}) hold with $\Delta_{pv}^k \equiv 0$ for all $k\ge 0$. Moreover, according to the mixing properties of Moshpit Averaging established in Theorem~\ref{thm:quality_of_avg_deterministic_vectors}, inequality \ref{eq:quality_of_avg} holds after $\cO\left(\log\left(\nicefrac{1}{\gamma^2\delta_{aq}^2}\right)\right)$ iterations of Algorithm~\ref{alg:moshpit}. Therefore, the assumption above is natural and well-motivated.

Under these assumptions, we derive the convergence rates both for convex and non-convex problems. The full statements and complete proofs are deferred to Appendix~\ref{sect:missing_proofs_local_sgd}.
\begin{theorem}[Convex case]\label{thm:cvx_convergence}
    Let $f_1 = \ldots = f_N = f$, function $f$ be $\mu$-strongly convex (Def.~\ref{def:str_cvx}) and $L$-smooth (see Def.~\ref{def:L_smoothness}), and Assumptions~\ref{as:bounded_var}~and~\ref{as:averaging_quality} hold with $\Delta_{pv}^k = \delta_{pv,1}\gamma\mu\EE[\|\theta^k-\theta^*\|^2] + \gamma^2\delta_{pv,2}^2$ and $\widetilde{\theta} = \theta^*$, where $\theta^* \in \argmin_{\theta\in\R^n} f(\theta)$ and $\delta_{pv,1}\in [0,1)$, $\delta_{pv,2}\ge 0$. Then there exists a choice of $\gamma$ such that $\EE\left[f(\overline{\theta}^K) - f(\theta^*)\right]\le \varepsilon$ after $K$ iterations of Moshpit SGD, where $K$ equals
    \vspace{-2pt}
    \begin{align}
        \widetilde{\cO}\!\left(\!\frac{L}{(1\!-\!\delta_{pv,1})\mu}\! +\! \frac{\delta_{pv,2}^2\!+\!\nicefrac{\sigma^2}{N_{\min}}}{(1-\delta_{pv,1})\mu\varepsilon}\! +\! \sqrt{\frac{L((\tau\!-\!1)\sigma^2\!+\!\delta_{aq}^2)}{(1\!-\!\delta_{pv,1})^2\mu^2\varepsilon}}\!\right)&,\ \mu>0;\\
        \cO\!\left(\!\frac{LR_0^2}{\varepsilon}\!+\! \frac{R_0^2(\delta_{pv,2}^2\!+\!\nicefrac{\sigma^2}{N_{\min}})}{\varepsilon^2}\!+\! \frac{R_0^2\!\sqrt{L\!(\!(\tau\!-\!1)\!\sigma^2\!+\!\delta_{aq}^2)}}{\varepsilon^{\nicefrac{3}{2}}}\!\right)&,\ \mu=0,
    \end{align}
    where $\overline{\theta}^K = \frac{1}{W_K}\sum\limits_{k=0}^K\frac{1}{N_k}\sum\limits_{i\in P_k} w_k \theta_i^k$, $w_k = (1-\gamma\mu)^{-(k+1)}$, $W_K = \sum_{k=0}^Kw_k$, $R_0 = \|\theta^0 - \theta^*\|$ and $\widetilde{\cO}(\cdot)$ hides constant and $\log(\nicefrac{1}{\varepsilon})$ factors.
\end{theorem}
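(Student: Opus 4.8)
The plan is to track the squared distance of the \emph{average} iterate $\theta^k = \frac{1}{N_k}\sum_{i\in P_k}\theta_i^k$ to the optimum $\theta^*$ and to cast the resulting one-step inequality into the unified Local-SGD recursion of \cite{gorbunov2020local}, whose master lemma then yields the stated rate after an appropriate choice of $\gamma$. The first observation is that Assumption~\ref{as:averaging_quality} is tailored precisely for this purpose: with the elementary identity $\langle a-b,\,a+b-2c\rangle = \|a-c\|^2 - \|b-c\|^2$ applied to $a=\theta^{k+1}$, $b=\widehat\theta^{k+1}$, $c=\theta^*$, inequality \eqref{eq:stationary_avg_almost} reads $\EE[\|\theta^{k+1}-\theta^*\|^2] \le \EE[\|\widehat\theta^{k+1}-\theta^*\|^2] + \Delta_{pv}^k$. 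Thus the combined effect of imperfect averaging and of vanishing peers collapses into the single additive term $\Delta_{pv}^k$, and it suffices to analyze the \emph{virtual} gradient step $\widehat\theta^{k+1} = \theta^k - \gamma\bar g^k$ with $\bar g^k = \frac{1}{N_k}\sum_{i\in P_k} g_i^k$.

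Second, I would expand $\|\widehat\theta^{k+1}-\theta^*\|^2 = \|\theta^k-\theta^*\|^2 - 2\gamma\langle \bar g^k,\,\theta^k-\theta^*\rangle + \gamma^2\|\bar g^k\|^2$ and take conditional expectation. Unbiasedness and independence of the per-peer noise (Assumption~\ref{as:bounded_var}) give $\EE\|\bar g^k\|^2 = \|\bar\nabla^k\|^2 + \EE\|\bar g^k-\bar\nabla^k\|^2 \le \|\bar\nabla^k\|^2 + \nicefrac{\sigma^2}{N_{\min}}$, where $\bar\nabla^k = \frac{1}{N_k}\sum_{i\in P_k}\nabla f(\theta_i^k)$. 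The cross term is handled by writing $\langle\nabla f(\theta_i^k),\theta^k-\theta^*\rangle = \langle\nabla f(\theta_i^k),\theta_i^k-\theta^*\rangle + \langle\nabla f(\theta_i^k),\theta^k-\theta_i^k\rangle$, applying $\mu$-strong convexity to the first summand (extracting $f(\theta_i^k)-f(\theta^*)+\frac{\mu}{2}\|\theta_i^k-\theta^*\|^2$) and $L$-smoothness to bound both $\|\bar\nabla^k\|^2$ and the residual inner products by the functional gap plus the consensus error $V_k := \frac{1}{N_k}\sum_{i\in P_k}\|\theta_i^k-\theta^k\|^2$. Substituting the specialization $\Delta_{pv}^k = \delta_{pv,1}\gamma\mu\EE\|\theta^k-\theta^*\|^2 + \gamma^2\delta_{pv,2}^2$ yields a recursion of the form $\EE\|\theta^{k+1}-\theta^*\|^2 \le (1-(1-\delta_{pv,1})\gamma\mu)\EE\|\theta^k-\theta^*\|^2 - 2\gamma(1-\gamma L)\EE[f(\theta^k)-f(\theta^*)] + \gamma^2(\nicefrac{\sigma^2}{N_{\min}}+\delta_{pv,2}^2) + \cO(\gamma L)\EE V_k$; the $\delta_{pv,1}$ term merely rescales the contraction factor, which is the origin of the $(1-\delta_{pv,1})$ factors in the final bound.

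The main obstacle is bounding the accumulated consensus error $V_k$. Since $f_1=\ldots=f_N=f$, the only drivers of drift are stochastic noise and the residual disagreement left by averaging, so I would unroll the local steps back to the last synchronization index $a\tau$, writing $\theta_i^k-\theta^k = (\theta_i^{a\tau}-\theta^{a\tau}) - \gamma\sum_l(g_i^{a\tau+l}-\bar g^{a\tau+l})$. The initial term is controlled directly by \eqref{eq:quality_of_avg} (producing the $\delta_{aq}^2$ contribution), while the stochastic part accumulates variance $\cO(\gamma^2(\tau-1)\sigma^2)$ over the at most $\tau-1$ independent noise increments; a short smoothness argument reabsorbs the mean part of $g_i-\bar g$ into $V_k$, closing a self-bounding recursion that, for $\gamma$ small enough relative to $L$ and $\tau$, gives $\frac{1}{W_K}\sum_k w_k\EE V_k = \cO(\gamma^2((\tau-1)\sigma^2+\delta_{aq}^2))$. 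The bookkeeping must respect the structural conditions $P_{k+1}\subseteq P_k$, $|P_k|\ge N_{\min}$ and $|P_{a\tau}|\le 2|P_{a(\tau+1)}|$ from Assumption~\ref{as:averaging_quality}, so that the changing normalizations $\nicefrac{1}{N_k}$ only cost constant factors across a communication period.

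Finally, I would multiply the one-step recursion by $w_k=(1-\gamma\mu)^{-(k+1)}$, telescope, and use Jensen's inequality with convexity to pass from $\frac{1}{W_K}\sum_k w_k\EE[f(\theta^k)-f(\theta^*)]$ to $\EE[f(\overline\theta^K)-f(\theta^*)]$ for the weighted average $\overline\theta^K$. This is exactly the input required by the master lemma of \cite{gorbunov2020local} with effective strong-convexity parameter $(1-\delta_{pv,1})\mu$, additive noise $\gamma^2(\nicefrac{\sigma^2}{N_{\min}}+\delta_{pv,2}^2)$, and local-drift term $\gamma^2((\tau-1)\sigma^2+\delta_{aq}^2)$; optimizing $\gamma$ over the three resulting contributions and hiding $\log(\nicefrac{1}{\varepsilon})$ factors in $\widetilde{\cO}$ produces the claimed complexity. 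The case $\mu=0$ follows identically with uniform weights $w_k\equiv 1$, giving the second bound.
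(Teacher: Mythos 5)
Your overall architecture matches the paper's own proof: the polarization identity that turns Assumption~\ref{as:averaging_quality} into $\EE[\|\theta^{k+1}-\theta^*\|^2]\le\EE[\|\widehat\theta^{k+1}-\theta^*\|^2]+\Delta_{pv}^k$, the splitting of $\langle\nabla f(\theta_i^k),\theta^k-\theta^*\rangle$ via strong convexity plus smoothness, the variance decomposition yielding $\nicefrac{\sigma^2}{N_{\min}}$, the weighted telescoping with Jensen's inequality, and the final stepsize tuning through the master lemmas of \cite{gorbunov2020local} (Lemmas~\ref{lem:lemma_i_2_gorbunov} and~\ref{lem:lemma_i_3_gorbunov}) are exactly the steps of Lemma~\ref{lem:key_lemma_cvx} and Theorem~\ref{thm:cvx_convergence_supp}.

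The genuine gap is in your bound on the consensus error $V_k$. You unroll the local steps and close a self-bounding recursion, which, as you yourself note, needs $\gamma$ small ``relative to $L$ \emph{and} $\tau$'', i.e.\ $\gamma=\cO\left(\nicefrac{1}{L(\tau-1)}\right)$: handling the mean parts $\nabla f(\theta_i^t)-\nabla f(\theta_j^t)$ by generic smoothness expands the spread by $(1+\gamma L)$ per step, and only $\gamma L\tau=\cO(1)$ keeps the accumulated factor bounded. This stepsize cap is fatal to the claimed rate: in Lemma~\ref{lem:lemma_i_2_gorbunov} the exponentially decaying term forces $K=\widetilde{\Omega}\left(\nicefrac{1}{\gamma_0\mu(1-\delta_{pv,1})}\right)$, so with $\gamma_0=\Theta(\nicefrac{1}{L\tau})$ your leading term becomes $\widetilde{\cO}\left(\frac{L\tau}{(1-\delta_{pv,1})\mu}\right)$ instead of the claimed $\widetilde{\cO}\left(\frac{L}{(1-\delta_{pv,1})\mu}\right)$ (and similarly $\cO(\nicefrac{L\tau R_0^2}{\varepsilon})$ instead of $\cO(\nicefrac{LR_0^2}{\varepsilon})$ when $\mu=0$). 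The paper's Lemma~\ref{lem:V_k_lemma_cvx} avoids any $\tau$-dependent restriction by exploiting convexity precisely where you use plain smoothness: for $L$-smooth, $\mu$-strongly convex $f$ and $\gamma\le\nicefrac{1}{L}$, the map $x\mapsto x-\gamma\nabla f(x)$ is a contraction (Lemma~\ref{lem:gd_contraction}, borrowed from \cite{karimireddy2020scaffold}), so the pairwise distances $\|\theta_i^k-\theta_j^k\|^2$ never expand under local steps and only pick up $2\gamma^2\sigma^2$ of noise each; unrolling within a block and invoking \eqref{eq:quality_of_avg} then gives the uniform bound $\EE[V_k]\le 2\gamma^2\left(4\delta_{aq}^2+(\tau-1)\sigma^2\right)$ under $\gamma\le\nicefrac{1}{4L}$ alone. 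This is exactly the distinction between the paper's convex and non-convex theorems: the non-convex bound, whose drift is controlled the way you propose, does carry an extra $\tau$ term in its bracket, while the convex one does not. Replace your drift argument with the contraction argument and the rest of your proof goes through as written.
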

That is, if $\delta_{pv,1} \le \nicefrac{1}{2}$, $N_{\min} = \Omega(N)$, $\delta_{pv,2}^2 = \cO(\nicefrac{\sigma^2}{N_{\min}})$, and $\delta_{aq}^2 = \cO((\tau-1)\sigma^2)$, then Moshpit SGD has the same iteration complexity as Local-SGD in the homogeneous case \cite{khaled2020tighter,woodworth2020local}. However, the averaging steps of Moshpit SGD are much faster than those of the parameter-server architecture when the number of peers is large. Also, unlike the state-of-the-art convergence guarantees for Decentralized Local-SGD \cite{koloskova2020unified}, our bounds do not depend on the spectral properties of the communication graph (see Appendix~\ref{sect:post_related_gossip} for the details).

\begin{theorem}[Non-convex case]\label{thm:non_cvx_convergence}
    Let $f_1 = \ldots = f_N = f$, function $f$ be $L$-smooth and bounded from below by $f_*$, and Assumptions~\ref{as:bounded_var}~and~\ref{as:averaging_quality} hold with $\Delta_{pv}^k = \delta_{pv,1}\gamma\EE[\|\nabla f(\theta^k)\|^2] + L\gamma^2\delta_{pv,2}^2$, $\delta_{pv,1}\in [0,\nicefrac{1}{2})$, $\delta_{pv,2}\ge 0$. Then there exists such choice of $\gamma$ that $\EE\left[\|\nabla f(\theta_{\text{rand}}^K)\|^2\right]\le \varepsilon^2$ after $K$ iterations of Moshpit SGD, where $K$ equals
    {\begin{eqnarray*}
        \cO\Bigg(\tfrac{L\Delta_0}{(\!1\!-\!2\delta_{pv,1}\!)^2\varepsilon^2}\!\Bigg[\!1\! +\!\tau\sqrt{1\!-\!2\delta_{pv,1}}\! +\! \tfrac{\delta_{pv,2}^2 + \nicefrac{\sigma^2}{N_{\min}}}{\varepsilon^2}\!+\! \tfrac{\sqrt{(1-2\delta_{pv,1})(\delta_{aq}^2+(\tau-1)\sigma^2)}}{\varepsilon}\!\Bigg]\!\Bigg),
    \end{eqnarray*}}
    $\Delta_0 = f(\theta^0) - f(\theta^*)$ and $\theta_{\text{rand}}^K$ is chosen uniformly from $\{\theta^0,\theta^1,\ldots,\theta^{K-1}\}$ defined in As.~\ref{as:averaging_quality}.
\end{theorem}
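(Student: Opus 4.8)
The plan is to run the standard descent-lemma analysis on the \emph{virtual averaged sequence} $\{\theta^k\}$ defined in Assumption~\ref{as:averaging_quality}, viewing one outer iteration of Moshpit SGD as an inexact averaged gradient step and routing all of the inexactness (both the imperfect All-Reduce and the vanishing of peers) through the increment $\theta^{k+1}-\widehat\theta^{k+1}$. First I would apply $L$-smoothness between $\theta^k$ and $\theta^{k+1}$, then split the step as $\theta^{k+1}-\theta^k=(\theta^{k+1}-\widehat\theta^{k+1})+(\widehat\theta^{k+1}-\theta^k)$, where the honest part is the clean step $\widehat\theta^{k+1}-\theta^k=-\gamma\overline g^k$ with $\overline g^k=\frac1{N_k}\sum_{i\in P_k}g_i^k$. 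Bounding $\|\theta^{k+1}-\theta^k\|^2\le 2\|\theta^{k+1}-\widehat\theta^{k+1}\|^2+2\|\widehat\theta^{k+1}-\theta^k\|^2$ and taking expectations, the combination $\langle\nabla f(\theta^k),\theta^{k+1}-\widehat\theta^{k+1}\rangle+L\|\widehat\theta^{k+1}-\theta^{k+1}\|^2$ is \emph{exactly} the left-hand side of~(\ref{eq:stationary_avg_almost_2}); this is why that assumption carries the coefficient $L$ and not $L/2$, and it collapses every peer-vanishing contribution into $\Delta_{pv}^k$.

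It remains to expand the honest step. By unbiasedness $\EE[\overline g^k\mid\cdot]=\frac1{N_k}\sum_{i\in P_k}\nabla f(\theta_i^k)=:\overline h^k$, and the polarization identity gives
\[
-\gamma\EE\langle\nabla f(\theta^k),\overline h^k\rangle=-\tfrac{\gamma}{2}\EE\|\nabla f(\theta^k)\|^2-\tfrac{\gamma}{2}\EE\|\overline h^k\|^2+\tfrac{\gamma}{2}\EE\|\nabla f(\theta^k)-\overline h^k\|^2.
\]
Jensen's inequality and $L$-smoothness bound the last term by $\tfrac{\gamma L^2}{2}\EE[D_k]$ with $D_k:=\frac1{N_k}\sum_{i\in P_k}\|\theta^k-\theta_i^k\|^2$, the \emph{consensus drift}, while independence of the noise across peers gives $\EE\|\overline g^k\|^2\le\|\overline h^k\|^2+\sigma^2/N_{\min}$, so that for $\gamma$ below the descent cap fixed in the final step the $\|\overline h^k\|^2$ terms are absorbed. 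Substituting $\Delta_{pv}^k=\delta_{pv,1}\gamma\EE\|\nabla f(\theta^k)\|^2+L\gamma^2\delta_{pv,2}^2$ produces the one-step recursion
\[
\EE[f(\theta^{k+1})]\le\EE[f(\theta^k)]-\tfrac{\gamma}{2}(1-2\delta_{pv,1})\EE\|\nabla f(\theta^k)\|^2+\tfrac{\gamma L^2}{2}\EE[D_k]+L\gamma^2\big(\delta_{pv,2}^2+\tfrac{\sigma^2}{N_{\min}}\big).
\]
The hypothesis $\delta_{pv,1}<\nicefrac{1}{2}$ is precisely what keeps the coefficient of $\EE\|\nabla f(\theta^k)\|^2$ positive and is the source of the $(1-2\delta_{pv,1})$ factors in the final rate.

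The hard part is controlling the accumulated drift $\sum_k\EE[D_k]$. I would anchor each block of $\tau$ iterations at the preceding communication round, where~(\ref{eq:quality_of_avg}) yields $\EE[D_{a\tau}]\le\gamma^2\delta_{aq}^2$, and then track how the per-peer deviations grow over the intervening local steps $\theta_i^{k+1}=\theta_i^k-\gamma g_i^k$. Writing $\theta_i^k-\theta^k$ as $-\gamma$ times the sum of centered updates $g_i^l-\overline g^l$ accumulated since the last synchronization, the homogeneity $f_1=\dots=f_N=f$ eliminates the gradient-dissimilarity contribution, leaving only the stochastic noise accumulated over at most $\tau-1$ steps together with a self-referential smoothness term of order $\gamma^2 L^2\sum_l\EE[D_l]$. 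Closing this recursion — which forces a stepsize restriction of the type $\gamma\lesssim 1/(L\tau)$ and yields $\EE[D_k]\lesssim\gamma^2(\delta_{aq}^2+(\tau-1)\sigma^2)$ — is the main technical obstacle, and it is where I would lean on the consensus-distance machinery of the unified local-SGD framework~\cite{gorbunov2020local}.

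Finally I would telescope the one-step recursion over $k=0,\dots,K-1$, replace the telescoped difference by $\Delta_0=f(\theta^0)-f_*$ via $f\ge f_*$, and divide by $\tfrac{\gamma}{2}(1-2\delta_{pv,1})K$. Since $\theta_{\text{rand}}^K$ is drawn uniformly, $\frac1K\sum_k\EE\|\nabla f(\theta^k)\|^2=\EE\|\nabla f(\theta_{\text{rand}}^K)\|^2$, so the bound reads, schematically,
\[
\EE\|\nabla f(\theta_{\text{rand}}^K)\|^2\lesssim\frac{\Delta_0}{\gamma(1-2\delta_{pv,1})K}+\frac{\gamma L(\delta_{pv,2}^2+\sigma^2/N_{\min})}{1-2\delta_{pv,1}}+\frac{\gamma^2L^2(\delta_{aq}^2+(\tau-1)\sigma^2)}{1-2\delta_{pv,1}}.
\]
Choosing $\gamma$ as the minimum of a descent cap of order $(1-2\delta_{pv,1})/L$, the consensus cap of order $1/(L\tau)$, and the two stepsizes balancing the leading term against the variance and drift terms — the standard tuning lemma for three-term nonconvex SGD bounds — and inverting for accuracy $\varepsilon^2$ yields the four summands in the stated bracket: the $1$ and the $\tau\sqrt{1-2\delta_{pv,1}}$ from the two caps, the $(\delta_{pv,2}^2+\sigma^2/N_{\min})/\varepsilon^2$ from the variance balance, and the $\sqrt{(1-2\delta_{pv,1})(\delta_{aq}^2+(\tau-1)\sigma^2)}/\varepsilon$ from the drift balance.
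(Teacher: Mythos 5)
Your proposal follows essentially the same route as the paper's proof: the descent lemma applied to the averaged iterate with the step split through $\widehat\theta^{k+1}$ so that the quantity $\langle\nabla f(\theta^k),\theta^{k+1}-\widehat\theta^{k+1}\rangle+L\|\widehat\theta^{k+1}-\theta^{k+1}\|^2$ is bounded directly by \eqref{eq:stationary_avg_almost_2}, a block-anchored consensus-drift recursion closed under $\gamma\lesssim \nicefrac{1}{(L\tau)}$ using the machinery of~\cite{gorbunov2020local} (Lemmas~\ref{lem:lemma_i_1_gorbunov} and~\ref{lem:lemma_i_3_gorbunov}), and the same telescoping, uniform-sampling identity, and stepsize tuning at the end. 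The only minor technical difference is that you center the accumulated local updates ($g_i^l-\overline g^l$), so your drift bound carries no $\sum_k\EE[\|\nabla f(\theta^k)\|^2]$ term, whereas the paper's Lemma~\ref{lem:V_k_lemma_non_cvx} bounds the uncentered sums, retains that term, and absorbs it in the final theorem via the cap $\gamma\le\nicefrac{\sqrt{1-2\delta_{pv,1}}}{(8\sqrt{e}L(\tau-1))}$ --- both yield the identical complexity.
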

Again, if $\delta_{pv,1} \le \nicefrac{1}{3}$, $N_{\min} = \Omega(N)$, $\delta_{pv,2}^2 = \cO(\nicefrac{\sigma^2}{N_{\min}})$, and $\delta_{aq}^2 = \cO((\tau-1)\sigma^2)$, then the above theorem recovers the state-of-the-art results in the non-convex case for Local-SGD \cite{li2019communication,koloskova2020unified}. 

\subsection{Implementation details}
\label{sect:method_implementation_details}

Training on heterogeneous unreliable hardware also poses a number of engineering challenges. The most obvious one is that the system must be able to recover from node failures. To address this challenge, we use a fully decentralized infrastructure where all information is replicated in a Distributed Hash Table; see Appendix~\ref{sect:related_dht} for details. When a new worker joins midway through training, it can download the latest model parameters and metadata from any other peer (see \autoref{sect:load_state_from_peers}). Another challenge arises when devices in a group have uneven network bandwidth. In that case, we dynamically adjust the communication load of each peer to avoid being bottlenecked. More information on this procedure can be found in \autoref{sect:load_balancing}.

\vspace{-10pt}
\section{Experiments}\label{sect:experiments}
\vspace{-2pt}
In this section, we
conduct empirical evaluation of the proposed averaging protocol and its corresponding optimization algorithm. 
First, we check the theoretical properties of Moshpit All-Reduce in a controlled setup (Section~\ref{sect:experiments_averaging}). Then, we compare Moshpit SGD with other distributed methods on practical tasks of image classification and masked language model pretraining (Sections~\ref{sect:experiments_vision} and~\ref{sect:experiments_nlp}).

\vspace{-4pt}
\subsection{Decentralized averaging}
\label{sect:experiments_averaging}
In this series of experiments, we aim to empirically verify the convergence and fault tolerance properties proven in Section~\ref{sect:method_convergence}.
To measure this in a controlled setting, we create peers with parameters that are scalar values drawn from the standard Gaussian distribution. We study the convergence of different distributed methods with respect to the number of workers $N$ and their individual failure rate for a single iteration of averaging $p$  (failed peers return in the next round). 

We compare Moshpit Averaging with the following algorithms from prior work: All-Reduce (with restarts in case of node failures), Gossip, PushSum (equivalent to the method described in~\cite{sgpush}). Also, we provide the results of averaging in random groups as a simpler version of our approach. However, the implementation of group averaging maintains approximately the same group size across all iterations: this property might be hard to achieve in a decentralized setting, and as a result, the estimate of this method's performance should be considered highly optimistic.

We report the average squared difference between the worker parameters and the actual average of all values; the results are averaged across 100 restarts from different random initializations.
We compare the convergence for 512--1024 peers and consider failure probabilities ranging from 0 to 0.01. For Moshpit Averaging and random group averaging, we use groups of size 32, which corresponds to $M=32$ and $d=2$ for Algorithm~\ref{alg:moshpit}.

\vspace{-4pt}
\begin{figure}[h]
\noindent
\centering
\includegraphics[width=\textwidth]{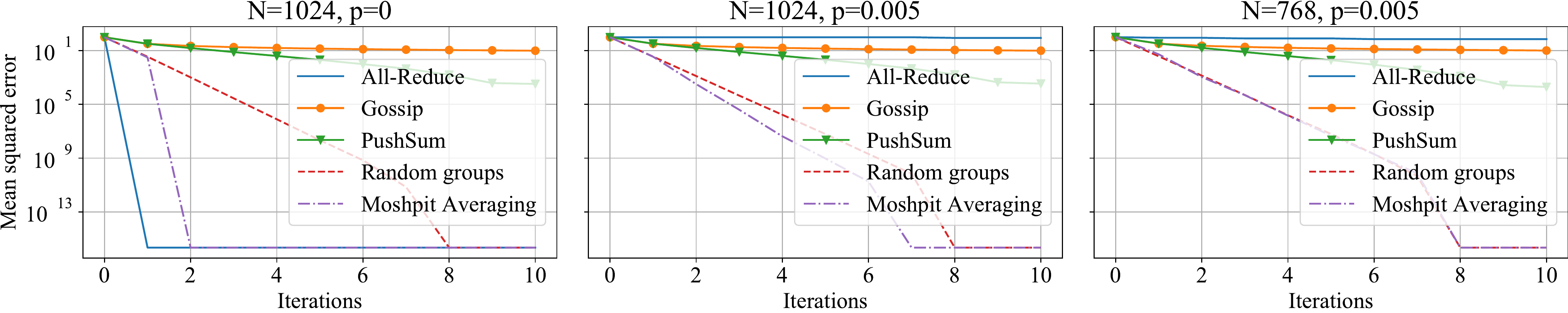}
\caption{Convergence of averaging algorithms in different configurations.}
\label{fig:averaging}
\end{figure}
\vspace{-8pt}

Figure~\ref{fig:averaging} displays the results of experiments for several combinations of $N$ and $p$; the complete results with additional grid configurations are available in Appendix~\ref{sect:extra_averaging}. We make several key observations: 
\begin{enumerate}[leftmargin=*]
    \vspace{-2pt}\item When the failure rate of each peer is zero, standard All-Reduce predictably computes the average faster than all other methods. However, as soon as $p$ reaches a value of at least 0.005, the number of retries needed for the success becomes prohibitively high.
    \vspace{-2pt}\item Previous decentralized averaging methods, such as Gossip or PushSum, require significantly more iterations for convergence to the global average than Moshpit All-Reduce, likely due to the structure of their communication graphs.
    \vspace{-2pt}\item As discussed in Section~\ref{sect:method_algorithm}, when the total number of peers is equal to the grid capacity and there are no failures, Moshpit All-Reduce matches the result of regular All-Reduce with the number of steps equal to the number of grid dimensions (2 in this case).
    \vspace{-2pt}\item Averaging in random groups can perform comparably to Moshpit Averaging when the number of peers is less than half of the grid capacity. The reason for this behavior is that when the workers do not fully occupy the grid, the group sizes are no longer guaranteed to be equal across groups and across iterations. In the worst case, there can be groups of only one peer for certain grid coordinates, which may significantly affect the convergence. However, as the grid utilization grows, Moshpit Averaging starts to outperform random group averaging. Moreover, even if we use 512 peers, arranging them in a proper 8x8x8 grid leads to faster convergence.
\end{enumerate}

\pagebreak[4]

\subsection{ImageNet training}\label{sect:experiments_vision}
Here, we evaluate the performance of Moshpit SGD in distributed training. More specifically, we train ResNet-50~\cite{resnet} on the ILSVRC~\cite{imagenet_cvpr09} dataset, following the training protocol of~\cite{goyal2017accurate}. Trainers use SGD with Nesterov momentum with a batch size of 256 and 32-bit precision regardless of the GPU type\footnote{For GPUs that cannot fit this into memory, we accumulate gradients over 2 batches of 128 examples.}. We evaluate the following training strategies:
\begin{itemize}[leftmargin=*]\vspace{-2px}
    \item \textbf{All-Reduce SGD (AR-SGD)} --- traditional distributed training with all-reduce gradient averaging;
    \item \textbf{Asynchronous Decentralized Parallel SGD (AD-PSGD)} --- parallel SGD that runs gossip communication in a cycle: each worker averages parameters with 2 neighbors~\cite{ad_psgd}. Communication rounds are overlapped with computation;
    \item \textbf{Stochastic Gradient Push (SGP)} --- a more advanced algorithm with an exponential communication graph and push-based communication~\cite{sgpush};
    \item \textbf{Moshpit SGD} --- similar to \textbf{SGP}, but with 1 round of Moshpit Averaging instead of PushSum.
\end{itemize}\vspace{-2px}

We report top-1 validation accuracy as a function of training time in two experimental setups:
\begin{itemize}[leftmargin=*]\vspace{-4px}
    \item \textbf{Homogeneous}: 16 servers with a single Tesla V100-PCIe GPU, 6 CPU cores, and 64GB RAM.
    \item \textbf{Heterogeneous}: a total of 81 GPUs (V100, 1080Ti, and P40) across 64 servers and workstations.\footnote{We provide a detailed configuration in Appendix~\ref{sect:detailed_setup}.}
\end{itemize}\vspace{-4px}

All servers and workstations communicate over the network with 1Gb/s Ethernet (non-dedicated symmetric bandwidth). The machines are located in two data centers and one office within 300 km of one another. The communication latency is 1--6ms depending on the location. To simulate shared usage, at the beginning of each communication round we inject additional latency sampled from the exponential distribution~\cite{sukhov2016generating} with the mean of 100ms.

For Moshpit SGD, we use a two-dimensional ``grid'' with 4 and 8 groups for homogeneous and heterogeneous setups respectively. For AD-PSGD, we attempt to compensate for slow convergence by training for 60 more epochs without changing the learning rate schedule. Finally, we only report AR-SGD in the first setup, as it is unsuitable for heterogeneous hardware.%

The results in Figure~\ref{fig:all} (Left) demonstrate that the two most efficient strategies for our setting are Moshpit SGD and SGP. In the \textbf{homogeneous} setup, Moshpit is only slightly more efficient than SGP, likely due to higher efficiency of all-reduce. This advantage increases to over 30\% for the \textbf{heterogeneous} setup with 64 servers. In turn, AR-SGD demonstrates the best performance per iteration, but its training time is by far the longest due to network latency ($1.5{\times}$ of Moshpit SGD). Finally, AD-PSGD predictably shows the best throughput (time per epoch), but achieves lower accuracy even after training for 150 epochs. We report results for smaller setups in Appendix~\ref{sect:extra_classification}. %

\subsection{Masked Language Model training}
\label{sect:experiments_nlp}
Finally, we evaluate Moshpit All-Reduce training performance in the wild with preemptible cloud instances. For this experiment, we perform one of the most resource-demanding tasks in modern deep learning --- unsupervised pretraining of Transformers~\cite{bert,roberta,radford2019language,gpt3}.
We opt for the ALBERT model~\cite{albert} to make better use of communication-constrained devices. This model has fewer trainable parameters due to layer-wise weight sharing.

\begin{figure*}[t]
    \noindent
    \centering
    \vspace{-10pt}
    \includegraphics[width=\textwidth]{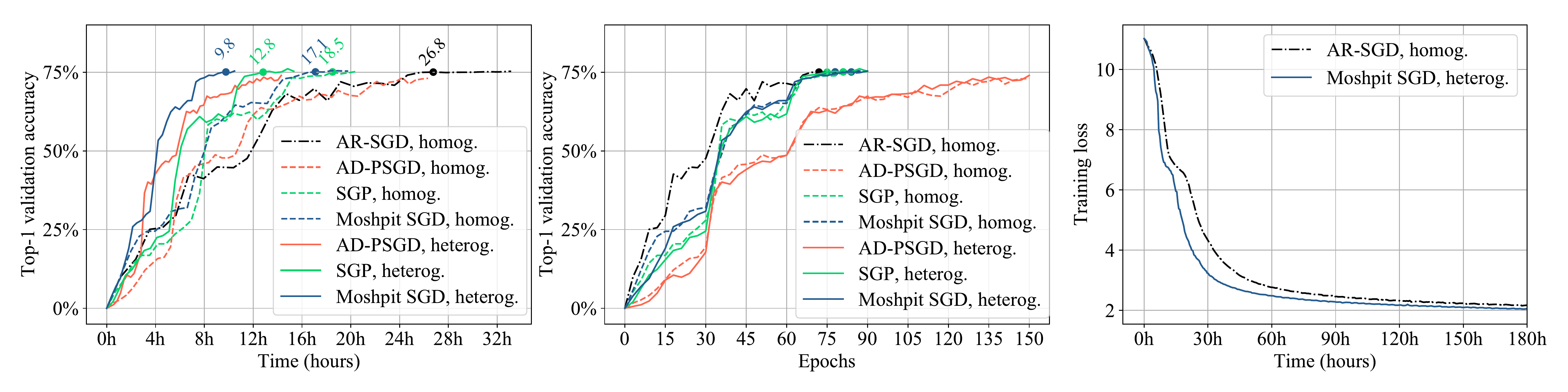}
    \vspace{-16pt}
    \caption{\textbf{(Left, Middle)} ResNet-50 top-1 validation accuracy for ImageNet as a function of training time (left) and epochs (middle). \textbf{(Right)} Full training objective (MLM + SOP) of ALBERT-large on BookCorpus as a function of training time.}
    \label{fig:all}\vspace{-6pt}
\end{figure*}

Specifically, we train ALBERT-large (18M parameters) on the BookCorpus~\cite{bookcorpus} dataset, following the training setup from the original paper. We minimize the masked language modeling loss (MLM) along with the sentence order prediction loss (SOP) using the LAMB optimizer~\cite{You2020Large} with a global batch size of 4096 and sequence length 512. We measure convergence in terms of full training loss~\cite{lin2020multinode,fedus2021switch}. Similarly to Section~\ref{sect:experiments_vision}, we use two training setups:
\vspace{-4pt}\begin{itemize}[leftmargin=*]
    \item \textbf{Homogeneous:} a single cloud instance with $8$ Tesla V100-PCIe GPUs and 56 vCPUs;
    \item \textbf{Heterogeneous:} a total of 66 preemptible GPUs, 32 of which are cloud T4, and the remaining 34 are various devices rented on a public marketplace.
\end{itemize}\vspace{-4pt}

Despite the fact that the latter setup has almost $3{\times}$ more raw compute\footnote{Based on official performance benchmarks~\cite{nvidia_perf}.}, its hourly rent costs less than the homogeneous setup due to relying on preemptible instances\footnote{Please refer to Appendix~\ref{sect:detailed_setup} for full experimental setups.}. This instance type is much cheaper than regular cloud instances, but it can be interrupted at any time. As a side-effect, the participants in \textbf{heterogeneous} setup are also spread across 3 continents with uneven network bandwidth, ranging from 100Mb/s to 1500Mb/s per worker. These limitations make it impractical to deploy conventional all-reduce protocols. By contrast, the fully decentralized nature of Moshpit SGD allows it to operate on unreliable nodes.

In this setup, the participants accumulate gradients over multiple local batches and use DHT to track the global batch size. Once the swarm collectively accumulates gradients over 4096 training samples, it runs 2 rounds of Moshpit All-Reduce with $M{=}8$ and $d{=}2$. Unfortunately, training with simple parameter averaging does not converge, likely due to diverging LAMB statistics. To mitigate this issue, workers recover ``pseudo-gradients''~\cite{reddi2021adaptive,chen2020toward} after averaging to update the optimizer statistics.

Figure~\ref{fig:all} (right) demonstrates that Moshpit SGD with a fully preemptible fleet of machines trains 1.5 times faster than the traditional data-parallel setup.
The final loss achieved by two training strategies is the same within the margin of error.
A closer investigation reveals that this speedup is entirely explained by the reduced iteration time.
An interesting observation is that the iteration time of Moshpit SGD varies between {10--22} seconds, while AR-SGD consistently spends {25}s per step. This can be explained by natural variation in the preemptible fleet size: there were 30--66 active participants depending on the resource availability.

\vspace{-6pt}
\section{Conclusion and future work}
\vspace{-4pt}
In this work, we propose Moshpit All-Reduce, a decentralized averaging protocol intended for distributed optimization in unstable and network-constrained environments. It has favorable theoretical properties when compared to gossip-based approaches and achieves considerable speedups in distributed training for image classification and masked language modeling.

Our approach was primarily designed for cloud-based training and federated learning, as well as for distributed training on unreliable instances; future work might explore additional settings, such as collaborative training of neural networks.
Another potential research direction is to study the interactions of Moshpit All-Reduce with other methods that improve communication efficiency of distributed optimization, such as gradient compression.
Finally, the idea of arranging All-Reduce nodes into groups can be improved to address specific issues that may arise in practice, such as the varying number of workers and their geographical distribution. 

\vspace{-6pt}
\section*{Acknowledgements}
\vspace{-4pt}
We would like to thank Anastasia Koloskova, Liudmila Prokhorenkova and Anton Osokin for helpful feedback and discussions. We are also grateful to the anonymous reviewers for their suggestions on improving the paper. Finally, we would like to thank Dmitry Afanasiev, Vladimir Aliev, Anand Jayarajan and Michael Solotky for their suggestions on the technical aspects of our study. 
This project was supported in
part by the Canada Foundation for Innovation JELF grant,
NSERC Discovery grant, AWS Machine Learning Research
Award, and Facebook Faculty Research Award. The paper was also partially supported by by a grant for research centers in the field of artificial intelligence, provided by the Analytical Center for the Government of the Russian Federation in accordance with the subsidy agreement (agreement identifier 000000D730321P5Q0002) and the agreement with the Moscow Institute of Physics and Technology dated November 1, 2021 No. 70-2021-00138. The computational resources for the experiments were provided by the Amazon Research Awards program and Yandex.

\bibliographystyle{unsrt}
\bibliography{bibliography}

\clearpage
\part*{Supplementary Material}
\appendix

\section{GPU instance costs}
\label{sect:cloud_costs}

This section provides a brief cost analysis of typical deep learning compute resources both in the cloud and on-premises.
For brevity, we limit this analysis to the popular GPUs available at the time of submission. Note that the exact costs will depend on a variety of factors such as the cloud provider, the region, electricity costs, and market fluctuations. Therefore, we warn the reader to consider this analysis only as a rough estimate. 

Specifically, we estimate the compute costs for the occasional usage scenario: running a single set of experiments over several weeks or conducting infrequent experiments. This scenario covers most research scientists and small organizations. The most straightforward way to provision a GPU server in such a scenario is to rent it from a cloud provider (e.g., GCP or AWS) or a public marketplace (e.g., Vast.ai or Golem).

While the exact server specifications vary from one provider to another, there are two broad categories of GPU machines: regular and preemptible. Regular instance types typically offer 1--8 GPUs per node with tight uptime guarantees (typically $99.99\%$) and a high-bandwidth network (tens of Gb/s). In turn, preemptible instances provide the same resource type at a significant discount with the condition that the machine can be terminated at any time after short notice.

To account for individual variations, we report the average rent price over three popular cloud providers.
We consider three popular instance types: two high-end instances with 8 Tesla V100 or A100 GPUs and a low-end instance with a single Tesla T4 GPU.
We also describe several low-end servers and workstations available on a public marketplace. Unlike cloud VMs, these instances are hosted on non-curated hardware with less uptime guarantees (typically 95\% -- 99.9\%), slower network and significant variation in performance. However, marketplace instances are the cheapest in terms of cost per TFLOPS. To quantify this, we report the average over three most affordable instances that fit the chosen minimum requirements.

As a point of comparison, we also measure each system's training performance for BERT-Large~\cite{bert} fine-tuning on SQuAD v1.1~\cite{squad} in PyTorch with mixed precision. We follow the official benchmarking protocol by~\cite{nvidia_perf} and reuse the official performance results for V100, A100, and T4 instances. The only exception is GTX 1080Ti, where we use full 32-bit precision because that device does not support efficient half-precision operations.

\begin{table}[h]
\small
\setlength{\tabcolsep}{2pt}
\renewcommand{\arraystretch}{1}
\centering
\caption{Cloud and marketplace GPU instance pricing for short-term usage.}
\label{fig:cloud_costs}
\begin{tabular}{@{}ccccccc@{}}
\toprule
\multicolumn{4}{c}{Minimum system specifications} & \multicolumn{2}{c}{Average cost, \$/hour} & \multirow{2}[2]{*}{\shortstack{BERT-Large\\ training samples/s}} \\
\cmidrule(lr){1-4}\cmidrule(lr){5-6}
GPU & CPU cores & CPU type & RAM, GB & Regular & Preemptible &  \\ \midrule
\multicolumn{7}{c}{Cloud instances} \\ \midrule
8$\times$ V100 & 64 & Intel Xeon Broadwell & 480 & 23.47 & 7.13 & 354 \\
8$\times$  A100 & 96 & AMD Epyc ROME & 960 & 30.65 & 10.18 & 755 \\
1$\times$  T4 & 4 & Intel Xeon Cascade Lake & 16 & 0.46 & 0.18 & 18 \\ \midrule
\multicolumn{7}{c}{Marketplace instances} \\ \midrule
6$\times$ 3090 & 32 & AMD Epyc Rome & 480 & 5.04 & 4.17 & 154 \\
4$\times$  2080Ti & 16 & Intel Xeon Haswell & 240 & 0.96 & 0.84 & 83.4 \\
1$\times$  RTX 1080Ti & 8 & Intel Xeon Haswell & 16 & 0.22 & 0.16 & 12 \\ \bottomrule
\end{tabular}
\end{table}

Table~\ref{fig:cloud_costs} shows two main tendencies. First, preemptible \textit{cloud} instances are, on average, three times cheaper than their non-preemptible counterparts\footnote{The cost can be up to $11{\times}$ cheaper for some instance types, e.g. Azure V100 instances in the central US region at the time of writing.}. Second, the high-end HPC-grade servers that offer the highest raw performance are less cost-effective than lower-tier servers and marketplace instances. In theory, one could match the raw floating-point performance of a $8{\times}$V100 instance at a fraction of its cost using multiple lower-tier workstations, such as $4{\times}$ RTX 2080Ti, with a smaller total cost.
However, in practice, running distributed training with these workstations is challenging due to their unreliability and slow network connection.

Note that this analysis does not represent the cloud costs for sustained GPU usage. If an organization plans to constantly use GPU resources over a period of multiple years, they can reduce the costs by deploying their own compute infrastructure or relying on the sustained usage discounts reaching up to 60--70\%. Thus, the long-term compute costs are much harder to analyze and depend on a number of additional factors, such as local electricity prices for on-premise infrastructure. However, this scenario offers similar trade-offs: HPC-grade infrastructure offers greater interconnectivity, but requires expensive network interface cards, high-end switches and a more complex setup process.

\section{Additional Related Work}
\label{sect:post_related}

In this section, we review some of the papers relevant to our work, but omitted from the main part due to space constraints. 

\subsection{Decentralized training}\label{sect:post_related_gossip}
In this subsection, we give additional details about the dependence of gossip-based optimization methods on the spectral properties on the communication graph through the spectral properties of the mixing matrix~\cite{xiao2004fast,scaman2019optimal} or the Laplacian matrix~\cite{merris1994laplacian,uribe2020dual} of the network. 
That is, gossip finds approximate average on nodes with accuracy $\varepsilon$ after $\cO\left((1-\lambda_2(\mM))^{-1}\log(\varepsilon^{-1})\right)$ iterations, where $\mM$ is the mixing matrix and $\lambda_2(\mM)$ is the second largest eigenvalue of $\mM$ when sorted by absolute value. 
The quantity $\eta = 1-\lambda_2(\mM)$ is called the spectral gap of the mixing matrix $\mM$, and $\eta^{-1}$ is typically a polynomial of the total number of nodes $N$ when the maximal degree of the node is $\cO(1)$. For example, for uniformly averaging $\mM$ one can show that $\eta^{-1} = \cO(N^2)$ for the ring topology (node degree $2$), $\eta^{-1} = \cO(N)$ for the two-dimensional torus topology (node degree  $2$), and $\eta^{-1} = \cO(1)$ for the fully connected graph (node degree $N-1$); one can find more examples in~\cite{aldous2002reversible}. Similarly, the communication complexity of decentralized optimization methods often has multiplicative dependence on either $\cO(\eta^{-1})$ (see~\cite{xu2020distributed} and references therein) or $\cO(\eta^{-\nicefrac{1}{2}})$~\cite{scaman2019optimal,uribe2020dual,fallah2019robust,kovalev2020optimal}, which is not improvable for gossip-based methods~\cite{arjevani2015communication,scaman2017optimal}.

Contrary to this, Moshpit All-Reduce does not depend on a fixed communication graph and the properties of its mixing matrix.
However, it depends on the number of averaging groups and the total number of peers (see Theorem~\ref{thm:quality_of_avg_deterministic_vectors}), which can be viewed as properties of a time-varying random communication graph. Fortunately, this dependence is often much better than in gossip: as we mentioned in the main part of the paper, even if workers are randomly split into pairs at each iteration, the simplified version of Moshpit All-Reduce makes the average distortion (the left-hand side of Equation~\ref{eq:determ_quality_of_avg}) at least $2$ times smaller after each round on average.

\subsection{Compressed communication}
Another popular approach to address the communication bottleneck is communication compression~\cite{seide20141,alistarh2017qsgd,suresh2017distributed, ramezani2021nuqsgd, faghri2020adaptive}: before sending any information (e.g., iterates, gradients, Hessians or more sophisticated data) over the network, peers compress this information by applying a possibly random transformation. As the result, peers send fewer bits for each communication round, but the total number of communication rounds needed to achieve the predefined accuracy of the solution increases. However, compression can be useful in situations when the reduction in communication costs of one round is more important than the increase in the number of these rounds~\cite{horvath2019natural}.

There are two distinct groups of works on distributed training with compressed communication: ones that focus on unbiased compression operators (e.g., Rand-K, $\ell_p$-quantization) and ones studying algorithms with biased compressors (e.g., Top-K); see a detailed summary of  popular compression operators in~\cite{beznosikov2020biased}). 
Quantized SGD (QSGD)~\cite{alistarh2017qsgd} and TernGrad~\cite{wen2017terngrad} were among the first compression methods with convergence guarantees. Next, the convergence analysis of these methods was generalized and tightened in the (strongly) convex case in~\cite{mishchenko2019distributed}. Moreover, the authors of \cite{mishchenko2019distributed} proposed a modification of QSGD called DIANA: this algorithm is based on the quantization of gradients' differences, which helps it achieve linear convergence in the strongly convex case when peers compute full gradients. Next, DIANA was generalized to arbitrary unbiased compression in~\cite{horvath2019stochastic}, where authors also developed and analyzed the variance-reduced version of DIANA. After that, several further modifications, such as Accelerated DIANA~\cite{li2020acceleration} and DIANA with bidirectional compression~\cite{gorbunov2020linearly,philippenko2020artemis}, were proposed. Finally, we refer the reader to~\cite{li2020unified,haddadpour2020federated,das2020improved, pmlr-v139-gorbunov21a} for state-of-the-art results for distributed methods with unbiased compression in the non-convex case.

However, naïve application of biased compression operators can lead to significantly worse performance in practice. For instance, as it was shown recently in~\cite{beznosikov2020biased}, parallel SGD with Top-1 compression can diverge exponentially fast. Therefore, biased compressors are used jointly with so-called error-compensation~\cite{seide20141}. The first analysis of Error-Compensated SGD (EC-SGD) was proposed in~\cite{stich2018sparsified,karimireddy2019error} which then was generalized and tightened in~\cite{beznosikov2020biased}. Next, several further improvements, such as an accelerated version of EC-SGD~\cite{qian2020error} and linearly converging EC-SGD~\cite{gorbunov2020linearly}, were recently proposed. However, current theory does not show any superiority of distributed methods with biased compressors to the ones with unbiased compression operators.
In addition, one can combine decentralized communication with compression. Such combinations with unbiased compression operators were studied in~\cite{reisizadeh2019exact,kovalev2020linearly} and with biased operators in~\cite{pmlr-v97-koloskova19a,Koloskova2020Decentralized}.
In this paper, we do not study the interaction of different compression methods and Moshpit Averaging, leaving this promising direction to future work.

\subsection{Multiple local steps}
Alternatively, to reduce the impact of the communication bottleneck, it is possible to perform several local optimization steps on each peer between the communication rounds.
This approach is based on the idea that the increased computational load of peers will decrease the number of communication rounds required to obtain the optimal parameters; it is frequently used in federated learning~\cite{konevcny2016federated,kairouz2019advances}. In particular, one of the most popular methods with multiple local steps is called Local-SGD or Federated Averaging~\cite{konevcny2016federated,Stich18local}. The first results on its convergence were given in \cite{Stich18local,LinSPJ2018local}, and later they were tightened and generalized both for homogeneous~\cite{khaled2020tighter,woodworth2020local} and heterogeneous  cases~\cite{khaled2020tighter,woodworth2020minibatch}. Recently, further modifications of Local-SGD were proposed and analyzed: these modifications include acceleration \cite{yuan2020federated}, variance reduction \cite{gorbunov2020local}, communication compression \cite{basu2019qsparse,haddadpour2020federated,das2020improved}, decentralization \cite{li2019communication,koloskova2020unified}, adaptive and proximal methods \cite{reddi2021adaptive,yuan2020federated_comp}, and resistance to client drift \cite{karimireddy2020scaffold}.
Moshpit SGD can perform multiple local gradient steps before synchronization by design, as shown in Algorithm~\ref{alg:moshpit_local_sgd}.

\subsection{Asynchronous methods}
In the previous subsections, we mostly discussed synchronous distributed methods, since they are more widespread and better studied than asynchronous ones. Mainly, this is because asynchronous methods are more difficult to implement, debug and analyze under general assumptions. However, such methods can be more efficient in terms of using computational resources, which leads to faster wall-clock convergence \cite{assran2020advances}. In recent years, several asynchronous stochastic methods~\cite{recht2011hogwild,zhao2016fast,leblond2017asaga}, methods with no shared memory~\cite{peng2016arock,mishchenko2018delay}, and methods with delayed updates~\cite{agarwal2011distributed,feyzmahdavian2016asynchronous,arjevani2020tight,gorbunov2020linearly} were proposed and analyzed: one can find more details in a recent survey~\cite{assran2020advances}.
Moshpit SGD belongs to this family of asynchronous approaches as well, because the averaging steps happen in smaller groups and can be interleaved with local parameter updates.

\subsection{Distributed Hash Tables}
\label{sect:related_dht}

In this work, we set out to improve distributed averaging with a dynamic matchmaking protocol. Without a central server, this protocol relies on decentralized data structures to organize peers. The main data structure we use is the Distributed Hash Table, or DHT. On a high level, DHT is a distributed fault-tolerant ``dictionary'' that can be accessed by every participant. Each key-value pair is stored on a subset of peers determined by the $\mathrm{hash}$ function of the key.

Each participant has a unique identifier (ID) sampled uniformly from the $\mathrm{hash}$ function output range. When storing a $(key,\ value)$ pair, one must find $k$ peers whose IDs are nearest to $\mathrm{hash}(key)$ according to a chosen metric. After that, the participant requests each of those peers to store $(key,\ value)$. When retrieving a value for a key, one should compute $\mathrm{hash}(key)$, search for peers with IDs nearest to that $\mathrm{hash}$ value and request the value from those peers.

Specific DHT versions, such as Chord~\cite{chord} or Kademlia~\cite{kademlia}, employ different hash types and algorithms for finding nearest peers. For instance, Kademlia DHT sorts peers based on the XOR distance function: $d(x, y) = \mathrm{int}(x \oplus y)$.

In DHT, each participant is directly aware of only a small subset of peers. When storing or retrieving a key, the participant requests additional peers from its neighbors in a semi-greedy search, minimizing the XOR distance until it finds $k$ nearest peers. In Kademlia, nodes form a special navigable graph structure that lets them find nearest peers in at most $\cO(k + \log N)$ requests to other peers, where $N$ is the total number of participants. Due to their scalability and fault-tolerance, DHTs found numerous applications including BitTorrent, Ethereum, I2P and decentralized deep learning~\cite{learning_at_home}.

\section{Proofs of Mixing Properties of Moshpit All-Reduce}\label{sect:missing_proofs}

\textbf{Notation.} Throughout the following sections, we use the standard notation from the literature on stochastic optimization. That is, for any $n$-dimensional vectors $x = (x_1,\ldots,x_n)^\top,y = (y_1,\ldots,y_n)^\top\in\R^n$ we use $\langle x,y\rangle$ to denote the standard inner product: $\langle x, y\rangle = x_1y_1 + \ldots + x_ny_n$. Next, we use $\|x\|$ to denote the $\ell_2$=norm of $x$ ($\|x\| = \sqrt{\langle x, x\rangle}$), $\EE[\xi]$ to denote an expectation of a random variable $\xi$, $\EE[\xi\mid \eta]$ is used for the conditional expectation of $\xi$ given $\eta$, and $\PP\{E\}$ denotes the probability of an event $E$.

\subsection{Computing exact average in a full grid}\label{sect:equiv_to_torus}
As discussed in Section~\ref{sect:method_algorithm}, Moshpit All-Reduce obtains the exact average of parameter vectors from $N$ peers arranged in a grid with $d$ coordinates and $M$ positions per coordinate when $N\equiv M^d$. That is, when the grid is full and each step averages $M$ parameter values along a single grid coordinate without repetitions, the algorithm needs only $d$ steps to compute the actual average across all nodes. In this section, we give a proof of this fact.

First, let us formally define the setting and the averaging steps of Moshpit All-Reduce in this specific case. Let $\theta_{i_1 i_2\ldots i_d}$ be the parameter vector of the worker with coordinates $i_1, i_2,\ldots, i_d$; each coordinate $i_k$ takes values from $1$ to $M$, because the hypercube of peers is completely full (thus, due to the pigeonhole principle, there are no unoccupied coordinates). Next, arrange the coordinates of these vector according to the order of averaging iterations: namely, at iteration 1
\begin{equation}
    \overline{\theta}_{i_1 i_2\ldots  i_d}^1=\frac{1}{M}\sum_{j_1=1}^M \theta_{j_1 i_2\ldots i_d},\quad  i_1\in\{1,\ldots,M\},
\end{equation}
which means that for the first iteration, we take the average across the first axis $\overline{\theta}^1$ and replicate it across all $M$ resulting vectors regardless of their index $i_1$. The next averaging steps can be expressed similarly with a simple recurrence relation:
\begin{equation}
\label{eqn:avg_recurrence}
    \overline{\theta}_{i_1 i_2 \ldots i_d}^t=\frac{1}{M}\sum_{j_t=1}^M \overline{\theta}_{i_1\ldots i_{t-1} j_t i_{t+1}\ldots i_d}^{t-1}.
\end{equation}
Given this formal definition, we can now state and prove the exact averaging result:
\begin{theorem}[Exact average in a full $d$-dimensional hypercube after $d$ steps]
Assume that $M^d$ peers are arranged in a $d$-dimensional hypercube with $M$ positions in each dimension. Also, assume that each peer fully participates in every averaging step and $M$-sized groups for each averaging iteration are determined based on the hypercube coordinates. Then, if Moshpit All-Reduce is ran in the above setup for $d$ iterations without repeating groups (i.e. averaging across each dimension exactly once), its result for each participant is the average value of $\theta$ across all $M^d$ peers.
\end{theorem}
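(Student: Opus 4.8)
The plan is to prove by induction on the iteration index $t$ a strengthened statement: that after the first $t$ averaging rounds, the local vector no longer depends on the first $t$ coordinates and equals the partial average over exactly those coordinates. Concretely, I would show that for every $t \in \{0, 1, \ldots, d\}$ and every index $(i_1, \ldots, i_d)$,
\begin{equation}
    \overline{\theta}^t_{i_1 i_2 \ldots i_d} = \frac{1}{M^t}\sum_{j_1=1}^M \cdots \sum_{j_t=1}^M \theta_{j_1 \ldots j_t i_{t+1} \ldots i_d}. \label{eq:ih}
\end{equation}
Taking $t = d$ then yields $\overline{\theta}^d_{i_1 \ldots i_d} = \frac{1}{M^d}\sum_{j_1, \ldots, j_d} \theta_{j_1 \ldots j_d}$ for every peer, which is exactly the claimed global average. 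The crucial feature to carry in the hypothesis is that the right-hand side of \eqref{eq:ih} is \emph{independent} of $i_1, \ldots, i_t$; this independence is what lets the recurrence close up cleanly.

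The base case $t = 0$ holds by definition, since $\overline{\theta}^0 = \theta$ and the empty sum/product convention collapses \eqref{eq:ih} to the identity. For the inductive step, I would assume \eqref{eq:ih} at step $t-1$ and apply the recurrence \eqref{eqn:avg_recurrence}. Substituting the hypothesis into $\overline{\theta}^{t-1}_{i_1 \ldots i_{t-1} j_t i_{t+1} \ldots i_d}$ (where the $t$-th coordinate is held fixed at $j_t$) replaces it with $\tfrac{1}{M^{t-1}}\sum_{j_1, \ldots, j_{t-1}} \theta_{j_1 \ldots j_{t-1} j_t i_{t+1} \ldots i_d}$; averaging this over $j_t \in \{1,\ldots,M\}$ with the prefactor $1/M$ merges the extra sum and turns $M^{t-1}$ into $M^t$, producing exactly \eqref{eq:ih} at step $t$. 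The only point requiring care is the index bookkeeping: one must verify that the coordinate averaged at round $t$ (the $t$-th) is precisely the one left un-averaged by the hypothesis, so that no coordinate is averaged twice and none is skipped. This is guaranteed by the standing assumption that groups are never repeated, i.e.\ that the algorithm averages across each dimension exactly once.

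I do not anticipate a genuine obstacle here; the entire content lies in selecting the right inductive invariant. As a cleaner conceptual alternative, I would note the linear-algebra viewpoint: averaging along axis $t$ is the operator $A_t = I^{\otimes (t-1)} \otimes P \otimes I^{\otimes(d-t)}$ acting on the order-$d$ tensor of local vectors, where $P = \tfrac{1}{M}\mathbf{1}\mathbf{1}^\top$ is the rank-one averaging matrix on $\R^M$. Since the $A_t$ act on distinct tensor factors they commute, and because $P^2 = P$, the composition $A_d \circ \cdots \circ A_1 = P^{\otimes d}$ is exactly the global averaging operator. This makes the result transparent and additionally shows that the order of the $d$ rounds is immaterial. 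I would present the induction as the main argument and offer the tensor factorization as the underlying explanation.
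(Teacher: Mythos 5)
Your proof is correct and is essentially the paper's own argument: the paper proves this theorem by directly unrolling the recurrence $\overline{\theta}^{\,d}_{i_1\ldots i_d}=\frac{1}{M}\sum_{j_d=1}^M\overline{\theta}^{\,d-1}_{i_1\ldots i_{d-1}j_d}$ all the way down to the inputs and rearranging the nested sums into $\frac{1}{M^d}\sum_{j_1,\ldots,j_d=1}^M\theta_{j_1\ldots j_d}$, which is exactly your inductive invariant written as a single chain of equalities instead of a formal induction on $t$. Your tensor-factorization remark ($A_d\circ\cdots\circ A_1=P^{\otimes d}$ with $P=\frac{1}{M}\mathbf{1}\mathbf{1}^\top$, showing the round order is immaterial) is a pleasant addition not present in the paper, but the core argument coincides.
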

\begin{proof}
We can directly obtain the expression for the average by expanding the recurrence and rearranging the sums:
\begin{eqnarray*}
    \overline{\theta}_{i_1 i_2\ldots i_d}^d &=& \frac{1}{M}\sum_{j_d=1}^M\overline{\theta}_{i_1\ldots i_{d-1} j_d}^{d-1}=\frac{1}{M}\sum_{j_d=1}^M\left(\frac{1}{M}\sum_{j_{d-1}=1}^M \overline{\theta}_{i_1 i_2\ldots j_{d-1}j_d}\right)=\ldots\\
    &=& \frac{1}{M}\Bigg(\underbrace{\sum_{j_d=1}^M\Bigg(\frac{1}{M}\sum_{j_{d-1}=1}^M\ldots\sum_{j_2=1}^M\Bigg(\frac{1}{M}\sum_{j_1=1}^M}_{d\textrm{ summations}} \theta_{j_1 \ldots j_d}\Bigg)\Bigg)\Bigg)\\
    &=& \frac{1}{M^d}\sum_{j_d=1}^M\sum_{j_{d-1}=1}^M\ldots\sum_{j_2=1}^M\sum_{j_1=1}^M \theta_{j_1 \ldots j_d} =\frac{1}{M^d}\sum_{j_1, \ldots, j_d=1}^M  \theta_{j_1 \ldots j_d}.
\end{eqnarray*}
But this is exactly the global average of all $\theta$, since there are $M^d$ participants and each vector is represented in the sum because of summation over all possible indices.
\end{proof}

Notice that for a given grid of peers, if some of its indices do not have corresponding parameter vectors, Equation~\ref{eqn:avg_recurrence} may result in different average vectors on different workers due to different numbers of peers along a coordinate for different indices. For example, running two iterations of Moshpit Averaging with $d=2,\ M=2$ and three parameter vectors $\theta_{11},\ \theta_{21},\ \theta_{22}$ results in $\frac{\theta_{11}+\theta_{21}}{2}$ on the first worker and $\frac{\theta_{11}+\theta_{21}}{4}+\theta_{22}$ on other workers, with neither equal to the global average. However, the variance of the averaged vectors does decrease, which is formally proven in Section~\ref{sec:proof_quality_of_avg_deterministic_vectors}.

\subsection{Proof of Theorem~\ref{thm:quality_of_avg_deterministic_vectors_0}}\label{sect:correctness_proof}
Below we provide the complete proof of Theorem~\ref{thm:quality_of_avg_deterministic_vectors_0}. For the readers' convenience, we restate the theorem.
\begin{theorem}[Theorem~\ref{thm:quality_of_avg_deterministic_vectors_0}]\label{thm:quality_of_avg_deterministic_vectors_0_supp}
If all workers have non-zero probability of successfully running a communication round in Moshpit Averaging and the order of $\texttt{peers}_t$ is random, then all local vectors $\theta^t_i$ converge to the global average with probability $1$:
\begin{equation}
    \forall i = 1,\ldots, N\quad \left\|\theta^t_i - \frac1N \sum_{i=1}^N \theta^0_i\right\|^2 \xrightarrow[t\to\infty]{} 0.\label{eq:quality_of_avg_deterministic_vectors_0_supp}
\end{equation}
\end{theorem}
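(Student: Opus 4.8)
The plan is to track a single Lyapunov functional --- the sum of squared deviations from the conserved global mean --- show it is a nonnegative non-increasing process that therefore converges almost surely, and then use the randomness of the matchmaking to force its limit to be zero. Let $\overline{\theta} = \frac1N\sum_{i=1}^N \theta_i^0$ and set $V^t = \sum_{i=1}^N \|\theta_i^t - \overline\theta\|^2$. First I would record two deterministic facts about a single round. A successful All-Reduce inside a group $G$ replaces every $\theta_i^{t-1}$, $i\in G$, by the group mean $\overline{\theta}_G = \frac{1}{|G|}\sum_{i\in G}\theta_i^{t-1}$, which preserves the group sum and hence the global sum; thus $\frac1N\sum_i\theta_i^t = \overline\theta$ for all $t$ (the invariant quoted in the sketch). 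Second, by the bias--variance decomposition around the group mean, for each averaging group $\sum_{i\in G}\|\theta_i^{t-1}-\overline\theta\|^2 - \sum_{i\in G}\|\overline{\theta}_G - \overline\theta\|^2 = \sum_{i\in G}\|\theta_i^{t-1}-\overline\theta_G\|^2 \ge 0$, so each round decreases $V$ by exactly the total within-group variation, while workers that do not average keep their value. Hence $V^t$ is nonnegative and non-increasing, and converges almost surely to some limit $V^\infty \ge 0$.

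It remains to prove $V^\infty = 0$. By conservation, $\theta_i^t - \overline\theta = \frac1N\sum_j(\theta_i^t - \theta_j^t)$, so it suffices to show that all pairwise differences vanish, i.e.\ that $V^t\to 0$. I would obtain this from a windowed contraction estimate. Fix a window length $L$ and consider the composed update over rounds $s{+}1,\dots,s{+}L$; it is a product of symmetric doubly-stochastic (block-averaging) matrices, hence a contraction on the subspace orthogonal to $\mathbf 1$. The key deterministic lemma I would prove is that whenever, during such a window, every worker participates successfully and the union of the group-membership graphs over the window is connected, the product contracts the Lyapunov functional by a configuration-independent factor: $V^{s+L}\le (1-\kappa)V^{s}$ for some $\kappa\in(0,1)$ depending only on $N$, $M$, $d$, $L$ and the minimal participation probability. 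This uses that the within-group averaging weights are bounded below and that connectivity bounds the relevant eigenvalue of the product away from $1$.

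Finally I would turn the hypotheses into infinitely many such ``good'' windows. Because every worker has a probability of a successful round bounded below and the order of $\texttt{peers}_t$ is random, the event that a window is good has conditional probability at least some $\rho>0$, uniformly over the past; the conditional (L\'evy) Borel--Cantelli lemma then guarantees that good windows occur infinitely often with probability $1$. Since $V^t$ never increases and is multiplied by $(1-\kappa)$ on each good window, $V^t\le (1-\kappa)^{m(t)}V^0$, where $m(t)\to\infty$ counts the good windows before $t$, forcing $V^t\to 0$ and hence $V^\infty=0$ almost surely. This immediately yields the claimed $\|\theta_i^t-\overline\theta\|^2\to 0$ for every $i$.

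The main obstacle is precisely the windowed contraction step together with the uniform lower bound $\rho$: one must show that the grid-structured Moshpit matchmaking, under random ordering and failures, connects all peers over a bounded window with probability bounded away from zero, and that on this event the contraction factor $\kappa$ can be chosen independently of the current iterates. A cruder but safer fallback, if full connectivity is awkward to control, is the pairwise version: for any two peers the within-group variation they generate when grouped is at least $\tfrac12\|\theta_i^{t-1}-\theta_j^{t-1}\|^2$, the realized decreases are summable since $V^0-V^\infty<\infty$, and grouping occurs infinitely often by conditional Borel--Cantelli --- though this route requires extra care to align ``grouped often'' with ``difference bounded below.'' Everything else (conservation, monotonicity, almost-sure convergence of $V^t$) is routine.
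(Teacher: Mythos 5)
Your proposal is correct in its essentials, but it follows a genuinely different route from the paper. The paper's proof is order-theoretic and works coordinate-by-coordinate: it considers the nested intervals $I_{j,t}=\mathrm{conv}\{\theta_1^t(j),\ldots,\theta_N^t(j)\}$, applies Cantor's intersection theorem, and then derives a contradiction from the assumption that the limiting interval is non-degenerate --- using the fact that every pair of peers is grouped together infinitely often with probability $1$, it shows that any peer lingering near the left endpoint is eventually ``lifted'' by averaging with a peer near the right endpoint, and that after at most $N-1$ such events the left cluster empties, contradicting persistence of the interval. Your route instead uses the global Lyapunov functional $V^t=\sum_i\|\theta_i^t-\overline\theta\|^2$, its monotonicity (each round is a product of orthogonal block-averaging projections, so $V$ drops by exactly the within-group variation), a uniform strict contraction of $\mathbf{1}^\perp$ over any window whose union communication graph is connected, and conditional Borel--Cantelli to produce infinitely many such windows. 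Both proofs rest on the same two primitives (mean preservation and non-expansiveness of group averaging), and both ultimately lean on the same reading of the hypotheses --- a time-uniform lower bound on conditional pairwise grouping probabilities; the paper glosses over this exactly as you do. What your approach buys: it is coordinate-free, quantitative (geometric decay of $V$ along good windows), and applies verbatim to any group-averaging scheme with connectivity-in-probability, not just Moshpit; what the paper's buys: it avoids the matrix-product/contraction machinery entirely and needs only elementary one-dimensional order arguments. Two details you should nail down to complete your version: (i) the windowed contraction constant $\kappa$ is uniform because, for fixed $N$ and window length $L$, there are only finitely many sequences of group partitions, each connected one giving operator norm strictly below $1$ on $\mathbf{1}^\perp$ (the equality case of $\|W_L\cdots W_1x\|=\|x\|$ forces $x$ to be fixed by every projection, hence constant on a connected union graph); and (ii) the good-window probability $\rho>0$ can be exhibited by chaining pairwise groupings, e.g.\ requiring pair $(k,k+1)$ to be grouped in round $k$ of the window, which yields a connected path with conditional probability at least $p_0^{N-1}$. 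Your pairwise ``fallback'' does have the gap you flag (summability of realized decreases only controls differences along grouping times, not all times), so the contraction route is the one to execute.
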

\begin{proof}[Proof of Theorem~\ref{thm:quality_of_avg_deterministic_vectors_0}]
    First of all, we notice that \eqref{eq:quality_of_avg_deterministic_vectors_0_supp} is equivalent to
    \begin{equation}
    \forall i = 1,\ldots, N,\;\forall j=1,\ldots,n\quad \left(\theta^t_i(j) - \frac1N \sum_{i=1}^N \theta^0_i(j)\right)^2 \xrightarrow[t\to\infty]{} 0,\label{eq:quality_of_avg_deterministic_vectors_0_supp_tech_1}
    \end{equation}
    where $\theta_i^t(j)$ denotes $j$-th component of $\theta_i^t$. Consider an arbitrary component $j \in \{1,\ldots,n\}$ and the sequence of intervals $\{I_{j,t}\}_{t\ge 0}$ where $I_{j,t} = \text{conv}\{\theta_1^t(j),\theta_2^t(j),\ldots, \theta_N^t(j)\}$. Then, $\{I_{j,t}\}_{t\ge 0}$ is a sequence of nested intervals ($I_{j,t+1} \subseteq I_{j,t} \forall t\ge 0$), since averaging in groups does not expand the convex hull of $\{\theta_1^t,\theta_2^t,\ldots, \theta_N^t\}$. For convenience, we specify the bounds of the intervals: $I_{j,t} = [a_{j,t}, b_{j,t}]$. Using the Cantor's intersection theorem, we conclude that
    \begin{equation*}
        \bigcap\limits_{t=0}^\infty I_{j,t} = I_j = [a_j, b_j],
    \end{equation*}
    where $\overline{\theta}(j) = \frac{1}{N}\sum_{i=1}^n\theta_i^0(j) \in [a_j, b_j]$. If $[a_j, b_j] = \{\overline{\theta}(j)\}$ with probability $1$, then \eqref{eq:quality_of_avg_deterministic_vectors_0_supp_tech_1} holds with probability $1$ as well. Suppose the opposite: there exist such $j \in \{1,\ldots,n\}$, $[a,b]$ and $\delta,\Delta > 0$ that $\overline{\theta}(j) \in [a,b]$, $b-a = \Delta$ and
    \begin{equation*}
        \PP\Bigg\{\underbrace{[a,b] \subseteq \bigcap\limits_{t=0}^\infty I_{j,t}}_{E}\Bigg\} = \delta > 0\quad \text{ and }\quad \forall \varepsilon > 0\; \PP\Bigg\{\underbrace{[a-\varepsilon,b+\varepsilon] \subseteq \bigcap\limits_{t=0}^\infty I_{j,t}}_{E_{\varepsilon}}\Bigg\} < \delta.
    \end{equation*}
    This implies that for all $\varepsilon > 0$ there exists such $T_{\varepsilon} > 0$ that
    \begin{equation*}
        \PP\Big\{\underbrace{\forall t \ge T_{\varepsilon}\;\; a_{j,t}\in [a-\varepsilon,a], b_{j,t}\in[b,b+\varepsilon]}_{E_{\varepsilon}'}\Big\} = \delta_{\varepsilon} > 0.
    \end{equation*}
    Consider $\varepsilon = \frac{\Delta}{(2N+100)^{2N}}$ and assume that the event $E_{\varepsilon}'$ holds. Next, we introduce new notation: $J_{\text{left}}^t = \{i \in \{1,\ldots, n\}\mid \theta_{i}^t(j) \in [a-\varepsilon,a]\}$ and $J_{\text{right}}^t = \{i \in \{1,\ldots, n\}\mid \theta_{i}^t(j) \in [b,b+\varepsilon]\}$. Since $E_{\varepsilon}'$ holds the sets $J_{\text{left}}^t$ and $J_{\text{right}}^t$ are non-empty for all $t\ge T_{\varepsilon}$ with probability $\delta_{\varepsilon} > 0$:
    \begin{equation}
        \PP\left\{\forall t \ge T_{\varepsilon}\;\; J_{\text{left}}^t \neq \varnothing\text{ and }  J_{\text{right}}^t \neq \varnothing\right\} = \delta_{\varepsilon} > 0. \label{eq:quality_of_avg_deterministic_vectors_0_supp_tech_2}
    \end{equation}
    We notice that every pair of workers $i_1,i_2$ has a non-zero probability of taking part in the averaging inside the common group at each iteration since all workers have a non-zero probability of successfully running a communication round and the order of $\texttt{peers}_t$ is random. This implies that every pair of workers $i_1,i_2$ with probability $1$ take part in the averaging inside the common group infinitely many times when $t$ goes to the infinity.
    
    Next, we choose some $t_0 \ge T_{\varepsilon}$. Let $J_{\text{left}}^{t_0} = \{i_{l,1},\ldots, i_{l,q_l}\}$ and $J_{\text{right}}^{t_0} = \{i_{r,1},\ldots, i_{r,q_r}\}$. Consider the event $E_{\varepsilon,0}' \subseteq E_{\varepsilon}'$ such that in $E_{\varepsilon,0}'$ peer $i_{l,1}$ computes an average in the group containing any peer from $J_{\text{right}}^{t_0}$ at some iteration $t_1 > t_0$. Our observations above imply that $\PP\{E_{\varepsilon,0}'\} = \PP\{E_{\varepsilon}'\} = \delta_{\varepsilon} > 0$. Then, $\theta_{i_{l,1}}^{t_1}(j) \ge \frac{N-1}{N}(a-\varepsilon) + \frac{1}{N}b = a-\varepsilon + \frac{1}{N}(\Delta + \varepsilon) = a - \frac{\Delta}{(2N+100)^{2N}} + \frac{1}{N}\left(\Delta + \frac{\Delta}{(2N+100)^{2N}}\right) > a + \frac{\Delta}{2N}$, i.e., $\theta_{i_{l,1}}^{t_1}(j) \in (a,b]$ meaning that $i_{l,1} \not\in J_{\text{left}}^{t_1}$. The last part of the proof shows that for any $t\ge t_1$, the peer $i_{l,1}$ will never be the part of $J_{\text{left}}^t$ and after a finite number of iterations $J_{\text{left}}^t = \varnothing$ with probability $\delta_{\varepsilon} > 0$ when $E_{\varepsilon,0}'$ holds, implying the contradiction with \eqref{eq:quality_of_avg_deterministic_vectors_0_supp_tech_2}.
    
    To show that, we consider the following set of peers: $\widehat{J}_{\text{left}}^{t_1} = \{i\in\{1,\ldots,n\}\mid \exists t \ge t_1:\; \theta_i^{t}(j)\in [a-\varepsilon, a+\frac{\Delta}{2N})\}$. Next, we consider the event $E_{\varepsilon,1}'\subseteq E_{\varepsilon,0}'$ such that in $E_{\varepsilon,1}'$ peer $i_{l,1}$ computes an average in the group containing some peer $i_{l,avg,1}$ from $\widehat{J}_{\text{left}}^{t_1}$ at some iteration $t_2 > t_1$ (and $t_2$ is the first such moment after $t_1$). Again, our observations imply $\PP\{E_{\varepsilon,1}'\} = \PP\{E_{\varepsilon,0}'\} = \delta_{\varepsilon}>0$. Then, $\theta_{i_{l,1}}^{t_2}(j) = \theta_{i_{l,avg,1}}^{t_2}(j) > \frac{N-1}{N}(a-\varepsilon) + \frac{1}{N}\left(a+\frac{\Delta}{2N}\right) = a + \frac{\Delta}{2N^2} - \frac{(N-1)\Delta}{N(2N+100)^{2N}} > a + \frac{\Delta}{4N^2}$. After that, we consider the event $E_{\varepsilon,2}'\subseteq E_{\varepsilon,1}'$ such that in $E_{\varepsilon,2}'$ peer $i_{l,1}$ or $i_{l,avg,1}$ computes an average in the group containing a peer $i_{l,avg,2}\neq i_{l,avg,1}$ from $\widehat{J}_{\text{left}}^{t_1}$ at an iteration $t_3 > t_2$ (and $t_3$ is the first such moment after $t_2$). Then, $\theta_{i_{l,1}}^{t_3}(j), \theta_{i_{l,avg,1}}^{t_3}(j)$ and $\theta_{i_{l,avg,2}}^{t_3}(j)$ are greater than $\frac{N-1}{N}(a-\varepsilon) + \frac{1}{N}\left(a + \frac{\Delta}{4N^2}\right) = a + \frac{\Delta}{4N^3} - \frac{(N-1)\Delta}{N(2N+100)^{2N}} > a + \frac{\Delta}{8N^3}$.
    
    Therefore, after at least $N-1$ of such averaging iterations, with probability $\delta_\varepsilon$ all $\theta_i^t(j)$ will be greater than $a + \frac{\Delta}{(2N)^N} > a$ while $E_{\varepsilon}'$ holds. This contradicts \eqref{eq:quality_of_avg_deterministic_vectors_0_supp_tech_2}. Therefore, 
    \begin{equation*}
        \bigcap\limits_{t=0}^\infty I_{j,t} = \{\overline{\theta}(j)\}
    \end{equation*}
    with probability $1$, which concludes the proof.
\end{proof}

\subsection{Proof of Theorem~\ref{thm:quality_of_avg_deterministic_vectors}}\label{sec:proof_quality_of_avg_deterministic_vectors}
In this section, we provide the complete proof of Theorem~\ref{thm:quality_of_avg_deterministic_vectors}. For convenience, we restate the theorem below.
\begin{theorem}[Theorem~\ref{thm:quality_of_avg_deterministic_vectors}, averaging convergence rate]\label{thm:quality_of_avg_deterministic_vectors_supp}
    Consider the modification of Moshpit All-Reduce that works as follows: at each iteration $k\geq 1$ 1) peers are randomly split into $r$ disjoint groups of sizes $M_1^k,\ldots, M_r^k$ in such a way that $\sum_{i=1}^r M_i^k = N$ and $M_i^k \ge 1\  \forall i = 1,\ldots,r$ and 2) peers from each group compute their group average via All-Reduce. Let $\theta_1,\ldots,\theta_N$ be the input vectors of this procedure and $\theta_1^T,\ldots,\theta_N^T$ be the outputs after $T$ iterations. Then,
    \begin{eqnarray}
         \EE\left[\frac{1}{N}\sum\limits_{i=1}^N\|\theta_i^T - \overline{\theta}\|^2\right] = \left(\frac{r-1}{N} + \frac{r}{N^2}\right)^T\cdot\frac{1}{N}\sum\limits_{i=1}^N\|\theta_i - \overline{\theta}\|^2, \label{eq:determ_quality_of_avg_supp}
    \end{eqnarray}
    where $\overline{\theta} = \frac{1}{N}\sum_{i=1}^N\theta_i$.
\end{theorem}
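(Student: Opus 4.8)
The plan is to reduce the $T$-step claim to a single multiplicative contraction of the average distortion $V^t \eqdef \frac1N\sum_{i=1}^N\|\theta_i^t - \overline{\theta}\|^2$ and then iterate. First I record the invariance that drives everything: each All-Reduce replaces the vectors of a group by their common mean, so the total sum $\sum_i\theta_i^t$, and hence $\overline\theta$, is unchanged from round to round. This lets me translate all inputs so that $\overline\theta = 0$; the constraint then reads $\sum_i\theta_i = 0$ and $V^t = \frac1N\sum_i\|\theta_i^t\|^2$. Since each coordinate is averaged separately and $\|\cdot\|^2$ is additive over coordinates, it is enough to do the computation for scalar inputs and sum over coordinates at the end.

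Next I carry out the one-round computation. Fix the vectors at the start of a round and let $G_1,\dots,G_r$ be the random groups, with sizes $M_c = |G_c|$ and membership indicators $X_{jc} = \mathbb{1}[j\in G_c]$. After averaging, every peer of $G_c$ holds the group mean, so
\[
\sum_{i=1}^N\|\theta_i^{1}\|^2 = \sum_{c=1}^r \frac1{M_c}\Big\|\sum_{j\in G_c}\theta_j\Big\|^2 = \sum_{c=1}^r \frac1{M_c}\sum_{j,l} X_{jc}X_{lc}\,\langle \theta_j,\theta_l\rangle .
\]
Taking expectation over the random split, I separate the diagonal terms ($j=l$) from the off-diagonal ones ($j\neq l$) and replace $\EE[X_{jc}]$ and $\EE[X_{jc}X_{lc}]$ by the membership probabilities $\PP\{j\in G_c\}$ and $\PP\{j,l\in G_c\}$ of the random assignment. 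The diagonal part reduces to a multiple of $\sum_j\|\theta_j\|^2$ and the off-diagonal part to a multiple of $\sum_{j\neq l}\langle\theta_j,\theta_l\rangle$; at this point the centering $\sum_j\theta_j = 0$ turns the latter into $-\sum_j\|\theta_j\|^2$. Collecting the two contributions and using the counting identities $\sum_c M_c = N$ and $\sum_c 1 = r$, the dependence on the individual sizes $M_c$ cancels, and what remains is a single scalar depending only on $N$ and $r$, namely the factor $\frac{r-1}{N}+\frac{r}{N^2}$ of the theorem. Hence $\EE\big[V^{1}\mid\mathcal F_0\big] = \big(\frac{r-1}{N}+\frac{r}{N^2}\big)V^0$, where $\mathcal F_0$ is the configuration at the start of the round.

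Finally I iterate. The crucial feature is that this per-round factor is the same at every iteration and does not depend on the current arrangement of the vectors: the derivation used only the zero-mean property, which averaging preserves, together with the prescribed sizes of that round, whose influence cancels. Conditioning successively on the sigma-algebra $\mathcal F_{t-1}$ generated by the first $t-1$ splits therefore gives $\EE[V^t] = \big(\frac{r-1}{N}+\frac{r}{N^2}\big)\EE[V^{t-1}]$, and unrolling this recursion $T$ times yields exactly \eqref{eq:determ_quality_of_avg_supp}.

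The main obstacle is the off-diagonal expectation $\EE[X_{jc}X_{lc}]$ together with the verification that, after dividing by $M_c$ and summing over the $r$ groups, every trace of the (possibly $k$-dependent, even random) sizes $M_c$ disappears; this cancellation, powered by the zero-sum identity and the counting identities above, is what makes the factor telescope cleanly over the $T$ rounds. A secondary point to check is that iteration is legitimate even though the post-averaging vectors are no longer ``generic'': the argument never uses anything about the current vectors beyond $\sum_i\theta_i = 0$, which is maintained at every round, so the identical factor applies verbatim each time, and the extension to round-dependent or random $\{M_i^k\}$ and $r$ follows by taking one further expectation over the sizes.
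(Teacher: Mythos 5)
Your proposal follows exactly the same route as the paper's proof in Appendix~\ref{sec:proof_quality_of_avg_deterministic_vectors}: mean preservation, reduction to a one-round identity for the average distortion, expansion of the group means via membership indicators, elimination of the cross terms through the zero-sum identity, and iteration via the tower property (including the same closing remark about round-dependent or random $r$ and $M_i^k$). However, the one step that carries all the content --- evaluating $\PP\{j,l\in G_c\}$ and verifying that the sum over groups collapses to the advertised constant --- is asserted rather than carried out, and it does not come out as you claim. For the uniform random split into groups of prescribed sizes (the random-permutation model both you and the paper use), the exact probabilities are $\PP\{j\in G_c\}=\nicefrac{M_c}{N}$ and, for $j\neq l$,
\begin{equation*}
\PP\{j,l\in G_c\}=\frac{M_c(M_c-1)}{N(N-1)},
\end{equation*}
and substituting these into your decomposition gives, after precisely the cancellation you describe,
\begin{equation*}
\EE\left[V^1\mid \mathcal{F}_0\right]=\left(\frac{r}{N}-\frac{N-r}{N(N-1)}\right)V^0=\frac{r-1}{N-1}\,V^0,
\end{equation*}
not $\left(\frac{r-1}{N}+\frac{r}{N^2}\right)V^0$. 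A sanity check makes the discrepancy vivid: for $r=1$ a single round is a full All-Reduce, so the distortion is exactly zero, consistent with $\frac{r-1}{N-1}=0$ but not with $\frac{r-1}{N}+\frac{r}{N^2}=\frac{1}{N^2}>0$; likewise, for $N=3$, $r=2$, sizes $(2,1)$, direct enumeration of the three possible pairs gives contraction factor $\frac{1}{2}=\frac{r-1}{N-1}$ rather than $\frac{5}{9}$.

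The paper arrives at the stated factor only because its proof substitutes $\PP\{j_1,j_2\in J_i^k\}=\frac{M_i^k(M_i^k-1)}{N^2}$ --- effectively treating the two memberships as if sampled with replacement --- and with that (inexact) pair probability the algebra does telescope to $\frac{r-1}{N}+\frac{r}{N^2}$. So your proposal cannot be completed as written: carried out with the exact probabilities it establishes a different (in fact stronger, since $\frac{r-1}{N-1}\le\frac{r-1}{N}+\frac{r}{N^2}$ for $r\le N$) equality, and the literal statement with an equality sign is recoverable only by adopting the same approximate pair probability that the paper uses. Everything else in your outline --- the coordinate-wise reduction, the observation that only $\sum_i\theta_i=0$ is needed so that the same factor applies verbatim at every round, and the extension to varying or random $\{M_i^k\}$ and $r$ by one further expectation --- is sound and matches the paper.
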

\begin{proof}
First of all, let us clarify the procedure of random splitting of peers in $r$ groups. We assume that at iteration $k$ of the modified algorithm we generate a random permutation $\pi^k = (\pi_1^k,\ldots,\pi_N^k)$ of $1,\ldots, N$. Next, $J_1^k = \{\pi_1^k,\ldots,\pi_{M_1^k}^k\}$ form the indices of the first group of workers, $J_2^k = \{\pi_{M_1^k+1}^k,\ldots,\pi_{M_2^k}^k\}$ are the indices of the second group, and $J_r^k = \{\pi_{M_1^k+M_2^k+\ldots+M_{r-1}^k+1}^k,\ldots,\pi_{N}^k\}$ are the indices of group $r$. In other words, we generate a random permutation and take contiguous subgroups of indices corresponding to predefined group sizes $M_i^k$, starting from the first group.

By definition, we have $\bigsqcup_{i=1}^r J_i^k = \{1,2,\ldots,N\}$, where $\sqcup$ defines the disjoint union operator. Moreover, notice that group sizes $M_1^k,\ldots,M_r^k$ can depend on $k$ and even be random: for our analysis, it is sufficient that the randomness defining the permutation is independent from $M_1^k,\ldots,M_r^k$. Next, vectors $\theta_1^k,\ldots,\theta_N^k$ are obtained by the following formula:
\begin{equation*}
    \forall j=1,\ldots,N,\quad \theta_j^k = \frac{1}{M_i^k}\sum\limits_{t\in J_i^k}\theta_t^{k-1},\quad \text{where } J_i^k \text{ is the group for which } j\in J_i^k.
\end{equation*}
Using this, we show that the average of vectors $\{\theta_i^k\}_{i=1}^n$ remains the same throughout the iterations of Moshpit All-Reduce:
\begin{equation*}
    \frac{1}{N}\sum\limits_{j=1}^N\theta_j^k = \frac{1}{N}\sum\limits_{i=1}^rM_i^k\cdot\frac{1}{M_i^k}\sum\limits_{t\in J_i^k}\theta_t^{k-1} = \frac{1}{N}\sum\limits_{i=1}^r\sum\limits_{t\in J_i^k}\theta_t^{k-1} = \frac{1}{N}\sum\limits_{j=1}^N\theta_j^{k-1}.
\end{equation*}
Therefore, the quantity $\frac{1}{N}\sum_{j=1}^N\|\theta_j^k - \overline{\theta}\|^2$ (average distortion) measures the quality of averaging. For this quantity, we can derive the following expression:
\begin{eqnarray}
    \frac{1}{N}\sum\limits_{j=1}^N\|\theta_j^k - \overline{\theta}\|^2 &=& \frac{1}{N}\sum\limits_{i=1}^r M_i^k\left\|\frac{1}{M_i^k}\sum\limits_{t\in J_i^k}\theta_t^{k-1} - \overline{\theta}\right\|^2\notag\\
    &=& \frac{1}{N}\sum\limits_{i=1}^r\frac{1}{M_i^k}\left(\sum\limits_{t\in J_i^k}\|\theta_t^{k-1} - \overline{\theta}\|^2 + 2\sum\limits_{t,l\in J_i^k, t < l}\langle \theta_t^{k-1} - \overline{\theta}, \theta_l^{k-1} - \overline{\theta} \rangle\right).\notag
\end{eqnarray}
Taking the expectation $\EE_{\pi^k}[\cdot]$ with respect to the randomness coming from the choice of $\pi^k$ we get
\begin{eqnarray}
    \EE_{\pi^k}\left[\frac{1}{N}\sum\limits_{j=1}^N\|\theta_j^k - \overline{\theta}\|^2\right] &\notag\\
    &\hspace{-2.5cm}= \frac{1}{N}\sum\limits_{i=1}^r\frac{1}{M_i^k}\left(\EE_{\pi^k}\left[\sum\limits_{t\in J_i^k}\|\theta_t^{k-1} - \overline{\theta}\|^2\!\right] \!+\! 2\EE_{\pi^k}\!\left[\sum\limits_{t,l\in J_i^k, t < l}\langle \theta_t^{k-1} - \overline{\theta}, \theta_l^{k-1} - \overline{\theta} \rangle\right]\right).\notag
\end{eqnarray}
Since $\forall j,j_1,j_2 \in\{1,\ldots,N\},j_1\neq j_2$ and for all $i=1,\ldots,r$
\begin{equation*}
    \PP\left\{j\in J_i^k\right\} = \frac{M_i^k}{N},\quad \PP\left\{j_1,j_2 \in J_i^k\right\} = \frac{M_{i}^k(M_i^k - 1)}{N^2},
\end{equation*}
we have
\begin{eqnarray*}
    \EE_{\pi^k}\left[\frac{1}{N}\sum\limits_{j=1}^N\|\theta_j^k - \overline{\theta}\|^2\right] &=& \frac{1}{N}\sum\limits_{i=1}^r\frac{1}{N}\sum\limits_{j=1}^N\|\theta_j^{k-1} - \overline{\theta}\|^2\\
    &&\quad +\frac{1}{N}\sum\limits_{i=1}^r2\frac{M_i^k - 1}{N^2}\sum\limits_{1 \le j_1 < j_2 \le N}\langle \theta_{j_1}^{k-1} - \overline{\theta}, \theta_{j_2}^{k-1} - \overline{\theta}\rangle\\
    &=& \frac{r}{N^2}\sum\limits_{j=1}^N\|\theta_j^{k-1} - \overline{\theta}\|^2 + 2\frac{N-r}{N^3}\sum\limits_{1 \le j_1 < j_2 \le N}\langle \theta_{j_1}^{k-1} - \overline{\theta}, \theta_{j_2}^{k-1} - \overline{\theta}\rangle\\
    &=& \left(\frac{r}{N^2} - \frac{N-r}{N^3}\right)\sum\limits_{j=1}^N\|\theta_j^{k-1} - \overline{\theta}\|^2 +\frac{N-r}{N^3}\sum\limits_{j=1}^N\|\theta_j^{k-1} - \overline{\theta}\|^2\\
    &&\quad +2\frac{N-r}{N^3}\sum\limits_{1 \le j_1 < j_2 \le N}\langle \theta_{j_1}^{k-1} - \overline{\theta}, \theta_{j_2}^{k-1} - \overline{\theta}\rangle\\
    &=& \frac{N(r-1)+r}{N^3}\sum\limits_{j=1}^N\|\theta_j^{k-1} - \overline{\theta}\|^2 + \frac{N-r}{N^3}\underbrace{\left\|\sum\limits_{j=1}^N(\theta_j^{k-1} - \overline{\theta})\right\|^2}_{\|N\overline{\theta} - N\overline{\theta}\|^2 = 0}\\
    &=& \left(\frac{r-1}{N} + \frac{r}{N^2}\right)\cdot\frac{1}{N}\sum\limits_{j=1}^N\|\theta_j^{k-1} - \overline{\theta}\|^2.
\end{eqnarray*}
Finally, we take the full expectation from the both sides of the above equation and apply the tower property $\EE\left[\EE_{\pi^k}\left[\cdot\right]\right] = \EE\left[\cdot\right]$:
\begin{equation*}
    \EE\left[\frac{1}{N}\sum\limits_{j=1}^N\|\theta_j^k - \overline{\theta}\|^2\right] = \left(\frac{r-1}{N} + \frac{r}{N^2}\right)\EE\left[\frac{1}{N}\sum\limits_{j=1}^N\|\theta_j^{k-1} - \overline{\theta}\|^2\right].
\end{equation*}
Unrolling the recurrence for $k=T$, we establish \eqref{eq:determ_quality_of_avg_supp}.
\end{proof}

\begin{remark}
    The result implies that increasing the group size $\alpha > 1$ times implies almost $\alpha$ times faster convergence to the average.
\end{remark}

\begin{remark}
    Our analysis can be easily generalized to the case when number of groups $r$ can depend on $k$ and be a random variable independent from the choice of permutations and the number of groups at previous steps. In this case, \eqref{eq:determ_quality_of_avg_supp} transforms into
    \begin{equation}
        \EE\left[\frac{1}{N}\sum\limits_{i=1}^N\|\theta_i^T - \overline{\theta}\|^2\right] = \frac{1}{N}\sum\limits_{i=1}^N\|\theta_i - \overline{\theta}\|^2\cdot\prod_{k=1}^T\left(\frac{\EE[r_k]-1}{N} + \frac{\EE[r_k]}{N^2}\right), \label{eq:determ_quality_of_avg_generalized_supp}
    \end{equation}
    where $r_k$ is the number of groups at iteration $k$.
\end{remark}

\subsection{Additional Guarantees For Moshpit Averaging}\label{sec:mix_rand_proof}
In this section,  we derive the result measuring the rate of variance reduction when averaging random vectors with Algorithm~\ref{alg:moshpit}. We start with the following technical lemma:
\begin{lemma}\label{lem:ode_lemma}
    Let $\xi \sim \text{Binom}(M,p)$ have a binomial distribution with parameters $M$ (number of trials) and $p$ (probability of success for each trial). Then
    \begin{eqnarray}
        m_1(M,p) := \EE\left[\min\left\{\frac{1}{\xi},1\right\}\right] &=& (1-p)^M + \sum\limits_{i=1}^M\frac{(1-p)^{M-i} - (1-p)^M}{i}, \label{eq:binom_first_inverse_moment}\\
        m_2(M,p) := \EE\left[\min\left\{\frac{1}{\xi^2},1\right\}\right] &=& (1-p)^M + \sum\limits_{i=1}^M\frac{(1-p)^{M-i} - (1-p)^M}{i}\sum\limits_{j=i}^M\frac{1}{j}. \label{eq:binom_second_inverse_moment}
    \end{eqnarray}
\end{lemma}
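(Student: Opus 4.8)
The plan is to compute both expectations by direct summation over the binomial law of $\xi$, isolating the atom at $\xi=0$ and then evaluating the resulting power-weighted binomial sums via an integral representation of $1/k$. First I would note that $\min\{1/\xi^s,1\}=1$ when $\xi=0$ and equals $1/\xi^s$ when $\xi\ge 1$, so that for $s\in\{1,2\}$
\[
    \EE\!\left[\min\!\left\{\tfrac{1}{\xi^s},1\right\}\right] = (1-p)^M + \sum_{k=1}^M \frac{1}{k^s}\binom{M}{k}p^k(1-p)^{M-k},
\]
using $\PP\{\xi=0\}=(1-p)^M$. This already matches the leading $(1-p)^M$ term in both \eqref{eq:binom_first_inverse_moment} and \eqref{eq:binom_second_inverse_moment}, so it remains to evaluate the two sums. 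The central tool is the identity $\frac{1}{k}=\int_0^1 x^{k-1}\,dx$, which lets me trade the awkward weights $1/k$ for a closed binomial sum inside an integral.

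For \eqref{eq:binom_first_inverse_moment}, I would insert $\frac1k=\int_0^1 x^{k-1}dx$, interchange the (finite) sum with the integral, and apply the binomial theorem to get $\sum_{k=1}^M\binom{M}{k}(px)^k(1-p)^{M-k}=(px+1-p)^M-(1-p)^M$. This yields
\[
    \sum_{k=1}^M \frac{1}{k}\binom{M}{k}p^k(1-p)^{M-k} = \int_0^1 \frac{(px+1-p)^M-(1-p)^M}{x}\,dx .
\]
Substituting $u=px+1-p$ turns this into $\int_{1-p}^{1}\frac{u^M-(1-p)^M}{u-(1-p)}\,du$; factoring $\frac{u^M-(1-p)^M}{u-(1-p)}=\sum_{j=0}^{M-1}u^{M-1-j}(1-p)^j$ and integrating term by term, then reindexing with $i=M-j$, produces exactly $\sum_{i=1}^M\frac{(1-p)^{M-i}-(1-p)^M}{i}$, establishing \eqref{eq:binom_first_inverse_moment}.

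For \eqref{eq:binom_second_inverse_moment} I would build on the first computation. Writing $\frac{1}{k^2}=\frac{1}{k}\int_0^1 x^{k-1}dx$ and pulling one integral outside, the inner sum becomes $\sum_{k=1}^M \frac1k\binom{M}{k}(px)^k(1-p)^{M-k}$, which is precisely an $m_1$-type sum with the same computation applied (now with the free parameter $px$ in place of $p$). Carrying the previous evaluation through with $M$ replaced by the running index and then performing the remaining integration in $x$, I expect to arrive at the double sum $\sum_{1\le l\le i\le M}\frac{(1-p)^{M-l}-(1-p)^M}{il}$. The final step is to swap the order of summation, summing $i$ from $l$ to $M$ first, which converts this into $\sum_{l=1}^M\frac{(1-p)^{M-l}-(1-p)^M}{l}\sum_{i=l}^M\frac1i$; relabeling the indices gives \eqref{eq:binom_second_inverse_moment}.

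The routine parts (convergence of the integrals near $x=0$, where the integrand extends continuously, and interchanging finite sums with integrals) are immediate. I expect the main obstacle to be the bookkeeping in the second identity: correctly reducing the nested integral to an $m_1$-type expression with a free variable, and then carrying out the summation-order swap so that the somewhat opaque target factor $\sum_{j=i}^M \frac1j$ emerges cleanly. Everything else reduces to the single integral computation used for \eqref{eq:binom_first_inverse_moment}.
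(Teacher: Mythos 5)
Your proposal is correct: both identities follow from your argument, and the key double sum $\sum_{1\le l\le i\le M}\frac{(1-p)^{M-l}-(1-p)^M}{il}$ that you anticipate in the second computation does come out exactly as claimed, after which the summation-order swap yields the harmonic tail $\sum_{j=i}^M\frac1j$. However, your route is genuinely different from the paper's. The paper treats $m_1(M,p)$ and $m_2(M,p)$ as functions of $p$, differentiates them, and derives linear first-order ODEs, e.g.\ $m_1'(M,p)+\frac{M}{1-p}m_1(M,p)=\frac{1}{p(1-p)}-\frac{(1-p)^{M-1}}{p}$; it then solves these by variation of parameters (an ansatz $C(p)(1-p)^M$), using a partial-fraction identity to integrate $C'(p)$, and fixes the integration constant via $m_s(M,0)=1$. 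The chaining structure is the same in both arguments --- in the paper, the ODE for $m_2$ has $m_1$ on its right-hand side, just as your $m_2$ computation nests the $m_1$ evaluation inside an outer integral --- but the mechanisms differ. Your approach, based on $\frac1k=\int_0^1 x^{k-1}\,dx$, the binomial theorem, the substitution $u=px+1-p$, and the factorization $\frac{u^M-a^M}{u-a}=\sum_{j=0}^{M-1}u^{M-1-j}a^j$, is more elementary: it avoids solving differential equations and checking boundary conditions, and it makes the appearance of the factor $\sum_{j=i}^M\frac1j$ in \eqref{eq:binom_second_inverse_moment} transparent as a reordering of a triangular double sum. The paper's ODE method, by contrast, requires guessing the solution form but produces the closed forms somewhat mechanically once the ODE is written down. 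One small point of care in your write-up: the inner sum $\sum_{k=1}^M\frac1k\binom{M}{k}(px)^k(1-p)^{M-k}$ is not literally $m_1(M,px)$ (the complementary weights remain $(1-p)^{M-k}$, not $(1-px)^{M-k}$), but as you correctly indicate, the same integral computation applies verbatim with upper integration limit $px+1-p$ in place of $1$, and the bookkeeping closes.
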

\begin{proof}
    We start with the proof of \eqref{eq:binom_first_inverse_moment}. By definition of the expectation, we have
    \begin{eqnarray*}
        \EE\left[\min\left\{\frac{1}{\xi},1\right\}\right] &=& (1-p)^M + \sum\limits_{i=1}^M \frac{1}{i}p^i(1-p)^{M-i}\binom{M}{i}.
    \end{eqnarray*}
    For simplicity of further derivations, we introduce the following notation: $m_1(M,p) = \EE\left[\min\left\{\frac{1}{\xi},1\right\}\right]$ and $m_2(M,p) = \EE\left[\min\left\{\frac{1}{\xi^2},1\right\}\right]$. Taking the derivative of $m_1(M,p)$ by $p$, we obtain
    \begin{eqnarray*}
        m_1'(M,p) &=& -M(1-p)^{M-1} + \sum\limits_{i=1}^Mp^{i-1}(1-p)^{M-i}\binom{M}{i} \\
        &&\quad - \sum\limits_{i=1}^M\frac{M-i}{i}p^i(1-p)^{M-i-1}\binom{M}{i}\\
        &=& -M(1-p)^{M-1} + \frac{1}{p}\left(-(1-p)^M + \sum\limits_{i=0}^Mp^{i}(1-p)^{M-i}\binom{M}{i}\right)\\
        && - \frac{M}{1-p}\sum\limits_{i=1}^M\frac{1}{i}p^i(1-p)^{M-i}\binom{M}{i}\\
        &&\quad + \frac{1}{1-p}\left(-(1-p)^M + \sum\limits_{i=0}^Mp^i(1-p)^{M-i}\binom{M}{i}\right)\\
        &=& -M(1-p)^{M-1} + \frac{1}{p}\left(1 - (1-p)^M\right) - \frac{M}{1-p}\left(m_1(M,p) - (1-p)^M\right)\\
        &&\quad+ \frac{1}{1-p}\left(1- (1-p)^M\right)\\
        &=& \frac{1}{p(1-p)} - \frac{(1-p)^{M-1}}{p} - \frac{M}{1-p}m_1(M,p).
    \end{eqnarray*}
    Rearranging the terms, we get the following linear first-order ODE
    \begin{equation}
        m_1'(M,p) + \frac{M}{1-p}m_1(M,p) = \frac{1}{p(1-p)} - \frac{(1-p)^{M-1}}{p}. \label{eq:first_moment_ODE}
    \end{equation}
    To solve it, we consider the following homogeneous ODE:
    \begin{equation*}
        m_1'(M,p) + \frac{M}{1-p}m_1(M,p) = 0.
    \end{equation*}
    The solution of this ODE is $m_1(M,p) = C(1-p)^M$, where $C\in\R$ is an arbitrary real constant. Next, we go back to the initial ODE \eqref{eq:first_moment_ODE} and try to find a solution of the form $m_1(M,p) = C(p)(1-p)^M$, where $C(p):\R \to \R$ is a differentiable function:
    \begin{eqnarray*}
        \left(C(p)(1-p)^M\right)' + \frac{M}{1-p}C(p)(1-p)^M &=& \frac{1}{p(1-p)} - \frac{(1-p)^{M-1}}{p}\\
        &\Downarrow&\\
        C'(p)(1-p)^M &=& \frac{1}{p(1-p)} - \frac{(1-p)^{M-1}}{p}\\
        &\Downarrow&\\
        C'(p) &=& \frac{1}{p(1-p)^{M+1}} - \frac{1}{p(1-p)}.
    \end{eqnarray*}
    Since 
    \begin{equation}
        \frac{1}{x(1-x)^{k+1}} = \frac{1}{x(1-x)^{k}} + \frac{1}{(1-x)^{k+1}}\label{eq:technical_expansion}
    \end{equation}
    for all $x\not\in \{0,1\}$ and all non-negative integers $k$, we have
    \begin{eqnarray*}
        C'(p) &=& \frac{1}{p} + \frac{1}{1-p} + \frac{1}{(1-p)^2} + \ldots + \frac{1}{(1-p)^{M+1}} - \frac{1}{p} - \frac{1}{1-p}\\
        &\Downarrow&\\
        C'(p) &=& \sum\limits_{i=1}^M(1-p)^{-i-1},
    \end{eqnarray*}
    hence
    \begin{eqnarray*}
        C(p) = \hat{C} + \sum\limits_{i=1}^M\frac{1}{i}(1-p)^{-i},
    \end{eqnarray*}
    where $\hat{C}$ is a real constant. Putting all together, we obtain
    \begin{eqnarray*}
        m_1(M,p) &=& C(p)(1-p)^M = \hat{C}(1-p)^M + \sum\limits_{i=1}^M\frac{1}{i}(1-p)^{M-i}.
    \end{eqnarray*}
    Taking $m_1(M,0) = 1$ into account, we conclude that $\hat{C} = 1 - \sum_{i=1}^M\frac{1}{i}$ and obtain \eqref{eq:binom_first_inverse_moment}.
    
    Using a similar technique, we derive \eqref{eq:binom_second_inverse_moment}. By definition of the expectation, we have
    \begin{eqnarray*}
        m_2(M,p) &=& (1-p)^M + \sum\limits_{i=1}^M \frac{1}{i^2}p^i(1-p)^{M-i}\binom{M}{i}.
    \end{eqnarray*}
    Taking the derivative of $m_2(M,p)$ by $p$, we obtain
    \begin{eqnarray*}
        m_2'(M,p) &=& -M(1-p)^{M-1} + \sum\limits_{i=1}^M\frac{1}{i}p^{i-1}(1-p)^{M-i}\binom{M}{i}\\
        &&\quad - \sum\limits_{i=1}^M\frac{M-i}{i^2}p^i(1-p)^{M-i-1}\binom{M}{i}\\
        &=& -M(1-p)^{M-1} + \frac{1}{p} \sum\limits_{i=1}^M\frac{1}{i}p^{i}(1-p)^{M-i}\binom{M}{i}\\
        && - \frac{M}{1-p}\sum\limits_{i=1}^M\frac{1}{i^2}p^i(1-p)^{M-i}\binom{M}{i} + \frac{1}{1-p}\sum\limits_{i=1}^M\frac{1}{i}p^i(1-p)^{M-i}\binom{M}{i}\\
        &=& -M(1-p)^{M-1} + \frac{1}{p}\left(m_1(M,p) - (1-p)^M\right) \\
        &&\quad + \frac{1}{1-p}\left(-M m_2(M,p) + M(1-p)^M + m_1(M,p) - (1-p)^M\right)\\
        &=& \frac{m_1(M,p)}{p(1-p)} - \frac{(1-p)^{M-1}}{p} - \frac{M}{1-p}m_2(M,p).
    \end{eqnarray*}
    Rearranging the terms, we get the following linear first-order ODE
    \begin{equation}
        m_2'(M,p) + \frac{M}{1-p}m_2(M,p) = \frac{m_1(M,p)}{p(1-p)} - \frac{(1-p)^{M-1}}{p}. \label{eq:second_moment_ODE}
    \end{equation}
    To solve this ODE, we consider the homogeneous ODE:
    \begin{equation*}
        m_2'(M,p) + \frac{M}{1-p}m_2(M,p) = 0.
    \end{equation*}
    The solution of this ODE is $m_2(M,p) = C(1-p)^M$, where $C\in\R$ is an arbitrary real constant. Next, we go back to the initial ODE \eqref{eq:second_moment_ODE} and try to find a solution of the form $m_2(M,p) = C(p)(1-p)^M$, where $C(p):\R \to \R$ is a differentiable function:
    \begin{eqnarray*}
        \left(C(p)(1-p)^M\right)' + \frac{M}{1-p}C(p)(1-p)^M &=& \frac{m_1(M,p)}{p(1-p)} - \frac{(1-p)^{M-1}}{p}\\
        &\Downarrow&\\
        C'(p)(1-p)^M &=& \frac{m_1(M,p)}{p(1-p)} - \frac{(1-p)^{M-1}}{p}\\
        &\Downarrow&\\
        C'(p) &=& \frac{m_1(M,p)}{p(1-p)^{M+1}} - \frac{1}{p(1-p)}.
    \end{eqnarray*}
    Using \eqref{eq:technical_expansion} and \eqref{eq:binom_first_inverse_moment}, we derive
    \begin{eqnarray*}
        C'(p) &\overset{\eqref{eq:binom_first_inverse_moment}}{=}& -\frac{\sum\limits_{i=1}^M\frac{1}{i}}{p(1-p)} + \frac{\sum\limits_{i=1}^M\frac{1}{i}(1-p)^{M-i}}{p(1-p)^{M+1}}\\
        &=& -\sum\limits_{i=1}^M \frac{1}{ip(1-p)} + \sum\limits_{i=1}^M\frac{1}{ip(1-p)^{i+1}}\\
        &\overset{\eqref{eq:technical_expansion}}{=}& -\sum\limits_{i=1}^M\frac{1}{i}\left(\frac{1}{p} + \frac{1}{1-p}\right)\\
        &&\quad + \sum\limits_{i=1}^M\frac{1}{i}\left(\frac{1}{p} + \frac{1}{1-p} + \frac{1}{(1-p)^2} + \ldots + \frac{1}{(1-p)^{i+1}}\right)\\
        &=& \sum\limits_{i=1}^M\frac{1}{i}\left(\frac{1}{(1-p)^2} + \ldots + \frac{1}{(1-p)^{i+1}}\right) = \sum\limits_{i=1}^M \frac{1}{(1-p)^{i+1}}\sum\limits_{j=i}^M\frac{1}{j},
    \end{eqnarray*}
    hence 
    \begin{eqnarray*}
        C(p) = \hat{C} + \sum\limits_{i=1}^M\frac{1}{i}(1-p)^{-i}\sum\limits_{j=i}^M\frac{1}{j},
    \end{eqnarray*}
    where $\hat{C}$ is a real constant. Putting all together, we obtain
    \begin{eqnarray*}
        m_2(M,p) &=& C(p)(1-p)^M = \hat{C}(1-p)^M + \sum\limits_{i=1}^M\frac{1}{i}(1-p)^{M-i}\sum\limits_{j=i}^M\frac{1}{j}.
    \end{eqnarray*}
    Taking $m_2(M,0) = 1$ into account, we conclude that $\hat{C} = 1 - \sum_{i=1}^M\frac{1}{i}\sum_{j=i}^M\frac{1}{j}$ and obtain \eqref{eq:binom_second_inverse_moment}.
\end{proof}

Using this lemma, we derive the following result:
\begin{theorem}\label{thm:quality_of_avg_supp}
    Assume that peers participating in Moshpit Averaging have independent random vectors $\theta_1,\ldots,\theta_N$ with means $\overline{\theta}_1,\ldots,\overline{\theta}_N$ and variances bounded by $\sigma^2$ before the averaging. Let $\theta_1^T,\ldots,\theta_N^T$ be the outputs of Moshpit Averaging after $T$ iterations. Finally, we assume that each peer from the grid can be dropped out for the whole averaging process before averaging independently from other peers, i.e., $N \sim \text{Binom}(M^d,p)$. Then, for all $i = 1,\ldots,N$ we have
    \begin{equation}
        \EE\left[\left\|\theta_i^T - \EE_{\theta}\left[\theta_i^T\right]\right\|^2\right] \leq M^{T-1}\sigma^2 m_1(M-1,p)\left(m_2(M-1,p)\right)^{T-1},\label{eq:variance_bound_supp}
    \end{equation}
    where functions $m_1(M,p)$ and $m_2(M,p)$ are defined in \eqref{eq:binom_first_inverse_moment} and \eqref{eq:binom_second_inverse_moment} respectively, and $\EE_\theta\left[\cdot\right]$ denotes the expectation w.r.t.\ the randomness from $\theta_1,\ldots,\theta_N$. Moreover, if $p \ge \frac{2}{3}$ and $M \ge 11$, then $m_1(M-1,p) \le \frac{2}{M}$, $m_2(M-1,p) \le \frac{3}{M^2}$ and 
    \begin{equation}
        \EE\left[\left\|\theta_i^T - \EE_{\theta}\left[\theta_i^T\right]\right\|^2\right] \leq \frac{2\sigma^2}{M(\nicefrac{M}{3})^{T-1}}.\label{eq:variance_bound_2_supp}
    \end{equation}
\end{theorem}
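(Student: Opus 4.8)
The plan is to condition on the random dropout pattern and track how the $\theta$-variance of a single node's vector contracts at each averaging round. Fix the set $O\subseteq\{1,\ldots,M\}^d$ of occupied grid positions (each present independently with probability $p$), and for an occupied position $x$ write $\theta_x^t$ for its vector after $t$ rounds. By the grid structure, round $t$ averages $x$ with the occupied positions $G_x^t=L_t(x)\cap O$ lying on the line $L_t(x)$ that varies only the $t$-th coordinate, so $\theta_x^t=\frac{1}{|G_x^t|}\sum_{y\in G_x^t}\theta_y^{t-1}$. The crucial structural fact is that the vectors entering one group have disjoint supports in the original $\theta_j$: the support of $\theta_y^{t-1}$ lies in the block $B_{t-1}(y)$ (coordinates $1,\ldots,t-1$ free, the rest fixed to $y$), and for distinct $y,y'$ on $L_t(x)$ these blocks are disjoint. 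Hence, conditionally on $O$, the summands $\{\theta_y^{t-1}\}_{y\in G_x^t}$ are $\theta$-independent, giving the exact decomposition
\[
\Var_\theta(\theta_x^t)=\frac{1}{|G_x^t|^2}\sum_{y\in G_x^t}\Var_\theta(\theta_y^{t-1}).
\]

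I would then prove by induction that $v_t:=\EE_O[\Var_\theta(\theta_x^t)\mid x\in O]$ (well-defined and independent of $x$ by the translation symmetry of the i.i.d.\ dropout) satisfies $v_1\le m_1(M-1,p)\sigma^2$ and $v_t\le M\,m_2(M-1,p)\,v_{t-1}$. The base case is immediate: since $|G_x^1|=1+\text{Binom}(M-1,p)$ and each original variance is at most $\sigma^2$, the decomposition gives $\Var_\theta(\theta_x^1)\le\sigma^2/|G_x^1|$, and $\EE_O[1/|G_x^1|]\le m_1(M-1,p)$ because $\tfrac{1}{1+\xi'}\le\min\{\tfrac{1}{\xi'},1\}$. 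For the inductive step I would split the group sum into the ``self'' term $y=x$ and the remaining $M-1$ positions. The overlap computation $L_t(x)\cap B_{t-1}(y)=\{y\}$ shows that, conditioned on $y\in O$, the factor $|G_x^t|$ (determined by $O$ on the line) is independent of $\Var_\theta(\theta_y^{t-1})$ (determined by $O$ on $B_{t-1}(y)$). The self term therefore contributes $\EE_O[1/|G_x^t|^2]\,v_{t-1}\le m_2(M-1,p)\,v_{t-1}$, and each of the $M-1$ other terms contributes $p\,\EE[1/(2+\text{Binom}(M-2,p))^2]\,v_{t-1}$.

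It remains to absorb the off-diagonal contribution, and this is where I expect the only real subtlety. Writing $\text{Binom}(M-1,p)\stackrel{d}{=}\text{Binom}(M-2,p)+\text{Bernoulli}(p)$ and keeping only the event that the Bernoulli equals $1$ gives
\[
m_2(M-1,p)\ge p\,\EE\!\left[\min\Big\{\tfrac{1}{(1+\text{Binom}(M-2,p))^2},1\Big\}\right]\ge p\,\EE\!\left[\tfrac{1}{(2+\text{Binom}(M-2,p))^2}\right],
\]
so the $M-1$ off-diagonal terms total at most $(M-1)m_2(M-1,p)v_{t-1}$ and the whole step is bounded by $M\,m_2(M-1,p)v_{t-1}$. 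Unrolling the recursion yields $v_T\le M^{T-1}m_1(M-1,p)(m_2(M-1,p))^{T-1}\sigma^2$, which is exactly \eqref{eq:variance_bound_supp}.

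For the quantitative claim I would estimate the two moments from their closed forms in Lemma~\ref{lem:ode_lemma}: under $p\ge\tfrac23$ and $M\ge 11$ the variable $\text{Binom}(M-1,p)$ concentrates around $(M-1)p$, and a direct bound on the series \eqref{eq:binom_first_inverse_moment} and \eqref{eq:binom_second_inverse_moment} gives $m_1(M-1,p)\le\tfrac2M$ and $m_2(M-1,p)\le\tfrac{3}{M^2}$. Substituting into \eqref{eq:variance_bound_supp} collapses the bound to $M^{T-1}\cdot\tfrac2M\cdot(\tfrac{3}{M^2})^{T-1}\sigma^2=\tfrac{2\sigma^2}{M(M/3)^{T-1}}$, giving \eqref{eq:variance_bound_2_supp}. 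The main obstacles are thus the bookkeeping that establishes conditional independence through the single-point overlap $L_t(x)\cap B_{t-1}(y)=\{y\}$, and the elementary but careful tail estimates needed to pin down the constants $2$ and $3$ for the stated range of $p$ and $M$.
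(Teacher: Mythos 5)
Your proof is correct in substance but organizes the argument genuinely differently from the paper. The paper's proof unrolls everything at once: it writes $\theta_i^T$ as a nested average over ``paths'' $(i,i_1,\ldots,i_T)$ through the grid, uses $\theta$-independence of the initial vectors to express the variance as a sum over paths of $\sigma^2$ divided by products of squared group sizes, bounds the number of paths by $M^{T-1}$, and factorizes the dropout expectation along each path --- using the same disjointness you exploit, which makes the relevant group sizes independent $\text{Binom}(M-1,p)$ variables via $b = 1+b' \ge \max\{1,b'\}$ --- to obtain $m_1(M-1,p)\left(m_2(M-1,p)\right)^{T-1}$ per path. You instead run a round-by-round induction on a conditional variance $v_t$, splitting each group average into the ``self'' term and $M-1$ off-diagonal terms, and using the single-point overlap $L_t(x)\cap B_{t-1}(y)=\{y\}$ to decouple the group size from the inherited variance. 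Your off-diagonal absorption inequality $p\,\EE\left[\nicefrac{1}{(2+\text{Binom}(M-2,p))^2}\right]\le m_2(M-1,p)$ is an ingredient the paper does not need, since it treats all factors along a path uniformly; in exchange, your recursion makes the per-round contraction factor $M\,m_2(M-1,p)$ transparent and avoids the explicit path bookkeeping. One slip to fix: your claim that $v_t:=\EE_O\left[\Var_\theta(\theta_x^t)\mid x\in O\right]$ is independent of $x$ by translation symmetry is wrong --- the dropout process is translation invariant, but the initial variances are only \emph{bounded} by $\sigma^2$, not equal across peers, so this quantity genuinely depends on which peers sit in the blocks feeding into $x$. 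The fix is immediate: define $v_t$ as the maximum over occupied positions $x$ (or carry the uniform bound through the induction); every step of your recursion bounds each term by the uniform quantity $v_{t-1}$, so the argument goes through verbatim. The quantitative estimates $m_1(M-1,p)\le \nicefrac{2}{M}$ and $m_2(M-1,p)\le \nicefrac{3}{M^2}$ you leave at essentially the same level of detail as the paper, which likewise appeals to ``simple algebra'' after a ratio-test argument, so there is no gap there.
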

\begin{proof}
First of all, we recall an equivalent formulation of Moshpit Averaging. Consider a hypercube $\{1,\ldots,M\}^d$. One can consider the elements of this hypercube as hyperindices and assign a unique hyperindex to each peer so that peers can be viewed as vertices in the hypercube. Then, during the $k$-th iteration of Moshpit All-Reduce, each worker computes the average among those peers that have hyperindices with the same values except the $k$-th index; in other words, peers compute averages along the $k$-th dimension of the hypercube. Next, if $N = 0$, we assume that $\theta_i^T = \EE_{\theta}\left[\theta_i^T\right]$ and \eqref{eq:variance_bound_supp} holds for free. Therefore, to derive \eqref{eq:variance_bound_supp}, we assume that $N > 0$.

More formally, we use the following notation: $\theta_{C_i} = \theta_i$ for all $i= 1,\ldots,N$, where $C_{i} = (c_{1}^i, c_2^i,\ldots, c_d^i)$, $c_{j}^i \in \{1,\ldots,M\}$ for all $j = 1,\ldots,M$, and $C_{i} \neq C_k$ for $i\neq k$. Let $\cC$ be the set of hyperindices corresponding to all peers. Next, we use $\theta_{C_i}^t$ to define the vector stored on $i$-th peer after $t$ iterations of Moshpit Averaging. Then, for all $i = 1,\ldots,N$ we have $\theta_{C_i}^0 = \theta_{C_i}$ and for all $t = 1,\ldots,d$
\begin{equation*}
    \theta_{C_i}^{t} = \frac{1}{b_{i,t}}\sum\limits_{k\in J_{i,t}}\theta_{C_k}^{t-1},
\end{equation*}
where $J_{i,t} = \{k \in N\mid C_k = (c_1^k,\ldots,c_d^k) \in \cC \text{ and } c_j^k = c_j^i\; \forall j \neq t\}$ and $b_{i,t} = |J_{i,t}|$. Using this, we derive the following formula for $\theta_{C_i}^t$:
\begin{equation*}
    \theta_i^T \equiv \theta_{C_i}^T = \frac{1}{b_{i,T}}\sum\limits_{i_1\in J_{i,T}}\frac{1}{b_{i_1,T-1}}\sum\limits_{i_2\in J_{i_1,T-1}}\frac{1}{b_{i_2,T-2}}\sum\limits_{i_3\in J_{i_2,T-1}}\ldots\frac{1}{b_{i_{T-1},1}}\sum\limits_{i_T\in J_{i_{T-1},1}}\theta_{i_{T}}.
\end{equation*}
Taking the expectation w.r.t. $\theta_1,\ldots,\theta_N$, we get
\begin{equation*}
    \EE_{\theta}\left[\theta_i^T\right] = \frac{1}{b_{i,T}}\sum\limits_{i_1\in J_{i,T}}\frac{1}{b_{i_1,T-1}}\sum\limits_{i_2\in J_{i_1,T-1}}\frac{1}{b_{i_2,T-2}}\sum\limits_{i_3\in J_{i_2,T-1}}\ldots\frac{1}{b_{i_{T-1},1}}\sum\limits_{i_T\in J_{i_{T-1},1}}\overline{\theta}_{i_{T}}.
\end{equation*}
Using the independence of $\theta_1,\ldots,\theta_N$, we derive
\begin{eqnarray*}
    \EE_\theta\left[\left\|\theta_i^T - \EE_{\theta}\left[\theta_i^T\right]\right\|^2\right] &=& \EE_\theta\left[\left\|\sum\limits_{i_1\in J_{i,T}}\sum\limits_{i_2\in J_{i_1,T-1}}\ldots \sum\limits_{i_{T}\in J_{i_{T-1},1}}\frac{\theta_{i_T} - \overline{\theta}_{i_T}}{b_{i,T} b_{i_1,T-1}\ldots b_{i_{T-1},1}}\right\|^2\right]\\
    &=& \sum\limits_{i_1\in J_{i,T}}\sum\limits_{i_2\in J_{i_1,T-1}}\ldots \sum\limits_{i_{T}\in J_{i_{T-1},1}}\frac{\EE_\theta\left[\|\theta_{i_T} - \overline{\theta}_{i_T}\|^2\right]}{b_{i,T}^2 b_{i_1,T-1}^2\ldots b_{i_{T-1},1}^2}\\
    &\le& \sum\limits_{i_1\in J_{i,T}}\sum\limits_{i_2\in J_{i_1,T-1}}\ldots \sum\limits_{i_{T}\in J_{i_{T-1},1}}\frac{\sigma^2}{b_{i,T}^2 b_{i_1,T-1}^2\ldots b_{i_{T-1},1}^2}\\
    &=& \sum\limits_{i_1\in J_{i,T}}\sum\limits_{i_2\in J_{i_1,T-1}}\ldots \sum\limits_{i_{T-1}\in J_{i_{T-2},2}}\frac{\sigma^2}{b_{i,T}^2 b_{i_1,T-1}^2\ldots b_{i_{T-2},2}^2b_{i_{T-1},1}}.
\end{eqnarray*}
Next, taking the full expectation from the both sides of the previous inequality and using the tower property, we obtain
\begin{equation}
     \EE\!\left[\!\left\|\theta_i^T - \EE_{\theta}\left[\theta_i^T\right]\right\|^2\!\right] \!\le\! \EE\!\left[\!\sum\limits_{i_1\in J_{i,T}}\sum\limits_{i_2\in J_{i_1,T-1}}\ldots \sum\limits_{i_{T-1}\in J_{i_{T-2},2}}\frac{\sigma^2}{b_{i,T}^2 b_{i_1,T-1}^2\ldots b_{i_{T-2},2}^2b_{i_{T-1},1}}\!\right]\!. \label{eq:rand_mix_thm_technical_1}
\end{equation}
Notice that $J_{i_k,T-k} \cap J_{i_{k+1},T-k-1} = \{i_{k+1}\}$ for all $k=0,\ldots,T-1$, where $i_0 = i$. Moreover, for $k_1, k_2 \in\{0,1,\ldots,T\}$, $k_1 < k_2$ either $J_{i_{k_1},T-k_1} \cap J_{i_{k_2},T-k_2} = \{k_2\}$ or $J_{i_{k_1},T-k_1} \cap J_{i_{k_2},T-k_2} = \varnothing$. The first situation is possible iff $i_{k_1} = i_{k_1+1} = \ldots i_{k_2-1}$.

Taking these observations about sets $J_{i_{k}, T-k}$ into account, we consider the sets $J_{i_k,T-k}' = J_{i_k,T-k}\setminus\{i_{k}\}$ for $k = 0, 1, \ldots, T-1$. These sets are pairwise disjoint and their cardinalities $b_{i_k,T-k}' = |J_{i_k,T-k}'|$ satisfy the following relations: $b_{i_k,T-k} = 1 + b_{i_k,T-k}' \ge \max\{1, b_{i_k,T-k}'\} =: \hat{b}_{i_k,T-k}$ for $k = 1, 2, \ldots, T-1$. Moreover, $b_{i,T}', b_{i_1,T-1}',\ldots, b_{i_{T-1},1}'$ are independent random variables from the binomial distribution $\text{Binom}(M-1, p)$. Finally, we notice that the number of terms in \eqref{eq:rand_mix_thm_technical_1} is upper-bounded by $M^{T-1}$, since $|J_{i,t}| \le M$ for all $i = 1,\ldots,N$ and $t=0,\ldots,T$.

Putting all together, we obtain
\begin{eqnarray*}
    \EE\left[\left\|\theta_i^T - \EE_{\theta}\left[\theta_i^T\right]\right\|^2\right] &\le& \EE\left[\sum\limits_{i_1\in J_{i,T}}\sum\limits_{i_2\in J_{i_1,T-1}}\ldots \sum\limits_{i_{T-1}\in J_{i_{T-2},2}}\frac{\sigma^2}{\hat b_{i,T}^2 \hat b_{i_1,T-1}^2\ldots \hat b_{i_{T-2},2}^2\hat b_{i_{T-1},1}}\right]\\
    &\le& M^{T-1}\sigma^2\EE\left[\frac{1}{\hat\xi_{1}^2 \hat\xi_{2}^2\ldots \hat\xi_{T-1}^2\hat\xi_{T}}\right]\\
    &=& M^{T-1}\sigma^2\EE\left[\frac{1}{\hat\xi_{1}^2}\right]\EE\left[\frac{1}{\hat\xi_{2}^2}\right]\ldots \EE\left[\frac{1}{\hat\xi_{T-1}^2}\right]\EE\left[\frac{1}{\hat\xi_{T}}\right],
\end{eqnarray*}
where $\hat \xi_k^2 = \max\{1,\xi_1^2\}$ for $k=1,\ldots,T$ and $\xi_1,\ldots,\xi_T$ are i.i.d.\ random variables having the binomial distribution $\text{Binom}(M-1, p)$. Then one can simplify the inequality above using Lemma~\ref{lem:ode_lemma} and get
\begin{eqnarray*}
    \EE\left[\left\|\theta_i^T - \EE_{\theta}\left[\theta_i^T\right]\right\|^2\right] &\le& M^{T-1}\sigma^2 m_1(M-1,p)\left(m_2(M-1,p)\right)^{T-1},
\end{eqnarray*}
where functions $m_1(M,p)$ and $m_2(M,p)$ are defined in \eqref{eq:binom_first_inverse_moment} and \eqref{eq:binom_second_inverse_moment} respectively.

Next, we simplify the obtained upper bound under the assumption that $M$ and $p$ are not too small; specifically, $M\ge 11$ and $p\ge \nicefrac{2}{3}$. From \eqref{eq:binom_first_inverse_moment}, we have
\begin{eqnarray*}
    m_1(M-1,p) &=& (1-p)^{M-1} + \sum\limits_{i=1}^{M-1}\frac{1}{i}\left((1-p)^{M-1-i} - (1-p)^{M-1}\right)\\
    &\le& (1-p)^{M-1}\sum\limits_{i=1}^{M-1}\frac{1}{i(1-p)^{i}}.
\end{eqnarray*}
Since
\begin{equation*}
    \frac{1}{(k+1)(1-p)^{k+1}}\cdot\frac{k(1-p)^k}{1} = \frac{k}{(k+1)(1-p)} \xrightarrow[k\to\infty]{}\frac{1}{1-p} \ge 3,
\end{equation*}
we have
\begin{equation*}
    (1-p)^{M-1}\sum\limits_{i=1}^{M-1}\frac{1}{i(1-p)^{i}} = \Theta\left((1-p)^M\cdot\frac{1}{M(1-p)^M}\right) = \Theta\left(\frac{1}{M}\right).
\end{equation*}
Using simple algebra, one can prove that for $M\ge 11$ and $p \ge\nicefrac{2}{3}$ the following inequality holds:
\begin{equation*}
    m_1(M-1,p)\le (1-p)^{M-1}\sum\limits_{i=1}^{M-1}\frac{1}{i(1-p)^{i}} \le \frac{2}{M}.
\end{equation*}
Similarly, we analyze $m_2(M-1, p)$:
\begin{eqnarray*}
    m_2(M-1,p) &=& (1-p)^{M-1} + \sum\limits_{i=1}^{M-1}\frac{1}{i}\left((1-p)^{M-1-i} - (1-p)^{M-1}\right)\sum\limits_{j=i}^{M-1}\frac{1}{j}\\
    &\le& (1-p)^{M-1}\sum\limits_{i=1}^{M-1}\frac{1}{i(1-p)^i}\sum\limits_{j=i}^{M-1}\frac{1}{j}.
\end{eqnarray*}
Since
\begin{eqnarray*}
    \frac{\frac{1}{k(1-p)^k}\sum\limits_{j=k}^{M-1}\frac{1}{j}}{\frac{1}{(k-1)(1-p)^{k-1}}\sum\limits_{j=k-1}^{M-1}\frac{1}{j}} &=& \frac{(k-1)\sum\limits_{j=k}^{M-1}\frac{1}{j}}{k(1-p)\left(\frac{1}{k-1} + \sum\limits_{j=k}^{M-1}\frac{1}{j}\right)} \ge \frac{3(k-1)\cdot\frac{1}{k}}{k\left(\frac{1}{k-1}+\frac{1}{k}\right)}\\
    &=& \frac{3(k-1)^2}{k(2k-1)}\xrightarrow[k\to\infty]{}  \frac{3}{2},
\end{eqnarray*}
we have
\begin{equation*}
    (1-p)^{M-1}\sum\limits_{i=1}^{M-1}\frac{1}{i(1-p)^i}\sum\limits_{j=i}^{M-1}\frac{1}{j} = \Theta\left((1-p)^M\cdot\frac{1}{M^2(1-p)^M}\right) = \Theta\left(\frac{1}{M^2}\right).
\end{equation*}
Next, one can prove with simple algebra that for $M\ge 11$ and $p \ge\nicefrac{2}{3}$ the following inequality holds:
\begin{equation*}
    m_2(M-1,p) \le (1-p)^{M-1}\sum\limits_{i=1}^{M-1}\frac{1}{i(1-p)^i}\sum\limits_{j=i}^{M-1}\frac{1}{j} \le \frac{3}{M^2}.
\end{equation*}
Plugging the obtained upper bounds for $m_1(M-1,p)$ and $m_2(M-1,p)$ in \eqref{eq:variance_bound_supp}, we obtain \eqref{eq:variance_bound_2_supp}.
\end{proof}

\section{Convergence Proofs of Moshpit SGD}\label{sect:missing_proofs_local_sgd}
In this section, we provide the complete statements of the theorems establishing the convergence of Moshpit SGD together with the full proofs. First, we introduce all necessary definitions, basic inequalities and auxiliary lemmas; then we prove the convergence in strongly convex and convex cases; lastly, we provide the proofs for the non-convex case.

\subsection{Definitions, Basic Facts and Auxiliary Results}\label{sect:basic_facts}

Below we provide several classical definitions and results which are used in our proofs.

\subsubsection{Standard Definitions from Optimization Theory}

\begin{definition}[$L$-smoothness]\label{def:L_smoothness}
A function $f:\R^n \to \R$ is called $L$-smooth if for all $x,y\in \R^n$, the following inequality holds:
\begin{equation}
    \|\nabla f(x) - \nabla f(y)\| \le L\|x-y\|.\label{eq:L_smoothness_def}
\end{equation}
\end{definition}
If the function $f$ is $L$-smooth, then for all $x,y\in\R^n$
\begin{equation}
    f(y) \le f(x) + \langle\nabla f(x), y-x \rangle + \frac{L}{2}\|y-x\|^2. \label{eq:L_smoothness_cor}
\end{equation}
Next, if $f$ is additionally convex and $x^*$ is its minimizer, then for all $x\in\R^d$
\begin{equation}
    \|\nabla f(x)\|^2 \le 2L\left(f(x) - f(x^*)\right). \label{eq:L_smoothness_cor_2}
\end{equation}

\begin{definition}[$\mu$-strong convexity]\label{def:str_cvx}
    A differentiable function $f:\R^n \to\R$ is called $\mu$-strongly convex if there exists a constant $\mu \ge 0$ such that for all $x,y\in \R^n$
    \begin{equation}
        f(y) \ge f(x) + \langle\nabla f(x), y-x \rangle + \frac{\mu}{2}\|y-x\|^2. \label{eq:str_cvx_def}
    \end{equation}
\end{definition}

\subsubsection{Basic Facts}
For all $a,b,\theta_1,\ldots,\theta_N\in\R^n$ and $\alpha > 0$, the following inequalities hold:
\begin{eqnarray}
    \|a+b\|^2 &\le& 2\|a\|^2 + 2\|b\|^2, \label{eq:a+b}\\
    \left\|\frac{1}{N}\sum\limits_{i=1}^N\theta_i\right\|^2 &\le& \frac{1}{N}\sum\limits_{i=1}^N\|\theta_i\|^2, \label{eq:jensen_ineq}\\
    \langle a,b\rangle &\le& \frac{\|a\|^2}{2\alpha} + \frac{\alpha\|b\|^2}{2}. \label{eq:young_inequality}
\end{eqnarray}

\subsubsection{Properties of Expectation}
\textbf{Variance decomposition.} For a random vector $\eta \in \R^d$ and any deterministic vector $x \in \R^d$, the variance satisfies
\begin{equation}\label{eq:variance_decomposition}
	\EE\left[\left\|\eta - \EE\eta\right\|^2\right] = \EE\left[\|\eta-x\|^2\right] - \left\|\EE\eta - x\right\|^2
\end{equation}

\textbf{Tower property of expectation.} For any random variables $\xi,\eta\in \R^d$ we have
\begin{equation}
	\EE\left[\xi\right] = \EE\left[\EE\left[\xi\mid \eta\right]\right]\label{eq:tower_property}
\end{equation}
under the assumption that $\EE[\xi]$ and $\EE\left[\EE\left[\xi\mid \eta\right]\right]$ are well-defined.

\subsubsection{Auxiliary Results}
For the readers' convenience, we list all auxiliary results that we use in our proofs below. The first result is classical and establishes that the gradient descent step is a contractive operator.
\begin{lemma}[Lemma 6 from \cite{karimireddy2020scaffold}]\label{lem:gd_contraction}
    For any $L$-smooth and $\mu$-strongly convex function $f:\R^n\to\R$, points $x,y\in \R^n$, and stepsize $\gamma \in (0,\nicefrac{1}{L}]$, the following inequality holds:
    \begin{equation}
        \|x - \gamma\nabla f(x) - y + \gamma\nabla f(y)\|^2 \le (1-\gamma\mu)\|x-y\|^2. \label{eq:gd_contraction}
    \end{equation}
\end{lemma}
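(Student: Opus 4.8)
The plan is to prove the contraction by expanding the squared norm and reducing the claim to a combination of two standard first-order characterizations of $\nabla f$. Write $g = x-y$ and $h = \nabla f(x) - \nabla f(y)$, so the left-hand side of \eqref{eq:gd_contraction} is $\|g - \gamma h\|^2 = \|g\|^2 - 2\gamma\langle g, h\rangle + \gamma^2\|h\|^2$. Subtracting the target bound $(1-\gamma\mu)\|g\|^2$, cancelling $\|g\|^2$, and dividing through by the positive factor $\gamma$, the desired inequality becomes equivalent to
\begin{equation*}
    \mu\|g\|^2 - 2\langle g, h\rangle + \gamma\|h\|^2 \le 0.
\end{equation*}
Thus the whole statement reduces to lower-bounding the inner product $\langle g, h\rangle$ in two complementary ways.

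The two tools I would invoke are: (i) the \emph{strong monotonicity} of the gradient, $\langle g, h\rangle \ge \mu\|g\|^2$, obtained by writing the strong convexity inequality \eqref{eq:str_cvx_def} for the pairs $(x,y)$ and $(y,x)$ and adding them; and (ii) the \emph{co-coercivity} of the gradient, $\langle g, h\rangle \ge \frac{1}{L}\|h\|^2$, the standard consequence of convexity together with $L$-smoothness (Def.~\ref{def:L_smoothness}). Since $\mu$-strong convexity implies convexity (the case $\mu=0$ being ordinary convexity), both inequalities are available. I would then split the cross term as $-2\langle g, h\rangle = -\langle g, h\rangle - \langle g, h\rangle$ and use each bound once: strong monotonicity gives $\mu\|g\|^2 - \langle g, h\rangle \le 0$, while co-coercivity gives $\gamma\|h\|^2 - \langle g, h\rangle \le (\gamma - \frac{1}{L})\|h\|^2 \le 0$, where the final step is exactly where the stepsize restriction $\gamma \le 1/L$ enters. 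Summing these two displayed inequalities yields the reduced claim, and hence \eqref{eq:gd_contraction}.

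The computation is routine, so there is no deep obstacle; the only point requiring care is the bookkeeping in the reduction — splitting $-2\langle g,h\rangle$ symmetrically so that one copy absorbs the $\gamma\|h\|^2$ term (through co-coercivity and $\gamma \le 1/L$) while the other produces exactly the $\mu\|g\|^2$ slack (through strong monotonicity). If one prefers not to cite co-coercivity as a black box, I would derive it from \eqref{eq:L_smoothness_cor_2} applied to the shifted convex function $z \mapsto f(z) - \langle \nabla f(y), z\rangle$, which is minimized at $y$ and yields $\frac{1}{2L}\|h\|^2 \le f(x)-f(y)-\langle\nabla f(y),x-y\rangle$; adding the analogous bound with $x$ and $y$ interchanged gives $\frac{1}{L}\|h\|^2 \le \langle g,h\rangle$, and the remainder of the argument carries over unchanged.
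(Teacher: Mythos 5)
Your proof is correct: the reduction to $\mu\|g\|^2 - 2\langle g,h\rangle + \gamma\|h\|^2 \le 0$, handled by one application of strong monotonicity and one application of co-coercivity (with $\gamma \le \nicefrac{1}{L}$ absorbing the $\gamma\|h\|^2$ term), is exactly the standard argument. The paper itself does not prove this lemma — it imports it as Lemma~6 of \cite{karimireddy2020scaffold} — and your argument matches the proof given in that reference, so nothing further is needed; your optional derivation of co-coercivity from \eqref{eq:L_smoothness_cor_2} is also valid and makes the proof fully self-contained.
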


The next two lemmas are useful for estimating typical recurrences appearing in the analysis.
\begin{lemma}[Lemma~I.2 from \cite{gorbunov2020local}]\label{lem:lemma_i_2_gorbunov}
    Let $\{r_k\}_{k\ge 0}$ satisfy
    \begin{equation*}
        r_K \le \frac{a}{\gamma W_K} + c_1\gamma + c_2\gamma^2
    \end{equation*}
    for all $K \ge 0$ with some constants $a,c_2 \ge 0$, $c_1 \ge 0$, where $w_k = (1-\gamma\mu(1-\delta_{pv,1}))^{-(k+1)}$, $W_K = \sum_{k=0}^Kw_k$, $\mu > 0$, $\delta_{pv,1}\in [0,1)$ and $\gamma \le \gamma_0$ for some $\gamma_0 > 0$, $\gamma_0 \le \nicefrac{1}{\mu(1-\delta_{pv,1})}$. Then, for all $K$ such that
    \begin{align*}
        \text{either  } & \frac{\ln\left(\max\left\{2, \min\left\{\nicefrac{a\mu^2(1-\delta_{pv,1})^2K^2}{c_1},\nicefrac{a\mu^3(1-\delta_{pv,1})^3K^3}{c_2}\right\}\right\}\right)}{K} \le 1\\
        \text{or  } & \gamma_0 \le \frac{\ln\left(\max\left\{2, \min\left\{\nicefrac{a\mu^2(1-\delta_{pv,1})^2K^2}{c_1},\nicefrac{a\mu^3(1-\delta_{pv,1})^3K^3}{c_2}\right\}\right\}\right)}{(1-\delta_{pv,1})\mu K}
    \end{align*}
    and
    \begin{equation*}
        \gamma = \min\left\{\gamma_0, \frac{\ln\left(\max\left\{2, \min\left\{\nicefrac{a\mu^2(1-\delta_{pv,1})^2K^2}{c_1},\nicefrac{a\mu^3(1-\delta_{pv,1})^3K^3}{c_2}\right\}\right\}\right)}{(1-\delta_{pv,1})\mu K}\right\}
    \end{equation*}
    we have that
    \begin{equation*}
        r_K = \widetilde{\cO}\left(\frac{a}{\gamma_0}\exp\left(-\gamma_0\mu(1-\delta_{pv,1})K\right) + \frac{c_1}{(1-\delta_{pv,1})\mu K} + \frac{c_2}{(1-\delta_{pv,1})^2\mu^2 K^2}\right).
    \end{equation*}
\end{lemma}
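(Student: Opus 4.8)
The plan is to first collapse the weighted-average term $\frac{a}{\gamma W_K}$ into a clean exponential, and then run a standard stepsize-tuning argument on the resulting three-term bound. Write $\widehat{\mu} := \mu(1-\delta_{pv,1}) > 0$, so that $w_k = (1-\gamma\widehat{\mu})^{-(k+1)}$; the hypotheses $\gamma \le \gamma_0 \le 1/\widehat{\mu}$ ensure $\gamma\widehat{\mu} \in [0,1)$, so every $w_k$ is well defined and positive. Since $W_K = \sum_{k=0}^K w_k \ge w_K = (1-\gamma\widehat{\mu})^{-(K+1)}$ and $1-x \le e^{-x}$, I get $(1-\gamma\widehat{\mu})^{-(K+1)} \ge e^{\gamma\widehat{\mu}(K+1)} \ge e^{\gamma\widehat{\mu} K}$, hence
\[
\frac{a}{\gamma W_K} \le \frac{a}{\gamma}(1-\gamma\widehat{\mu})^{K+1} \le \frac{a}{\gamma}e^{-\gamma\widehat{\mu} K}.
\]
Substituting this into the hypothesis reduces the statement to bounding $r_K \le \frac{a}{\gamma}e^{-\gamma\widehat{\mu} K} + c_1\gamma + c_2\gamma^2$, the canonical form addressed by Stich-type tuning lemmas.

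Second, I would tune $\gamma$. The exponential term decreases in $\gamma$ while the penalties $c_1\gamma + c_2\gamma^2$ increase, so the balance sits at scale $\gamma \sim (\ln b)/(\widehat{\mu} K)$. Following the prescription in the statement, set $b := \max\{2, \min\{a\widehat{\mu}^2 K^2/c_1,\, a\widehat{\mu}^3 K^3/c_2\}\}$, let $\widehat{\gamma} := (\ln b)/(\widehat{\mu} K)$, and take $\gamma = \min\{\gamma_0, \widehat{\gamma}\}$; the $\max\{2,\cdot\}$ keeps $\ln b \ge \ln 2 > 0$, so $\widehat{\gamma} > 0$. The key identity is that at $\widehat{\gamma}$ one has $e^{-\widehat{\gamma}\widehat{\mu} K} = 1/b$, so $\frac{a}{\widehat{\gamma}}e^{-\widehat{\gamma}\widehat{\mu} K} = \frac{a\widehat{\mu} K}{b\ln b}$. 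Because $b$ is the \emph{minimum} of the two candidates, $\frac{a\widehat{\mu} K}{b} = \max\{\frac{c_1}{\widehat{\mu} K}, \frac{c_2}{\widehat{\mu}^2 K^2}\}$, and the whole term collapses to $\frac{1}{\ln b}\max\{\frac{c_1}{\widehat{\mu} K}, \frac{c_2}{\widehat{\mu}^2 K^2}\} = \widetilde{\cO}\!\left(\frac{c_1}{\widehat{\mu} K} + \frac{c_2}{\widehat{\mu}^2 K^2}\right)$.

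Third, I would split on the outer $\min$. If $\gamma = \widehat{\gamma} \le \gamma_0$, the exponential term is handled by the computation above, while the penalties $c_1\widehat{\gamma} + c_2\widehat{\gamma}^2 = \frac{c_1 \ln b}{\widehat{\mu} K} + \frac{c_2\ln^2 b}{\widehat{\mu}^2 K^2}$ reproduce the same two quantities up to the factor $\ln b$ that $\widetilde{\cO}(\cdot)$ absorbs; the extra nonnegative term $\frac{a}{\gamma_0}e^{-\gamma_0\widehat{\mu} K}$ appearing in the target may simply be added. If instead $\gamma = \gamma_0 < \widehat{\gamma}$, then the exponential term is literally $\frac{a}{\gamma_0}e^{-\gamma_0\widehat{\mu} K}$, and monotonicity gives $c_1\gamma_0 + c_2\gamma_0^2 \le c_1\widehat{\gamma} + c_2\widehat{\gamma}^2 = \widetilde{\cO}\!\left(\frac{c_1}{\widehat{\mu} K} + \frac{c_2}{\widehat{\mu}^2 K^2}\right)$. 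In either branch $r_K = \widetilde{\cO}\!\left(\frac{a}{\gamma_0}e^{-\gamma_0\widehat{\mu} K} + \frac{c_1}{\widehat{\mu} K} + \frac{c_2}{\widehat{\mu}^2 K^2}\right)$, which is the claim.

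The hard part is not any single inequality but the bookkeeping around the nested $\min/\max$ together with the verification that the chosen $\gamma$ is always an admissible stepsize. Concretely, one must confirm that the two alternative feasibility hypotheses --- \emph{either} $(\ln b)/K \le 1$ \emph{or} $\gamma_0 \le (\ln b)/(\widehat{\mu} K)$ --- jointly guarantee $0 < \gamma \le \gamma_0 \le 1/\widehat{\mu}$ in every case (the first forces $\widehat{\gamma} \le 1/\widehat{\mu}$, the second puts us directly in the $\gamma = \gamma_0$ branch), since $1-\gamma\widehat{\mu} > 0$ is what all the preceding algebra relies on. One must also check that the $\ln b$ factors, being logarithmic in $a, c_1, c_2, \widehat{\mu}, K$ and ultimately in $1/\varepsilon$ once the rate is inverted, are exactly of the order hidden by $\widetilde{\cO}$, and that identifying which term of the inner $\min$ is active never lets the exponential term exceed the larger of the two polynomial-decay terms.
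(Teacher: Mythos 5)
Your proof is correct, but there is nothing in the paper to compare it against: the paper states this lemma as a black-box import (Lemma~I.2 of \cite{gorbunov2020local}) and never proves it, using it only to convert the recursion bound \eqref{eq:str_cvx_bound_supp} into the iteration-complexity claim in the proof of Theorem~\ref{thm:cvx_convergence_supp}. Your argument is the standard Stich-type stepsize-tuning proof, which is also how the cited source establishes the result: bound $\frac{a}{\gamma W_K}\le\frac{a}{\gamma}(1-\gamma\widehat{\mu})^{K+1}\le\frac{a}{\gamma}e^{-\gamma\widehat{\mu}K}$ via $W_K\ge w_K$, then split on the active branch of $\gamma=\min\{\gamma_0,\widehat{\gamma}\}$. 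Two small points that you flag only in passing should be made explicit in a final write-up: the identity $\nicefrac{a\widehat{\mu}K}{b}=\max\left\{\nicefrac{c_1}{\widehat{\mu}K},\,\nicefrac{c_2}{\widehat{\mu}^2K^2}\right\}$ holds only when the inner $\min$ is at least $2$ (when $b=2$ it degrades to the inequality $\nicefrac{a\widehat{\mu}K}{2}<\max\{\cdot\}$, which still suffices), and the either/or hypotheses are precisely what guarantees $\gamma\widehat{\mu}\le 1$ for the chosen stepsize --- the first caps $\widehat{\gamma}$ at $\nicefrac{1}{\widehat{\mu}}$, the second forces the $\gamma=\gamma_0$ branch --- so that the weights $w_k$ are positive and the assumed recursion is applicable.
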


\begin{lemma}[Lemma~I.3 from \cite{gorbunov2020local}]\label{lem:lemma_i_3_gorbunov}
    Let $\{r_k\}_{k\ge 0}$ satisfy
    \begin{equation*}
        r_K \le \frac{a}{\gamma K} + c_1\gamma + c_2\gamma^2
    \end{equation*}
    for all $K \ge 0$ with some constants $a,c_2 \ge 0$, $c_1 \ge 0$ where $\gamma \le \gamma_0$ for some $\gamma_0 > 0$. Then for all $K$ and
    \begin{equation*}
        \gamma = \min\left\{\gamma_0, \sqrt{\frac{a}{c_1 K}}, \sqrt[3]{\frac{a}{c_2 K}}\right\}
    \end{equation*}
    we have that
    \begin{equation*}
        r_K = \cO\left(\frac{a}{\gamma_0 K} + \sqrt{\frac{ac_1}{K}} + \frac{\sqrt[3]{a^2c_2}}{K^{\nicefrac{2}{3}}}\right).
    \end{equation*}
\end{lemma}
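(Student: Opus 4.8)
The plan is to bound each of the three summands in the hypothesized inequality $r_K \le \frac{a}{\gamma K} + c_1\gamma + c_2\gamma^2$ separately, using that the prescribed stepsize is the minimum $\gamma = \min\{\gamma_0, \sqrt{\frac{a}{c_1 K}}, \sqrt[3]{\frac{a}{c_2 K}}\}$ of three quantities, each chosen precisely to balance the decaying term $\frac{a}{\gamma K}$ against one of the growing terms $c_1\gamma$ and $c_2\gamma^2$. Indeed, setting $\frac{a}{\gamma K} = c_1\gamma$ gives $\gamma = \sqrt{\frac{a}{c_1 K}}$ and setting $\frac{a}{\gamma K} = c_2\gamma^2$ gives $\gamma = \sqrt[3]{\frac{a}{c_2 K}}$, which is the source of the two non-trivial entries of the minimum.

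First I would dispatch the two terms in which $\gamma$ appears in the numerator. Because $\gamma \le \sqrt{\frac{a}{c_1 K}}$ by construction, multiplying by $c_1$ gives $c_1\gamma \le \sqrt{\frac{ac_1}{K}}$; likewise $\gamma \le \sqrt[3]{\frac{a}{c_2 K}}$ yields $c_2\gamma^2 \le \frac{\sqrt[3]{a^2 c_2}}{K^{\nicefrac{2}{3}}}$. If $c_1 = 0$ (resp. $c_2 = 0$) the corresponding entry of the minimum is read as $+\infty$ and imposes no constraint, but then the term $c_1\gamma$ (resp. $c_2\gamma^2$) is identically zero, so both bounds remain valid in these degenerate cases.

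The only term needing genuine care is $\frac{a}{\gamma K}$, since here $\gamma$ sits in the denominator. I would pass to reciprocals, writing $\frac{1}{\gamma} = \max\{\frac{1}{\gamma_0}, \sqrt{\frac{c_1 K}{a}}, \sqrt[3]{\frac{c_2 K}{a}}\}$, and then bound this maximum from above by the sum of its three arguments. Multiplying by $\frac{a}{K}$ and simplifying each of the three products term by term gives precisely $\frac{a}{\gamma K} \le \frac{a}{\gamma_0 K} + \sqrt{\frac{ac_1}{K}} + \frac{\sqrt[3]{a^2 c_2}}{K^{\nicefrac{2}{3}}}$.

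Adding the three estimates then produces $r_K \le \frac{a}{\gamma_0 K} + 2\sqrt{\frac{ac_1}{K}} + 2\frac{\sqrt[3]{a^2 c_2}}{K^{\nicefrac{2}{3}}}$, which is exactly the claimed $\cO(\cdot)$ bound up to the harmless absolute constant $2$. I anticipate no real difficulty beyond this bookkeeping; the only subtlety is the elementary $\max \le \text{sum}$ step for $\nicefrac{1}{\gamma}$ together with the degenerate cases $c_1 = 0$ or $c_2 = 0$, all of which are routine. (I note this is the additive-error analogue of Lemma~\ref{lem:lemma_i_2_gorbunov}, where the $\nicefrac{1}{(\gamma W_K)}$ structure forces the exponential-decay term instead of the $\nicefrac{1}{K}$ term appearing here.)
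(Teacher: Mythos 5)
Your proof is correct. The paper itself never proves this statement---it is imported verbatim as Lemma~I.3 of \cite{gorbunov2020local}---and your argument (using the min to get $c_1\gamma \le \sqrt{\nicefrac{ac_1}{K}}$ and $c_2\gamma^2 \le \nicefrac{\sqrt[3]{a^2c_2}}{K^{\nicefrac{2}{3}}}$, then bounding $\nicefrac{1}{\gamma} = \max\{\cdot\}$ by the sum of its three arguments to handle $\nicefrac{a}{(\gamma K)}$) is exactly the standard argument used in that reference, so there is nothing to correct or add.
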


Finally, the lemma below is useful for our convergence analysis in the non-convex case.
\begin{lemma}[Lemma~I.1 from \cite{gorbunov2020local}]\label{lem:lemma_i_1_gorbunov}
	For any $\tau$ random vectors $\xi_1,\ldots,\xi_\tau\in\R^d$ such that $\forall t=2,\ldots,\tau$ the random vector $\xi_t$ depends on $\xi_{1},\ldots,\xi_{t-1}$ and does not depend on $\xi_{t+1},\ldots,\xi_{\tau}$ the following inequality holds
	\begin{equation}
		\EE\left[\left\|\sum\limits_{t=1}^\tau\xi_t\right\|^2\right] \le e\tau\sum\limits_{t=1}^\tau\EE\left[\left\|\EE_t[\xi_{t}]\right\|^2\right] + e\sum\limits_{t=1}^\tau\EE\left[\left\|\xi_t-\EE_t[\xi_{t}]\right\|^2\right], \label{eq:lemma_i_1_gorbunov}
	\end{equation}
	where $\EE_t[\cdot]$ denotes the conditional expectation $\EE[\ \cdot\mid \xi_{t-1},\ldots,\xi_1]$.
\end{lemma}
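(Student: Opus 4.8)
The plan is to prove the inequality by a telescoping argument on the partial sums $S_t := \sum_{j=1}^t \xi_j$ (with $S_0 = 0$), rather than by splitting the entire sum into a predictable part $\sum_t \EE_t[\xi_t]$ and a martingale part $\sum_t(\xi_t - \EE_t[\xi_t])$ at once. The reason for processing the sum increment by increment is that the cross terms between $\EE_t[\xi_t]$ and the increments $\xi_s - \EE_s[\xi_s]$ need not vanish when $s < t$, since both are then measurable with respect to $\xi_1,\ldots,\xi_{t-1}$. A single global split would therefore leave genuinely correlated terms; absorbing these correlations one step at a time through a tuned Young's inequality is exactly what generates the constant $e$.

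First I would expand, for each $t \ge 1$,
\begin{equation*}
\|S_t\|^2 = \|S_{t-1}\|^2 + 2\langle S_{t-1}, \xi_t\rangle + \|\xi_t\|^2,
\end{equation*}
and apply the conditional expectation $\EE_t[\cdot] = \EE[\cdot \mid \xi_{t-1},\ldots,\xi_1]$. Since $S_{t-1}$ is a function of $\xi_1,\ldots,\xi_{t-1}$ only, it is constant under $\EE_t$, so $\EE_t[\langle S_{t-1}, \xi_t\rangle] = \langle S_{t-1}, \EE_t[\xi_t]\rangle$, while the variance decomposition \eqref{eq:variance_decomposition} yields $\EE_t[\|\xi_t\|^2] = \|\EE_t[\xi_t]\|^2 + \EE_t[\|\xi_t - \EE_t[\xi_t]\|^2]$. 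Bounding the cross term with Young's inequality \eqref{eq:young_inequality} at parameter $\beta = \tfrac{1}{\tau-1}$ (assuming $\tau \ge 2$; the case $\tau = 1$ is immediate from \eqref{eq:variance_decomposition}) gives $2\langle S_{t-1}, \EE_t[\xi_t]\rangle \le \beta\|S_{t-1}\|^2 + \beta^{-1}\|\EE_t[\xi_t]\|^2$, and hence the recursion
\begin{equation*}
\EE_t[\|S_t\|^2] \le (1+\beta)\|S_{t-1}\|^2 + (1+\beta^{-1})\|\EE_t[\xi_t]\|^2 + \EE_t[\|\xi_t - \EE_t[\xi_t]\|^2].
\end{equation*}

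Next I would take full expectations via the tower property \eqref{eq:tower_property} and unroll the recursion from $t = \tau$ down to $t = 1$, using $S_0 = 0$. This produces a weighted sum in which the $t$-th source term is multiplied by $\prod_{j=t+1}^{\tau}(1+\beta) = (1+\beta)^{\tau-t}$. I would then bound every such product uniformly by $(1+\beta)^{\tau-1} = \big(1+\tfrac{1}{\tau-1}\big)^{\tau-1} \le e$ and substitute the identity $1+\beta^{-1} = \tau$, which collapses the recursion exactly into $e\tau\sum_{t=1}^\tau \EE[\|\EE_t[\xi_t]\|^2] + e\sum_{t=1}^\tau \EE[\|\xi_t-\EE_t[\xi_t]\|^2]$, as claimed.

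The main obstacle is the tuning of the Young parameter $\beta$: one wants it small enough that the accumulated product $(1+\beta)^{\tau-1}$ remains bounded by the dimensionless constant $e$, yet large enough that the compensating factor $1+\beta^{-1}$ is only $\tau$ rather than something growing. The choice $\beta = 1/(\tau-1)$ is precisely the balance point that makes both requirements hold simultaneously, and verifying the product bound $\big(1+\tfrac{1}{\tau-1}\big)^{\tau-1} \le e$ is the one genuinely quantitative step; the remaining manipulations are routine applications of the basic facts already recorded in the excerpt.
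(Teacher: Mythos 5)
Your proof is correct and complete: the one-step expansion of $\|S_t\|^2$, the conditional variance decomposition, Young's inequality with $\beta=\tfrac{1}{\tau-1}$, the unrolling with weights $(1+\beta)^{\tau-t}$, and the bound $\bigl(1+\tfrac{1}{\tau-1}\bigr)^{\tau-1}\le e$ together give exactly \eqref{eq:lemma_i_1_gorbunov}, with the $\tau=1$ case handled separately as you note. Be aware, however, that the paper itself does not prove this statement at all: it is imported verbatim as an auxiliary result (Lemma~I.1 of \cite{gorbunov2020local}) and used as a black box, so there is no in-paper proof to compare against; your recursive argument is essentially the standard proof from that reference.

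One substantive correction to your motivating paragraph: the reason you give for rejecting the ``global split'' is mistaken, and in fact that split works and yields a \emph{better} constant. Writing $\sum_{t=1}^\tau\xi_t = P + M$ with $P=\sum_{t=1}^\tau\EE_t[\xi_t]$ and $M=\sum_{t=1}^\tau(\xi_t-\EE_t[\xi_t])$, inequality \eqref{eq:a+b} gives $\EE\left[\|P+M\|^2\right]\le 2\EE\left[\|P\|^2\right]+2\EE\left[\|M\|^2\right]$, so the cross terms between $\EE_t[\xi_t]$ and $\xi_s-\EE_s[\xi_s]$ that you worry about never need to be controlled. Then \eqref{eq:jensen_ineq} gives $\EE\left[\|P\|^2\right]\le \tau\sum_{t=1}^\tau\EE\left[\|\EE_t[\xi_t]\|^2\right]$, while for $M$ the cross terms genuinely vanish: for $s<t$ the vector $\xi_s-\EE_s[\xi_s]$ is measurable with respect to $\xi_1,\ldots,\xi_{t-1}$, hence $\EE\left[\langle \xi_s-\EE_s[\xi_s],\,\xi_t-\EE_t[\xi_t]\rangle\right]=\EE\left[\langle \xi_s-\EE_s[\xi_s],\,\EE_t\left[\xi_t-\EE_t[\xi_t]\right]\rangle\right]=0$ by the tower property \eqref{eq:tower_property}, so $\EE\left[\|M\|^2\right]=\sum_{t=1}^\tau\EE\left[\|\xi_t-\EE_t[\xi_t]\|^2\right]$. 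This proves the lemma with the constant $2$ in place of $e$, a strictly stronger statement. So the constant $e$ is an artifact of the recursive proof, not a necessity, and your claim that a single global split ``would leave genuinely correlated terms'' should be dropped.
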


\subsection{Convex Case}
In this section, we give the full proof of Theorem~\ref{thm:cvx_convergence} about the convergence of Moshpit SGD for convex and strongly convex problems. The scheme of the proof follows the similar steps as in the state-of-the-art analysis of Local-SGD \cite{khaled2020tighter,woodworth2020local,gorbunov2020local}. We start with the following lemma:
\begin{lemma}\label{lem:key_lemma_cvx}
    Let $f_1 = \ldots = f_N = f$, function $f$ be $\mu$-strongly convex (Def.~\ref{def:str_cvx}) and $L$-smooth (see Def.~\ref{def:L_smoothness}), and Assumptions~\ref{as:bounded_var}~and~\ref{as:averaging_quality} hold with $\Delta_{pv}^k = \delta_{pv,1}\gamma\mu\EE[\|\theta^k-\theta^*\|^2] + \gamma^2\delta_{pv,2}^2$ and $\widetilde{\theta} = \theta^*$, where $\theta^* \in \argmin_{\theta\in\R^n} f(\theta)$ and $\delta_{pv,1}\in [0,1)$, $\delta_{pv,2}\ge 0$. Then, for any $k \ge 0$ the iterates produced by Moshpit SGD with $\gamma \le \nicefrac{1}{4L}$ satisfy
    \begin{eqnarray}
        \gamma\EE\left[f(\theta^k) - f(\theta^*)\right] &\le& (1-\gamma\mu(1-\delta_{pv,1}))\EE\left[\|\theta^k - \theta^*\|^2\right] - \EE\left[\|\theta^{k+1} - \theta^*\|^2\right]\notag\\
        &&\quad+ \frac{3L\gamma}{2}\EE[V_k] + \gamma^2\left(\frac{\sigma^2}{N_{\min}} + \delta_{pv,2}^2\right),\label{eq:key_lemma_cvx}
    \end{eqnarray}
    where $V_k = \frac{1}{N_k}\sum_{i\in P_k}\|\theta_i^k - \theta^k\|^2$ and $\theta^k = \frac{1}{N_k}\sum_{i\in P_k}\theta_i^k$.
\end{lemma}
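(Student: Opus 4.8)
The plan is to reduce the statement to a single-step descent inequality for the virtual averaged iterate $\theta^k$ and then absorb the effect of vanishing peers through Assumption~\ref{as:averaging_quality}, following the template of the state-of-the-art Local-SGD analysis. The starting observation is that the inner product in \eqref{eq:stationary_avg_almost}, taken with $\widetilde{\theta}=\theta^*$, is exactly the algebraic identity $\langle\theta^{k+1}-\widehat{\theta}^{k+1},\theta^{k+1}+\widehat{\theta}^{k+1}-2\theta^*\rangle=\|\theta^{k+1}-\theta^*\|^2-\|\widehat{\theta}^{k+1}-\theta^*\|^2$. Hence \eqref{eq:stationary_avg_almost} yields $\EE[\|\theta^{k+1}-\theta^*\|^2]\le\EE[\|\widehat{\theta}^{k+1}-\theta^*\|^2]+\Delta_{pv}^k$, so it suffices to prove the core bound
\begin{equation*}
\EE[\|\widehat{\theta}^{k+1}-\theta^*\|^2]\le(1-\gamma\mu)\EE[\|\theta^k-\theta^*\|^2]-\gamma\EE[f(\theta^k)-f(\theta^*)]+\tfrac{3L\gamma}{2}\EE[V_k]+\tfrac{\gamma^2\sigma^2}{N_{\min}},
\end{equation*}
after which substituting $\Delta_{pv}^k=\delta_{pv,1}\gamma\mu\EE[\|\theta^k-\theta^*\|^2]+\gamma^2\delta_{pv,2}^2$ and using $(1-\gamma\mu)+\delta_{pv,1}\gamma\mu=1-\gamma\mu(1-\delta_{pv,1})$ recovers \eqref{eq:key_lemma_cvx} after rearranging.

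To establish this core bound I would write $\widehat{\theta}^{k+1}=\theta^k-\gamma\overline{g}^k$ with $\overline{g}^k=\frac{1}{N_k}\sum_{i\in P_k}g_i^k$ and first split off the stochastic noise. Conditioning on the iterates and using Assumption~\ref{as:bounded_var} together with the conditional independence of the $g_i^k$ across $i$, the variance decomposition \eqref{eq:variance_decomposition} gives $\EE_k[\|\widehat{\theta}^{k+1}-\theta^*\|^2]\le\|\theta^k-\gamma\overline{\nabla f}^k-\theta^*\|^2+\gamma^2\sigma^2/N_{\min}$, where $\overline{\nabla f}^k=\frac{1}{N_k}\sum_{i\in P_k}\nabla f(\theta_i^k)$ and the factor $1/N_{\min}$ comes from averaging $N_k\ge N_{\min}$ independent noise terms. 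It then remains to control the deterministic quantity. Expanding the square, the whole argument hinges on lower-bounding the cross term $\langle\overline{\nabla f}^k,\theta^k-\theta^*\rangle$: I would add and subtract $\theta_i^k$ inside $\theta^k-\theta^*$, apply $\mu$-strong convexity \eqref{eq:str_cvx_def} to each $\langle\nabla f(\theta_i^k),\theta_i^k-\theta^*\rangle$ and $L$-smoothness \eqref{eq:L_smoothness_cor} to each $\langle\nabla f(\theta_i^k),\theta^k-\theta_i^k\rangle$; the $f(\theta_i^k)$ contributions cancel, and after using the identity $\frac{1}{N_k}\sum_{i\in P_k}\|\theta_i^k-\theta^*\|^2=\|\theta^k-\theta^*\|^2+V_k$ one is left with the lower bound $f(\theta^k)-f(\theta^*)+\frac{\mu}{2}\|\theta^k-\theta^*\|^2-\frac{L}{2}V_k$. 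The quadratic term $\|\overline{\nabla f}^k\|^2$ is handled via Jensen \eqref{eq:jensen_ineq}, the inequality \eqref{eq:L_smoothness_cor_2}, and one further smoothness step transferring $\frac{1}{N_k}\sum_{i\in P_k}f(\theta_i^k)$ to $f(\theta^k)+\frac{L}{2}V_k$, yielding $\|\overline{\nabla f}^k\|^2\le 2L(f(\theta^k)-f(\theta^*))+L^2V_k$.

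Assembling these into $\|\theta^k-\theta^*\|^2-2\gamma\langle\overline{\nabla f}^k,\theta^k-\theta^*\rangle+\gamma^2\|\overline{\nabla f}^k\|^2$ produces $(1-\gamma\mu)\|\theta^k-\theta^*\|^2-2\gamma(1-\gamma L)(f(\theta^k)-f(\theta^*))+\gamma L(1+\gamma L)V_k$, and the stepsize restriction $\gamma\le 1/(4L)$ makes $2(1-\gamma L)\ge\frac{3}{2}\ge 1$ and $\gamma L(1+\gamma L)\le\frac{3}{2}\gamma L$, which is precisely what the target constants $\gamma$ and $\frac{3L\gamma}{2}$ require (taking full expectation and using $N_k\ge N_{\min}$ throughout). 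I expect the main obstacle to be this deterministic cross-term step: the gradients are evaluated at the \emph{local} points $\theta_i^k$ whereas the descent guarantee must be phrased at the \emph{average} point $\theta^k$, so the simultaneous use of strong convexity at the local points and smoothness to migrate to the average, the cancellation of the $f(\theta_i^k)$ terms, and the controlled emergence of the consensus error $V_k$ is the delicate bookkeeping that makes the bound close. Everything else reduces to the elementary facts collected in Section~\ref{sect:basic_facts}.
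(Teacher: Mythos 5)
Your proposal is correct and takes essentially the same route as the paper's own proof: you use the same algebraic identity to turn the inner product from Assumption~\ref{as:averaging_quality} into $\|\theta^{k+1}-\theta^*\|^2-\|\widehat{\theta}^{k+1}-\theta^*\|^2$, the same variance decomposition with conditional independence to extract the $\nicefrac{\gamma^2\sigma^2}{N_{\min}}$ noise term, and the same strong-convexity-plus-smoothness bookkeeping at the local points $\theta_i^k$ (with the cancellation of the $f(\theta_i^k)$ terms) to handle the cross term. The only deviation is in bounding the squared norm of the averaged gradient, where you chain Jensen, \eqref{eq:L_smoothness_cor_2} at the local points, and a function-value transfer to obtain $2L\left(f(\theta^k)-f(\theta^*)\right)+L^2V_k$, whereas the paper splits each $\nabla f(\theta_i^k)$ into $\left(\nabla f(\theta_i^k)-\nabla f(\theta^k)\right)+\nabla f(\theta^k)$ via \eqref{eq:a+b} and obtains $4L\left(f(\theta^k)-f(\theta^*)\right)+2L^2V_k$; your constants are in fact slightly tighter, and both versions fit comfortably within the $\gamma\le\nicefrac{1}{4L}$ stepsize budget to yield the stated coefficients $\gamma$ and $\nicefrac{3L\gamma}{2}$.
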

\begin{proof}
Recall that Assumption~\ref{as:averaging_quality} with $\Delta_{pv}^k = \delta_{pv,1}\gamma\mu\EE[\|\theta^k-\theta^*\|^2] + \gamma^2\delta_{pv,2}^2$ and $\widetilde{\theta} = \theta^*$ states
\begin{equation}
    \EE\left[\langle\theta^{k+1} - \widehat{\theta}^{k+1}, \theta^{k+1}+\widehat{\theta}^{k+1} - 2\theta^*\rangle\right] \le \delta_{pv,1}\gamma\mu\EE[\|\theta^k-\theta^*\|^2] + \gamma^2\delta_{pv,2}^2, \label{eq:key_lemma_cvx_tech_1}
\end{equation}
where $\widehat \theta^{k+1} = \frac{1}{N_{k}}\sum_{i\in P_{k}}(\theta_i^{k}-\gamma g_i^k)$. Next, the definition of $\widehat \theta^{k+1}$ implies
\begin{equation}
    \widehat \theta^{k+1} = \frac{1}{N_k}\sum\limits_{i\in P_{k}}\theta_i^{k} - \frac{\gamma}{N_k}\sum\limits_{i\in P_{k}} g_i^k = \theta^k - \gamma g^k,\notag
\end{equation}
where $g^k = \frac{1}{N_k}\sum_{i\in P_k}g_i^k$. Using this, we derive
\begin{eqnarray}
    \|\theta^{k+1} - \theta^*\|^2 &=& \|\widehat{\theta}^{k+1} - \theta^*\|^2 + 2\langle \theta^{k+1} - \widehat{\theta}^{k+1}, \widehat{\theta}^{k+1} - \theta^* \rangle + \|\theta^{k+1} - \widehat{\theta}^{k+1}\|^2\notag\\
    &=& \|\theta^k - \theta^* - \gamma g^k\|^2 +  \langle\theta^{k+1} - \widehat{\theta}^{k+1}, \theta^{k+1}+\widehat{\theta}^{k+1} - 2\theta^*\rangle \notag\\
    &=& \|\theta^k - \theta^*\|^2 -2\gamma\langle\theta^k - \theta^*, g^k\rangle + \gamma^2\|g^k\|^2\notag\\
    &&\quad +  \langle\theta^{k+1} - \widehat{\theta}^{k+1}, \theta^{k+1}+\widehat{\theta}^{k+1} - 2\theta^*\rangle. \notag
\end{eqnarray}
Taking the conditional expectation $\EE\left[\ \cdot \mid \theta^k\right] := \EE\left[\ \cdot \mid P_k, \theta_i^k, i\in P_k\right]$ from the both sides of the previous equation and using Assumption~\ref{as:bounded_var}, we obtain
\begin{eqnarray}
    \EE\left[\|\theta^{k+1} - \theta^*\|^2\mid \theta^k\right] &=& \|\theta^k - \theta^*\|^2 -2\gamma\left\langle\theta^k - \theta^*, \frac{1}{N_k}\sum\limits_{i\in P_k}\nabla f(\theta_i^k)\right\rangle\notag\\
    &&\quad + \gamma^2\EE\left[\left\|\frac{1}{N_k}\sum\limits_{i\in P_k}g_i^k\right\|^2\mid \theta^k\right] \notag\\
    &&\quad +  \EE\left[\langle\theta^{k+1} - \widehat{\theta}^{k+1}, \theta^{k+1}+\widehat{\theta}^{k+1} - 2\theta^*\rangle\mid \theta^k\right]. \label{eq:key_lemma_cvx_tech_2}
\end{eqnarray}
Next, we estimate the second and the third terms in the right-hand side of \eqref{eq:key_lemma_cvx_tech_2}. First,
\begin{eqnarray}
    -2\gamma\left\langle\theta^k - \theta^*, \frac{1}{N_k}\sum\limits_{i\in P_k}\nabla f(\theta_i^k)\right\rangle &=& \frac{2\gamma}{N_k}\sum\limits_{i\in P_k}\left(\langle\theta^* - \theta_i^k, \nabla f(\theta_i^k) \rangle + \langle\theta_i^k - \theta^k, \nabla f(\theta_i^k) \rangle \right)\notag\\
    &\overset{\eqref{eq:str_cvx_def},\eqref{eq:L_smoothness_cor}}{\le}& \frac{2\gamma}{N_k}\sum\limits_{i\in P_k}\left( f(\theta^*) - f(\theta_i^k) - \frac{\mu}{2}\|\theta_i^k - \theta^*\|^2\right)\notag\\
    &&\quad + \frac{2\gamma}{N_k}\sum\limits_{i\in P_k}\left(f(\theta_i^k) - f(\theta^k) + \frac{L}{2}\|\theta_i^k - \theta^k\|^2\right)\notag\\
    &\overset{\eqref{eq:jensen_ineq}}{\le}& 2\gamma\left(f(\theta^*) - f(\theta^k)\right) -\gamma\mu\|\theta^k - \theta^*\|^2 + L\gamma V_k, \label{eq:key_lemma_cvx_tech_3}
\end{eqnarray}
where $V_k = \frac{1}{N_k}\sum_{i\in P_k}\|\theta_i^k - \theta^k\|^2$. Secondly, since stochastic gradients $\{g_i^k\}_{i\in P_k}$ are computed independently, we get
\begin{eqnarray}
    \gamma^2\EE\left[\left\|\frac{1}{N_k}\sum\limits_{i\in P_k}g_i^k\right\|^2\mid \theta^k\right] &\overset{\eqref{eq:variance_decomposition}}{=}& \gamma^2\left\|\frac{1}{N_k}\sum\limits_{i\in P_k}\nabla f(\theta_i^k)\right\|^2\notag\\
    &&\quad + \gamma^2\EE\left[\left\|\frac{1}{N_k}\sum\limits_{i\in P_k}(g_i^k-\nabla f(\theta_i^k))\right\|^2\mid \theta^k\right]\notag\\
    &\overset{\eqref{eq:jensen_ineq}}{\le}& 2\gamma^2 \left\|\frac{1}{N_k}\sum\limits_{i\in P_k}(\nabla f(\theta_i^k)-\nabla f(\theta^k))\right\|^2 + 2\gamma^2\|\nabla f(\theta^k)\|^2 \notag\\
    &&\quad + \frac{\gamma^2}{N_k^2}\sum\limits_{i\in P_k}\EE\left[\|g_i^k - \nabla f(\theta_i^k)\|^2\mid \theta^k\right]\notag\\
    &\overset{\eqref{eq:jensen_ineq},\eqref{eq:L_smoothness_cor_2},\eqref{eq:bounded_variance}}{\le}& \frac{2\gamma^2}{N_k}\sum\limits_{i\in P_k}\|\nabla f(\theta_i^k)-\nabla f(\theta^k)\|^2 \notag\\
    &&\quad + 4L\gamma^2\left(f(\theta^k) - f(\theta^*)\right) + \frac{\gamma^2\sigma^2}{N_k}\notag\\
    &\overset{\eqref{eq:L_smoothness_def}}{\le}& \underbrace{\frac{2L^2\gamma^2}{N_k}\sum\limits_{i\in P_k}\|\theta_i^k - \theta^k\|^2}_{2L^2\gamma^2 V_k}\notag\\
    &&\quad + 4L\gamma^2\left(f(\theta^k) - f(\theta^*)\right) + \frac{\gamma^2\sigma^2}{N_{\min}}. \label{eq:key_lemma_cvx_tech_4}
\end{eqnarray}
Plugging \eqref{eq:key_lemma_cvx_tech_3} and \eqref{eq:key_lemma_cvx_tech_4} in \eqref{eq:key_lemma_cvx_tech_2}, we obtain
\begin{eqnarray}
    \EE\left[\|\theta^{k+1} - \theta^*\|^2\mid \theta^k\right] &\le& (1-\gamma\mu)\|\theta^k - \theta^*\|^2 - 2\gamma\left(1 - 2L\gamma\right)\left(f(\theta^k) - f(\theta^*)\right)\notag\\
    &&\quad + L\gamma\left(1+2L\gamma\right)V_k + \frac{\gamma^2\sigma^2}{N_{\min}} \notag\\
    &&\quad +  \EE\left[\langle\theta^{k+1} - \widehat{\theta}^{k+1}, \theta^{k+1}+\widehat{\theta}^{k+1} - 2\theta^*\rangle\mid \theta^k\right], \notag
\end{eqnarray}
and
\begin{eqnarray}
    \EE\left[\|\theta^{k+1} - \theta^*\|^2\right] &\overset{\eqref{eq:key_lemma_cvx_tech_1}}{\le}& (1-\gamma\mu(1-\delta_{pv,1}))\EE\left[\|\theta^k - \theta^*\|^2\right] - 2\gamma\left(1 - 2L\gamma\right)\EE\left[f(\theta^k) - f(\theta^*)\right]\notag\\
    &&\quad+ L\gamma\left(1+2L\gamma\right)\EE[V_k] + \gamma^2\left(\frac{\sigma^2}{N_{\min}} + \delta_{pv,2}^2\right)\notag\\
    &\le& (1-\gamma\mu(1-\delta_{pv,1}))\EE\left[\|\theta^k - \theta^*\|^2\right] - \gamma\EE\left[f(\theta^k) - f(\theta^*)\right]\notag\\
    &&\quad+ \frac{3L\gamma}{2}\EE[V_k] + \gamma^2\left(\frac{\sigma^2}{N_{\min}} + \delta_{pv,2}^2\right),\notag
\end{eqnarray}
where in the last inequality we use $\gamma \le \nicefrac{1}{4L}$.
\end{proof}

Next, we estimate the term $\EE[V_k]$ measuring the expected dissimilarity between local iterates and their global average at iteration $k$.

\begin{lemma}\label{lem:V_k_lemma_cvx}
    Let $f_1 = \ldots = f_N = f$, function $f$ be $\mu$-strongly convex (Def.~\ref{def:str_cvx}) and $L$-smooth (see Def.~\ref{def:L_smoothness}), and Assumptions~\ref{as:bounded_var}~and~\ref{as:averaging_quality} hold with $\Delta_{pv}^k = \delta_{pv,1}\gamma\mu\EE[\|\theta^k-\theta^*\|^2] + \gamma^2\delta_{pv,2}^2$ and $\widetilde{\theta} = \theta^*$, where $\theta^* \in \argmin_{\theta\in\R^n} f(\theta)$ and $\delta_{pv,1}\in [0,1)$, $\delta_{pv,2}\ge 0$. Then, for any $k \ge 0$ the iterates produced by Moshpit SGD with $\gamma \le \nicefrac{1}{4L}$ satisfy
    \begin{equation}
        \EE[V_k] \le 2\gamma^2\left(4\delta_{aq}^2 + (\tau-1)\sigma^2\right), \label{eq:V_k_bound_cvx}
    \end{equation}
    where $V_k = \frac{1}{N_k}\sum_{i\in P_k}\|\theta_i^k - \theta^k\|^2$ and $\theta^k = \frac{1}{N_k}\sum_{i\in P_k}\theta_i^k$.
\end{lemma}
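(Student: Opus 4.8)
The plan is to control the local-iterate dispersion $V_k$ by reducing it to a fixed reference frame and then running a one-step variance recursion across the local phase. Write $a=\lfloor k/\tau\rfloor$, so that $a\tau$ is the most recent synchronization index and $k-a\tau\le\tau-1$. The subtle point is that the averaging set $P_k$ shrinks as peers vanish, so I would first remove both the moving reference $\theta^k$ and the moving index set. Since $\theta^k$ is the mean of $\{\theta_i^k\}_{i\in P_k}$, the minimizer property gives $V_k\le\frac{1}{N_k}\sum_{i\in P_k}\|\theta_i^k-c\|^2$ for any $c$. I would extend the local recursion $\theta_i^{l+1}=\theta_i^l-\gamma g_i^l$ virtually to every $i\in P_{a\tau}$ (continuing the trajectories of peers that vanish before reaching $k$), drop the nonnegative terms for indices in $P_{a\tau}\setminus P_k$, and use $N_{a\tau}\le 2N_k$ (which follows from the nesting $P_{(a+1)\tau}\subseteq P_k\subseteq P_{a\tau}$ together with the peer-count control in Assumption~\ref{as:averaging_quality}). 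This yields $\EE[V_k]\le 2\,\EE[\widetilde V_k]$, where $\widetilde V_k=\frac{1}{N_{a\tau}}\sum_{i\in P_{a\tau}}\|\theta_i^k-\bar\theta_k\|^2$ is the genuine variance of the virtual fixed-set system about its own running mean $\bar\theta_k=\frac{1}{N_{a\tau}}\sum_{i\in P_{a\tau}}\theta_i^k$.

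Next I would establish a one-step recursion for $\widetilde V_k$ during a local step. Writing each stochastic gradient as $g_i^l=\nabla f(\theta_i^l)+\xi_i^l$ with $\EE[\xi_i^l\mid\theta_i^l]=0$ and $\EE\|\xi_i^l\|^2\le\sigma^2$, the update splits as the deterministic map $T=\mathrm{Id}-\gamma\nabla f$ applied to each $\theta_i^l$ plus the independent, mean-zero perturbation $-\gamma\xi_i^l$. Here the homogeneity $f_1=\dots=f_N=f$ is essential: every peer applies the \emph{same} map $T$, and by Lemma~\ref{lem:gd_contraction} (applied pairwise) $T$ satisfies $\|T\theta_i^l-T\theta_j^l\|^2\le(1-\gamma\mu)\|\theta_i^l-\theta_j^l\|^2$ for $\gamma\le\nicefrac{1}{L}$, so the variance of $\{T\theta_i^l\}$ about its mean is at most $(1-\gamma\mu)\le 1$ times the variance of $\{\theta_i^l\}$. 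Because the noises are mean-zero and independent across peers, the cross term vanishes in conditional expectation and the noise adds at most $\gamma^2\sigma^2$ (using $\frac1N\sum_i\|\xi_i-\bar\xi\|^2\le\frac1N\sum_i\|\xi_i\|^2\le\sigma^2$). This gives $\EE[\widetilde V_{l+1}\mid\mathcal F_l]\le(1-\gamma\mu)\widetilde V_l+\gamma^2\sigma^2$.

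Finally I would unroll this recursion over the at most $\tau-1$ local steps separating $k$ from $a\tau$. Since $1-\gamma\mu\le1$ the geometric factors only help, so $\EE[\widetilde V_k]\le\EE[\widetilde V_{a\tau}]+(\tau-1)\gamma^2\sigma^2$. At the synchronization index the virtual and real systems coincide, $\widetilde V_{a\tau}=V_{a\tau}$, and eq.~\eqref{eq:quality_of_avg} in Assumption~\ref{as:averaging_quality} gives $\EE[V_{a\tau}]\le\gamma^2\delta_{aq}^2$. Combining with $\EE[V_k]\le2\,\EE[\widetilde V_k]$ yields $\EE[V_k]\le2\gamma^2(\delta_{aq}^2+(\tau-1)\sigma^2)$, comfortably inside the claimed $2\gamma^2(4\delta_{aq}^2+(\tau-1)\sigma^2)$; the looser constant on $\delta_{aq}^2$ leaves room for a coarser handling of the vanishing step, e.g. an extra variance decomposition relating $\theta^k$ to the virtual mean. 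I expect the main obstacle to be exactly this bookkeeping for vanishing peers — fixing the reference point, justifying the virtual continuation, and extracting $N_{a\tau}/N_k\le2$ — whereas the gradient signal, which would normally dominate a Local-SGD drift estimate, contributes nothing thanks to the nonexpansiveness of $T$ in the homogeneous setting.
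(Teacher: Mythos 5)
Your proof is correct, and it rests on the same core ingredients as the paper's: the one-step contraction of the gradient map (Lemma~\ref{lem:gd_contraction}), the $\gamma^2\sigma^2$-per-step noise accumulation over at most $\tau-1$ local steps, the peer-count control $N_{a\tau}\le 2N_k$ coming from the nesting $P_{(a+1)\tau}\subseteq P_k\subseteq P_{a\tau}$, and the anchor $\EE[V_{a\tau}]\le\gamma^2\delta_{aq}^2$ from \eqref{eq:quality_of_avg}. The difference is in the object the recursion tracks. The paper runs the recursion on \emph{pairwise} distances $\EE\|\theta_i^k-\theta_j^k\|^2$ for $i,j\in P_k$: since peers alive at step $k$ were alive throughout the phase (by nesting), no extension of trajectories is needed; the price is a conversion from pairwise distances back to deviations-from-mean at time $a\tau$ via $\|a+b\|^2\le 2\|a\|^2+2\|b\|^2$, which together with the index-set change costs a factor $8$ on $\delta_{aq}^2$. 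You instead run the recursion on the variance of a \emph{virtually continued} system on the fixed index set $P_{a\tau}$, transferring to the real $V_k$ by a pointwise inequality (minimizer property of the mean, adding nonnegative terms, $N_{a\tau}/N_k\le 2$). This is cleaner bookkeeping and yields the tighter bound $2\gamma^2(\delta_{aq}^2+(\tau-1)\sigma^2)$, but it requires the extra (standard, and explicitly flagged by you) device of continuing vanished peers' trajectories with fictitious stochastic gradients satisfying Assumption~\ref{as:bounded_var}; the paper's pairwise route avoids this entirely. Both arguments verify the claimed inequality \eqref{eq:V_k_bound_cvx}, yours with room to spare.
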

\begin{proof}
    First of all, if $k = a\tau$ for some integer $a\ge 0$, then \eqref{eq:V_k_bound_cvx} follows from Assumption~\ref{as:averaging_quality} (eq.~\eqref{eq:quality_of_avg}). Therefore, we consider such $k$ that $k = a\tau + t'$ for some $t'\in (0,\tau)$. Then, for any $i,j \in P_{k}$, $i\neq j$
    \begin{eqnarray*}
        \EE\left[\|\theta_i^k - \theta_j^k\|^2\mid \theta^{k-1}\right] &=& \EE\left[\|\theta_i^{k-1} - \gamma g_i^{k-1} - \theta_j^{k-1} + \gamma g_{j}^{k-1}\|^2\mid \theta^{k-1}\right]\\
        &\overset{\eqref{eq:variance_decomposition}}{=}& \|\theta_i^{k-1} - \gamma \nabla f(\theta_i^{k-1}) - \theta_j^{k-1} + \gamma \nabla f(\theta_j^{k-1})\|^2\\
        &&\quad +\gamma^2\EE\left[\|g_i^{k-1} - \nabla f(\theta_i^{k-1}) + g_{j}^{k-1} - \nabla f(\theta_j^{k-1})\|^2\mid \theta^{k-1}\right].
    \end{eqnarray*}
    Using Lemma~\ref{lem:gd_contraction} and independence of $g_i^{k-1}$ and $g_j^{k-1}$ for given $\theta_i^{k-1}, \theta_j^{k-1}$, $i\neq j$ we derive
    \begin{eqnarray*}
        \EE\left[\|\theta_i^k - \theta_j^k\|^2\mid \theta^{k-1}\right] &\overset{\eqref{eq:gd_contraction}}{\le}& (1-\gamma\mu)\|\theta_i^{k-1} - \theta_j^{k-1}\|^2 +\gamma^2\EE\left[\|g_i^{k-1} - \nabla f(\theta_i^{k-1})\|^2\mid \theta^{k-1}\right]\\
        &&\quad +\gamma^2\EE\left[\|g_j^{k-1} - \nabla f(\theta_j^{k-1})\|^2\mid \theta^{k-1}\right]\\
        &\overset{\eqref{eq:bounded_variance}}{\le}& (1-\gamma\mu)\|\theta_i^{k-1} - \theta_j^{k-1}\|^2 + 2\gamma^2\sigma^2,
    \end{eqnarray*}
    from which we get the following: 
    \begin{equation}
        \EE_g\left[\|\theta_i^k - \theta_j^k\|^2\right] \le (1-\gamma\mu)\EE_g\left[\|\theta_i^{k-1} - \theta_j^{k-1}\|^2\right] + 2\gamma^2\sigma^2 \le \EE_g\left[\|\theta_i^{k-1} - \theta_j^{k-1}\|^2\right] + 2\gamma^2\sigma^2.\notag %
    \end{equation}
    Here, $\EE_g[\cdot]$ denotes the expectation conditioned on $\{P_k\}_{k = a\tau}^{(a+1)\tau-1}$. Unrolling the recurrence, we get
    \begin{eqnarray}
        \EE_g\left[\|\theta_i^k - \theta_j^k\|^2\right] &\le& \EE_g\left[\|\theta_i^{a\tau} - \theta_j^{a\tau}\|^2\right] + 2(k-a\tau)\gamma^2\sigma^2\notag \\
        &\le& \EE_g\left[\|\theta_i^{a\tau} - \theta_j^{a\tau}\|^2\right] + 2(\tau-1)\gamma^2\sigma^2.\label{eq:V_k_lemma_technical_1}
    \end{eqnarray}
    Using this, we estimate $\EE_{g}[V_k]$:
    \begin{eqnarray*}
        \EE_g[V_k] &=& \frac{1}{N_k}\sum\limits_{i\in P_k}\EE_g\left[\left\|\theta_i^k - \frac{1}{N_k}\sum\limits_{j\in P_k}\theta_j^k\right\|^2\right] \overset{\eqref{eq:jensen_ineq}}{\le} \frac{1}{N_k^2}\sum\limits_{i,j \in P_k}\EE_g\left[\|\theta_i^k - \theta_j^k\|^2\right]\\
        &\overset{\eqref{eq:V_k_lemma_technical_1}}{\le}& \frac{1}{N_k^2}\sum\limits_{i,j \in P_k}\EE_g\left[\|\theta_i^{a\tau} - \theta_j^{a\tau}\|^2\right] + 2(\tau-1)\gamma^2\sigma^2 \\
        &\overset{\eqref{eq:a+b}}{\le}& \frac{2}{N_k^2}\sum\limits_{i,j \in P_k}\left(\EE_g\left[\|\theta_i^{a\tau} - \theta^{a\tau}\|^2\right] + \EE_g\left[\|\theta_j^{a\tau} - \theta^{a\tau}\|^2\right]\right) + 2(\tau-1)\gamma^2\sigma^2\\
        &=& \frac{4}{N_k}\sum\limits_{i\in P_k}\EE_g\left[\|\theta_i^{a\tau} - \theta^{a\tau}\|^2\right]+ 2(\tau-1)\gamma^2\sigma^2\\
        &\le& \frac{4}{N_{a\tau}}\cdot\frac{N_{a\tau}}{N_k}\sum\limits_{i\in P_{a\tau}}\EE_g\left[\|\theta_i^{a\tau} - \theta^{a\tau}\|^2\right]+ 2(\tau-1)\gamma^2\sigma^2\\
        &\le& \EE_g\left[\frac{8}{N_{a\tau}}\sum\limits_{i\in P_{a\tau}}\|\theta_i^{a\tau} - \theta^{a\tau}\|^2\right]+ 2(\tau-1)\gamma^2\sigma^2,
    \end{eqnarray*}
    where in the last inequality we use $2N_{(a+1)\tau} = 2|P_{(a+1)\tau}| \ge |P_{a\tau}| = N_{a\tau}$ and $|N_k|\le |N_{k-1}|$ following from Assumption~\ref{as:averaging_quality}. Finally, we take the full expectation from the previous inequality:
    \begin{eqnarray*}
        \EE[V_k] &\overset{\eqref{eq:tower_property}}{\le}& 8\EE\left[\frac{1}{N_{a\tau}}\sum\limits_{i\in P_{a\tau}}\|\theta_i^{a\tau} - \theta^{a\tau}\|^2\right]+ 2(\tau-1)\gamma^2\sigma^2 \overset{\eqref{eq:quality_of_avg}}{\le} 2\gamma^2\left(4\delta_{aq}^2 + (\tau-1)\sigma^2\right).
    \end{eqnarray*}
    This finishes the proof.
\end{proof}

Combining Lemmas~\ref{lem:key_lemma_cvx}~and~\ref{lem:V_k_lemma_cvx}, we get the following result:
\begin{theorem}[Theorem~\ref{thm:cvx_convergence}, convergence in the convex case]\label{thm:cvx_convergence_supp}
    Let $f_1 = \ldots = f_N = f$ be $\mu$-strongly convex (Def.~\ref{def:str_cvx}) and $L$-smooth (see Def.~\ref{def:L_smoothness}), and Assumptions~\ref{as:bounded_var}~and~\ref{as:averaging_quality} hold with $\Delta_{pv}^k = \delta_{pv,1}\gamma\mu\EE[\|\theta^k-\theta^*\|^2] + \gamma^2\delta_{pv,2}^2$ and $\widetilde{\theta} = \theta^*$, where $\theta^* \in \argmin_{\theta\in\R^n} f(\theta)$ and $\delta_{pv,1}\in [0,1)$, $\delta_{pv,2}\ge 0$. Then, for any $K \ge 0$, the iterates produced by Moshpit SGD with $\gamma \le \nicefrac{1}{4L}$ satisfy
    \begin{eqnarray}
        \EE\left[f(\overline{\theta}^K) - f(\theta^*)\right] &\le& (1-\gamma\mu(1-\delta_{pv,1}))^K\frac{R_0^2}{\gamma}\notag\\
        &&\quad + \gamma\left(\frac{\sigma^2}{N_{\min}} + \delta_{pv,2}^2 + 3L\gamma\left(4\delta_{aq}^2 + (\tau-1)\sigma^2\right)\right), \label{eq:str_cvx_bound_supp}
    \end{eqnarray}
    when $\mu > 0$, and
    \begin{equation}
        \EE\left[f(\overline{\theta}^K) - f(\theta^*)\right] \le \frac{R_0^2}{\gamma K} + \gamma\left(\frac{\sigma^2}{N_{\min}} + \delta_{pv,2}^2 + 3L\gamma\left(4\delta_{aq}^2 + (\tau-1)\sigma^2\right)\right), \label{eq:cvx_bound_supp}
    \end{equation}
    when $\mu = 0$, where $R_0 = \|\theta^0 - \theta^*\|$, $\overline{\theta}^K = \frac{1}{W_K}\sum_{k=0}^Kw_k\theta^k = \frac{1}{W_K}\sum_{k=0}^K\frac{w_k}{N_k}\sum_{i\in P_k}\theta_i^k$, $w_k = (1-\gamma\mu(1-\delta_{pv,1}))^{-(k+1)}$, and $W_K = \sum_{k=0}^Kw_k$. That is, Moshpit SGD achieves $\EE[f(\overline{\theta}^K) - f(\theta^*)] \le \varepsilon$ after 
    \begin{equation}
        K = \widetilde{\cO}\left(\frac{L}{(1-\delta_{pv,1})\mu} +  \frac{\sigma^2}{N_{\min}(1-\delta_{pv,1})\mu\varepsilon} + \frac{\delta_{pv,2}^2}{(1-\delta_{pv,1})\mu\varepsilon} + \sqrt{\frac{L((\tau-1)\sigma^2+\delta_{aq}^2)}{(1-\delta_{pv,1})^2\mu^2\varepsilon}}\right)\label{eq:str_cvx_bound_2_supp}
    \end{equation}
    iterations with
    \begin{equation*}
        \gamma = \min\left\{\frac{1}{4L}, \frac{\ln\left(\max\left\{2, \min\left\{\frac{R_0^2\mu^2(1-\delta_{pv,1})^2K^2}{(\delta_{pv,2}^2 + \nicefrac{\sigma^2}{N_{\min}}) },\frac{R_0^2\mu^3(1-\delta_{pv,1})^3K^3}{3L\left(4\delta_{aq}^2 + (\tau-1)\sigma^2\right)}\right\}\right\}\right)}{(1-\delta_{pv,1})\mu K}\right\}
    \end{equation*}
    when $\mu > 0$, and after
    \begin{equation}
        K = \cO\left(\frac{LR_0^2}{\varepsilon} +  \frac{R_0^2\sigma^2}{N_{\min}\varepsilon^2} + \frac{R_0^2\delta_{pv,2}^2}{\varepsilon^2} + \frac{R_0^2\sqrt{L((\tau-1)\sigma^2+\delta_{aq}^2)}}{\varepsilon^{\nicefrac{3}{2}}}\right)\label{eq:cvx_bound_2_supp}
    \end{equation}
    iterations with
    \begin{equation*}
       \gamma = \min\left\{\frac{1}{4L} \sqrt{\frac{R_0}{(\delta_{pv,2}^2 + \nicefrac{\sigma^2}{N_{\min}})K}}, \sqrt[3]{\frac{R_0^2}{3L\left(4\delta_{aq}^2 + (\tau-1)\sigma^2\right) K}}\right\}
    \end{equation*}
    when $\mu = 0$.
\end{theorem}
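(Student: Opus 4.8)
The plan is to combine the per-iteration descent inequality of Lemma~\ref{lem:key_lemma_cvx} with the consensus bound of Lemma~\ref{lem:V_k_lemma_cvx}, and then convert the resulting recurrence into a rate using the tuning lemmas from \cite{gorbunov2020local}. First I would substitute $\EE[V_k] \le 2\gamma^2(4\delta_{aq}^2 + (\tau-1)\sigma^2)$ from \eqref{eq:V_k_bound_cvx} into \eqref{eq:key_lemma_cvx}. Writing $r_k := \EE[\|\theta^k - \theta^*\|^2]$ and $A := 1 - \gamma\mu(1-\delta_{pv,1})$, this yields a clean one-step inequality of the form
\begin{equation*}
    \gamma \EE[f(\theta^k) - f(\theta^*)] \le A\, r_k - r_{k+1} + \gamma^2\left(\tfrac{\sigma^2}{N_{\min}} + \delta_{pv,2}^2\right) + 3L\gamma^3\left(4\delta_{aq}^2 + (\tau-1)\sigma^2\right),
\end{equation*}
where the $\tfrac{3L\gamma}{2}\EE[V_k]$ term has been absorbed into the last $\gamma^3$ contribution.

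Next I would run the standard weighted telescoping. Multiplying by $w_k = A^{-(k+1)}$ and using $w_k A = w_{k-1}$ (with the convention $w_{-1} = 1$), the terms collapse as $w_k(A r_k - r_{k+1}) = w_{k-1} r_k - w_k r_{k+1}$, so summing over $k = 0, \ldots, K$ and dropping the nonnegative tail $w_K r_{K+1}$ gives $\gamma\sum_{k=0}^K w_k \EE[f(\theta^k) - f(\theta^*)] \le R_0^2 + \gamma^2 W_K\,(\,c_1 + \gamma c_2\,)$, with $c_1 = \tfrac{\sigma^2}{N_{\min}} + \delta_{pv,2}^2$ and $c_2 = 3L(4\delta_{aq}^2 + (\tau-1)\sigma^2)$. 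Convexity of $f$ together with Jensen's inequality applied to $\overline{\theta}^K = \tfrac{1}{W_K}\sum_k w_k \theta^k$ lets me pull the averaging inside, and dividing by $\gamma W_K$ produces the master bound
\begin{equation*}
    \EE[f(\overline{\theta}^K) - f(\theta^*)] \le \frac{R_0^2}{\gamma W_K} + \gamma c_1 + \gamma^2 c_2.
\end{equation*}
For $\mu > 0$ I would bound $W_K \ge A^{-(K+1)}$, i.e. $\tfrac{1}{W_K} \le (1-\gamma\mu(1-\delta_{pv,1}))^{K}$, to recover the closed form \eqref{eq:str_cvx_bound_supp}; for $\mu = 0$ all weights equal $1$, $W_K = K+1 \ge K$, and the bound becomes \eqref{eq:cvx_bound_supp}.

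Finally, to extract the iteration complexities I would feed this master bound, which is exactly of the template $r_K \le \tfrac{a}{\gamma W_K} + c_1\gamma + c_2\gamma^2$, into the recurrence lemmas with $a = R_0^2$, the $c_1, c_2$ above, and $\gamma_0 = \tfrac{1}{4L}$. Lemma~\ref{lem:lemma_i_2_gorbunov} then delivers the $\widetilde{\cO}$ rate \eqref{eq:str_cvx_bound_2_supp} and the associated stepsize in the strongly convex case, while Lemma~\ref{lem:lemma_i_3_gorbunov} yields the $\cO$ rate \eqref{eq:cvx_bound_2_supp} when $\mu = 0$.

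Since the two lemmas already carry the analytic weight, the combination is mostly bookkeeping; the one place to be careful is the telescoping, specifically matching the geometric weights $w_k$ to the contraction factor $A$ so that the $A r_k$ term collapses cleanly, and casting the final inequality in precisely the $\tfrac{a}{\gamma W_K} + c_1\gamma + c_2\gamma^2$ form the recurrence lemmas require. Handling the degenerate $\mu = 0$ case (where $A = 1$ and $w_k \equiv 1$, so the weighted average reduces to a plain average) in parallel with $\mu > 0$ is the only remaining subtlety.
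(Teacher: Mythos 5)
Your proposal is correct and follows essentially the same route as the paper's own proof: plugging the bound of Lemma~\ref{lem:V_k_lemma_cvx} into Lemma~\ref{lem:key_lemma_cvx}, telescoping with the weights $w_k = (1-\gamma\mu(1-\delta_{pv,1}))^{-(k+1)}$, applying Jensen's inequality to $\overline{\theta}^K$, bounding $W_K \ge w_K$ (resp.\ $W_K \ge K$) in the two cases, and then invoking Lemmas~\ref{lem:lemma_i_2_gorbunov} and~\ref{lem:lemma_i_3_gorbunov} with $a = R_0^2$, $c_1 = \nicefrac{\sigma^2}{N_{\min}} + \delta_{pv,2}^2$, $c_2 = 3L(4\delta_{aq}^2 + (\tau-1)\sigma^2)$, and $\gamma_0 = \nicefrac{1}{4L}$. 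The only (immaterial) difference is bookkeeping detail, e.g.\ your slightly more careful $W_K = K+1 \ge K$ in the $\mu = 0$ case.
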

\begin{proof}
    Plugging the result of Lemma~\ref{lem:V_k_lemma_cvx} in inequality \eqref{eq:key_lemma_cvx} from Lemma~\ref{lem:key_lemma_cvx}, we obtain
    \begin{eqnarray}
        \gamma\EE\left[f(\theta^k) - f(\theta^*)\right] &\le& (1-\gamma\mu(1-\delta_{pv,1}))\EE\left[\|\theta^k - \theta^*\|^2\right] - \EE\left[\|\theta^{k+1} - \theta^*\|^2\right]\notag\\
        &&\quad+ 3L\gamma^3\left(4\delta_{aq}^2 + (\tau-1)\sigma^2\right) + \gamma^2\left(\frac{\sigma^2}{N_{\min}} + \delta_{pv,2}^2\right).\notag
    \end{eqnarray}
    Next, we sum up these inequalities for $k=0,\ldots, K$ with weights $w_k = (1-\gamma\mu(1-\delta_{pv,1}))^{-(k+1)}$ and divide both sides by $\gamma W_K$, where $W_K = \sum_{k=0}^Kw_k$:
    \begin{eqnarray*}
        \frac{1}{W_K}\sum\limits_{k=0}^K w_k\EE\left[f(\theta^k) - f(\theta^*)\right] &\le& \frac{1}{\gamma W_K}\sum\limits_{k=0}^K(1-\gamma\mu(1-\delta_{pv,1}))w_k\EE\left[\|\theta^k - \theta^*\|^2\right]\notag\\
        &&\quad - \frac{1}{\gamma W_K}\sum\limits_{k=0}^K w_k\EE\left[\|\theta^{k+1} - \theta^*\|^2\right]\notag\\
        &&\quad+ \gamma\left(\frac{\sigma^2}{N_{\min}} + \delta_{pv,2}^2 + 3L\gamma\left(4\delta_{aq}^2 + (\tau-1)\sigma^2\right)\right)\\
        &=& \frac{1}{\gamma W_K}\sum\limits_{k=0}^K\left(w_{k-1}\EE\left[\|\theta^k - \theta^*\|^2\right] - w_k\EE\left[\|\theta^{k+1} - \theta^*\|^2\right]\right)\notag\\
        &&\quad+ \gamma\left(\frac{\sigma^2}{N_{\min}} + \delta_{pv,2}^2 + 3L\gamma\left(4\delta_{aq}^2 + (\tau-1)\sigma^2\right)\right)\\
        &=& \frac{w_{-1}\|\theta^0 - \theta^*\|^2 - w_K\EE\left[\|\theta^{K+1}-\theta^*\|^2\right]}{\gamma W_K}\\
        &&\quad+ \gamma\left(\frac{\sigma^2}{N_{\min}} + \delta_{pv,2}^2 + 3L\gamma\left(4\delta_{aq}^2 + (\tau-1)\sigma^2\right)\right)\\
        &\le& \frac{\|\theta^0 - \theta^*\|^2}{\gamma W_K} \\
        &&\quad + \gamma\left(\frac{\sigma^2}{N_{\min}} + \delta_{pv,2}^2 + 3L\gamma\left(4\delta_{aq}^2 + (\tau-1)\sigma^2\right)\right).
    \end{eqnarray*}
    Since $f$ is convex, we apply the Jensen's inquality
    \begin{eqnarray*}
        f\left(\frac{1}{W_K}\sum\limits_{k=0}^K w_k\theta^k\right) &\le& \frac{1}{W_K}\sum\limits_{k=0}^K w_k f(\theta^k)
    \end{eqnarray*}
    to the previous result and get
    \begin{eqnarray*}
        \EE\left[f(\overline{\theta}^K) - f(\theta^*)\right] &\le& \frac{R_0^2}{\gamma W_K} + \gamma\left(\frac{\sigma^2}{N_{\min}} + \delta_{pv,2}^2 + 3L\gamma\left(4\delta_{aq}^2 + (\tau-1)\sigma^2\right)\right),
    \end{eqnarray*}
    where $R_0 = \|\theta^0 - \theta^*\|$ and $\overline{\theta}^K = \frac{1}{W_K}\sum_{k=0}^Kw_k\theta^k = \frac{1}{W_K}\sum_{k=0}^K\frac{w_k}{N_k}\sum_{i\in P_k}\theta_i^k$. If $\mu > 0$, then $W_K \ge w_K \ge (1-\gamma\mu(1-\delta_{pv,1}))^{-K}$, implying \eqref{eq:str_cvx_bound_supp}. Next, $w_k = 1$ and $W_K = K$ when $\mu = 0$ gives \eqref{eq:cvx_bound_supp}. It remains to estimate the total number of iterations $K$ required by Moshpit SGD to find an $\varepsilon$-solution, i.e., to achieve $\EE[f(\overline{\theta}^K) - f(\theta^*)] \le \varepsilon$. Applying Lemma~\ref{lem:lemma_i_2_gorbunov} to \eqref{eq:str_cvx_bound_supp}, we get the following result: if $\mu > 0$ and 
    \begin{equation*}
        \gamma = \min\left\{\frac{1}{4L}, \frac{\ln\left(\max\left\{2, \min\left\{\frac{R_0^2\mu^2(1-\delta_{pv,1})^2K^2}{\delta_{pv,2}^2 + \nicefrac{\sigma^2}{N_{\min}} },\frac{R_0^2\mu^3(1-\delta_{pv,1})^3K^3}{3L\left(4\delta_{aq}^2 + (\tau-1)\sigma^2\right)}\right\}\right\}\right)}{(1-\delta_{pv,1})\mu K}\right\},
    \end{equation*}
    then $\EE\left[f(\overline{\theta}^K) - f(\theta^*)\right]$ equals
    \begin{equation*}
        \widetilde{\cO}\left(LR_0^2\exp\left(-\frac{\mu}{L}(1-\delta_{pv,1})K\right) + \frac{\delta_{pv,2}^2 + \nicefrac{\sigma^2}{N_{\min}}}{(1-\delta_{pv,1})\mu K} + \frac{L\left(\delta_{aq}^2 + (\tau-1)\sigma^2\right)}{(1-\delta_{pv,1})^2\mu^2 K^2}\right),
    \end{equation*}
    implying \eqref{eq:str_cvx_bound_2_supp}. Similarly, we apply Lemma~\ref{lem:lemma_i_3_gorbunov} to \eqref{eq:cvx_bound_supp} and get that for $\mu = 0$ and 
    \begin{equation*}
        \gamma = \min\left\{\frac{1}{4L} \sqrt{\frac{R_0}{(\delta_{pv,2}^2 + \nicefrac{\sigma^2}{N_{\min}})K}}, \sqrt[3]{\frac{R_0^2}{3L\left(4\delta_{aq}^2 + (\tau-1)\sigma^2\right) K}}\right\},
    \end{equation*}
    \begin{equation*}
        \EE\left[f(\overline{\theta}^K) - f(\theta^*)\right] = \cO\left(\frac{LR_0^2}{K} + \sqrt{\frac{R_0^2(\delta_{pv,2}^2 + \nicefrac{\sigma^2}{N_{\min}})}{K}} + \frac{\sqrt[3]{R_0^4L\left(\delta_{aq}^2 + (\tau-1)\sigma^2\right)}}{K^{\nicefrac{2}{3}}}\right),
    \end{equation*}
    implying \eqref{eq:cvx_bound_2_supp}.
\end{proof}

\subsection{Non-Convex Case}
In this section, we give the full proof of Theorem~\ref{thm:non_cvx_convergence} about convergence of Moshpit SGD for general non-convex problems. The proof follows the similar steps as in the state-of-the-art analysis of Local-SGD in non-convex case~\cite{li2019communication,koloskova2020unified}. We start with the following lemma:
\begin{lemma}\label{lem:key_lemma_non_cvx}
    Let $f_1 = \ldots = f_N = f$, function $f$ be $L$-smooth and bounded from below by $f_*$, and Assumptions~\ref{as:bounded_var}~and~\ref{as:averaging_quality} hold with $\Delta_{pv}^k = \delta_{pv,1}\gamma\EE[\|\nabla f(\theta^k)\|^2] + L\gamma^2\delta_{pv,2}^2$, $\delta_{pv,1}\in [0,\nicefrac{1}{2})$, $\delta_{pv,2}\ge 0$. Then, for any $K \ge 0$ the iterates produced by Moshpit SGD with $\gamma \le \nicefrac{(1-2\delta_{pv,1})}{8L}$ satisfy
    \begin{eqnarray}
         \frac{(1-2\delta_{pv,1})\gamma}{4}\sum\limits_{k=0}^{K-1}\EE\left[\|\nabla f(\theta^k)\|^2\right] &\le& f(\theta^0) - f_* + \gamma L^2\sum\limits_{k=0}^{K-1} \EE[V_k]\notag\\
         &&\quad + KL\gamma^2\left(\frac{\sigma^2}{N_{\min}} + \delta_{pv,2}^2\right),\label{eq:key_lemma_non_cvx}
    \end{eqnarray}
    where $V_k = \frac{1}{N_k}\sum_{i\in P_k}\|\theta_i^k - \theta^k\|^2$ and $\theta^k = \frac{1}{N_k}\sum_{i\in P_k}\theta_i^k$.
\end{lemma}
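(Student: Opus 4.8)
The plan is to establish a one-step descent inequality for the \emph{virtual averaged iterate} $\theta^k = \frac{1}{N_k}\sum_{i\in P_k}\theta_i^k$ and then telescope it over $k = 0,\ldots,K-1$. First I would apply $L$-smoothness in the form \eqref{eq:L_smoothness_cor} to the pair $\theta^k,\theta^{k+1}$, giving $f(\theta^{k+1}) \le f(\theta^k) + \langle \nabla f(\theta^k), \theta^{k+1}-\theta^k\rangle + \frac{L}{2}\|\theta^{k+1}-\theta^k\|^2$. The crucial algebraic move is to route the increment through the ``pre-averaging'' point $\widehat\theta^{k+1} = \frac{1}{N_k}\sum_{i\in P_k}(\theta_i^k - \gamma g_i^k) = \theta^k - \gamma g^k$, where $g^k = \frac{1}{N_k}\sum_{i\in P_k} g_i^k$, writing $\theta^{k+1}-\theta^k = (\theta^{k+1}-\widehat\theta^{k+1}) - \gamma g^k$. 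Expanding the inner product and bounding the quadratic term with \eqref{eq:a+b} isolates the combination $\langle\nabla f(\theta^k), \theta^{k+1}-\widehat\theta^{k+1}\rangle + L\|\widehat\theta^{k+1}-\theta^{k+1}\|^2$, which is precisely what Assumption~\ref{as:averaging_quality} controls in the non-convex regime via \eqref{eq:stationary_avg_almost_2}; the remaining pre-averaging terms are $-\gamma\langle\nabla f(\theta^k), g^k\rangle$ and $L\gamma^2\|g^k\|^2$.

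Next I would take the conditional expectation $\EE[\,\cdot\mid\theta^k]$. Using unbiasedness and bounded variance \eqref{eq:bounded_variance} together with the variance decomposition \eqref{eq:variance_decomposition} and independence of the $g_i^k$ across peers, the term $\EE[\|g^k\|^2\mid\theta^k]$ splits into $\|\frac{1}{N_k}\sum_{i\in P_k}\nabla f(\theta_i^k)\|^2$ plus a variance part bounded by $\sigma^2/N_k \le \sigma^2/N_{\min}$, while $\EE[g^k\mid\theta^k] = \frac{1}{N_k}\sum_{i\in P_k}\nabla f(\theta_i^k)$. To turn $-\gamma\langle\nabla f(\theta^k), \frac{1}{N_k}\sum_{i\in P_k}\nabla f(\theta_i^k)\rangle$ into the desired $-\|\nabla f(\theta^k)\|^2$ contribution, I would use the polarization identity $-\langle a,b\rangle = \tfrac12\|a-b\|^2 - \tfrac12\|a\|^2 - \tfrac12\|b\|^2$ with $a = \nabla f(\theta^k)$ and $b = \frac{1}{N_k}\sum_{i\in P_k}\nabla f(\theta_i^k)$; since $\nabla f(\theta^k)$ equals the average of $\nabla f(\theta^k)$ over $P_k$, Jensen \eqref{eq:jensen_ineq} and $L$-smoothness \eqref{eq:L_smoothness_def} give $\|a-b\|^2 \le L^2 V_k$, producing the consensus term $\gamma L^2 V_k$.

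At this point the coefficient of $\|\frac{1}{N_k}\sum_{i\in P_k}\nabla f(\theta_i^k)\|^2$ is $-\tfrac{\gamma}{2}(1-2L\gamma)$, which the stepsize condition $\gamma \le (1-2\delta_{pv,1})/(8L)$ renders non-positive, so that term may simply be discarded. Taking the full expectation and inserting the non-convex instance $\Delta_{pv}^k = \delta_{pv,1}\gamma\EE[\|\nabla f(\theta^k)\|^2] + L\gamma^2\delta_{pv,2}^2$ of Assumption~\ref{as:averaging_quality} absorbs $\delta_{pv,1}\gamma\EE[\|\nabla f(\theta^k)\|^2]$ into the gradient term; collecting $-\tfrac{\gamma}{2}\EE[\|\nabla f(\theta^k)\|^2] + \delta_{pv,1}\gamma\EE[\|\nabla f(\theta^k)\|^2] = -\tfrac{\gamma}{2}(1-2\delta_{pv,1})\EE[\|\nabla f(\theta^k)\|^2]$ yields the per-step bound. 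Summing over $k=0,\ldots,K-1$ telescopes the $\EE[f(\theta^k)]-\EE[f(\theta^{k+1})]$ differences to $f(\theta^0)-\EE[f(\theta^K)] \le f(\theta^0)-f_*$ via boundedness below, and after relaxing the constants to the stated $\tfrac{(1-2\delta_{pv,1})\gamma}{4}$ and $\gamma L^2$ this gives \eqref{eq:key_lemma_non_cvx}.

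The main obstacle I anticipate is the bookkeeping of the squared-gradient coefficients: both the averaging error (through $\delta_{pv,1}$) and the stochastic-gradient quadratic term feed into the coefficient of $\EE[\|\nabla f(\theta^k)\|^2]$, and one must check that the stepsize restriction simultaneously makes the net coefficient strictly positive and keeps the auxiliary $\|\frac{1}{N_k}\sum_{i\in P_k}\nabla f(\theta_i^k)\|^2$ coefficient non-positive so it can be dropped. The cleanest way to manage this is to leave $V_k$ unexpanded here and defer its control to the separate consensus estimate (the non-convex analogue of Lemma~\ref{lem:V_k_lemma_cvx}), exactly as the statement does.
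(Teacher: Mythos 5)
Your proposal is correct and follows essentially the same route as the paper's proof: the smoothness descent inequality split through $\widehat\theta^{k+1}=\theta^k-\gamma g^k$ with \eqref{eq:a+b}, conditional expectation using Assumption~\ref{as:bounded_var} and the variance decomposition, insertion of Assumption~\ref{as:averaging_quality} via the tower property, and telescoping against the lower bound $f_*$. The only (harmless) local deviation is that where the paper bounds $-\gamma\langle\nabla f(\theta^k),\tfrac{1}{N_k}\sum_{i\in P_k}\nabla f(\theta_i^k)\rangle$ by Young's inequality and then re-expands $\|\tfrac{1}{N_k}\sum_{i\in P_k}\nabla f(\theta_i^k)\|^2\le 2L^2V_k+2\|\nabla f(\theta^k)\|^2$ (producing the factors $1-2\delta_{pv,1}-4L\gamma$ and $1+4L\gamma$ before the stepsize restriction), you retain the negative $-\tfrac{\gamma}{2}\|\tfrac{1}{N_k}\sum_{i\in P_k}\nabla f(\theta_i^k)\|^2$ from the polarization identity and let it absorb the $L\gamma^2\|\cdot\|^2$ contribution outright, which yields marginally tighter constants that you then correctly relax to the stated \eqref{eq:key_lemma_non_cvx}.
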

\begin{proof}
    Recall that Assumption~\ref{as:averaging_quality} with $\Delta_{pv}^k = \delta_{pv,1}\gamma\EE[\|\nabla f(\theta^k)\|^2] + L\gamma^2\delta_{pv,2}^2$ states
\begin{equation}
    \EE\left[\langle\nabla f(\theta^k), \theta^{k+1}-\widehat{\theta}^{k+1}\rangle + L\|\widehat{\theta}^{k+1} - \theta^{k+1}\|^2\right] \le \delta_{pv,1}\gamma\EE[\|\nabla f(\theta^k)\|^2] + L\gamma^2\delta_{pv,2}^2, \label{eq:key_lemma_non_cvx_tech_1}
\end{equation}
where $\widehat \theta^{k+1} = \frac{1}{N_{k}}\sum_{i\in P_{k}}(\theta_i^{k}-\gamma g_i^k)$. As for the convex case, the definition of $\widehat \theta^{k+1}$ implies
\begin{equation}
    \widehat \theta^{k+1} = \frac{1}{N_k}\sum\limits_{i\in P_{k}}\theta_i^{k} - \frac{\gamma}{N_k}\sum\limits_{i\in P_{k}} g_i^k = \theta^k - \gamma g^k,\notag
\end{equation}
where $g^k = \frac{1}{N_k}\sum_{i\in P_k}g_i^k$. Using this and L-smoothness of $f$, we derive
    \begin{eqnarray*}
        f(\theta^{k+1}) - f(\theta^k) &\overset{\eqref{eq:L_smoothness_cor}}{\le}& \langle\nabla f(\theta^k), \theta^{k+1} - \theta^k \rangle + \frac{L}{2}\|\theta^{k+1} - \theta^k\|^2\\
        &\overset{\eqref{eq:a+b}}{\le}& \langle\nabla f(\theta^k), \widehat{\theta}^{k+1} - \theta^k \rangle + \langle\nabla f(\theta^k), \theta^{k+1} - \widehat{\theta}^{k+1} \rangle\\
        &&\quad+ L\|\widehat{\theta}^{k+1} - \theta^k\|^2 + L\|\theta^{k+1} - \widehat{\theta}^{k+1}\|^2\\
        &=& - \gamma\langle\nabla f(\theta^k), g^k\rangle + L\gamma^2\|g^k\|^2 + \langle\nabla f(\theta^k), \theta^{k+1} - \widehat{\theta}^{k+1} \rangle\\
        &&\quad + L\|\theta^{k+1} - \widehat{\theta}^{k+1}\|^2,
    \end{eqnarray*}
    from which it follows that
    \begin{eqnarray}
        \EE\left[f(\theta^{k+1}) - f(\theta^k)\mid \theta^k\right] &\le& -\gamma\left\langle\nabla f(\theta^k), \frac{1}{N_k}\sum\limits_{i\in P_k}\nabla f(\theta_i^k) \right\rangle\notag\\
        &&\quad + \EE\left[\langle\nabla f(\theta^k), \theta^{k+1} - \widehat{\theta}^{k+1} \rangle\mid \theta^k\right]\notag\\
        &&\quad + \EE\left[L\|\theta^{k+1} - \widehat{\theta}^{k+1}\|^2\mid \theta^k\right]\notag\\
        &&\quad + L\gamma^2\EE\left[\left\|\frac{1}{N_k}\sum\limits_{i\in P_k}g_i^k\right\|^2\mid \theta^k\right],\label{eq:key_lemma_non_cvx_tech_2}
    \end{eqnarray}
    where $\EE\left[\ \cdot \mid \theta^k\right] := \EE\left[\ \cdot \mid P_k, \theta_i^k, i\in P_k\right]$. Next, we estimate the last three terms in the right-hand side of \eqref{eq:key_lemma_non_cvx_tech_2}. First of all,
\begin{eqnarray}
    -\gamma\left\langle\nabla f(\theta^k), \frac{1}{N_k}\sum\limits_{i\in P_k}\nabla f(\theta_i^k)\right\rangle &=& -\gamma\|\nabla f(\theta^k)\|^2 \notag\\
    &&\quad - \gamma\left\langle\nabla f(\theta^k), \frac{1}{N_k}\sum\limits_{i\in P_k}\nabla f(\theta_i^k) - \nabla f(\theta^k)\right\rangle \notag\\
    &\overset{\eqref{eq:young_inequality}}{\le}& -\gamma\|\nabla f(\theta^k)\|^2 + \frac{\gamma}{2}\|\nabla f(\theta^k)\|^2\notag\\
    &&\quad+ \frac{\gamma}{2}\left\|\frac{1}{N_k}\sum\limits_{i\in P_k}(\nabla f(\theta_i^k) - \nabla f(\theta^k))\right\|^2\notag\\
    &\overset{\eqref{eq:jensen_ineq}}{\le}& - \frac{\gamma}{2}\|\nabla f(\theta^k)\|^2 + \frac{\gamma}{2N_k}\sum\limits_{i\in P_k}\|\nabla f(\theta_i^k) - \nabla f(\theta^k)\|^2\notag\\
    &\overset{\eqref{eq:L_smoothness_def}}{\le}& - \frac{\gamma}{2}\|\nabla f(\theta^k)\|^2 + \frac{\gamma L^2}{2}V_k, \label{eq:key_lemma_non_cvx_tech_3}
\end{eqnarray}
where $V_k = \frac{1}{N_k}\sum_{i\in P_k}\|\theta_i^k - \theta^k\|^2$. Secondly, since the stochastic gradients $\{g_i^k\}_{i\in P_k}$ are computed independently, we derive
\begin{eqnarray}
    L\gamma^2\EE\left[\left\|\frac{1}{N_k}\sum\limits_{i\in P_k}g_i^k\right\|^2\mid \theta^k\right] &\overset{\eqref{eq:variance_decomposition}}{=}& L\gamma^2\left\|\frac{1}{N_k}\sum\limits_{i\in P_k}\nabla f(\theta_i^k)\right\|^2\notag\\
    &&\quad + L\gamma^2\EE\left[\left\|\frac{1}{N_k}\sum\limits_{i\in P_k}(g_i^k-\nabla f(\theta_i^k))\right\|^2\mid \theta^k\right]\notag\\
    &\overset{\eqref{eq:jensen_ineq}}{\le}& 2L\gamma^2 \left\|\frac{1}{N_k}\sum\limits_{i\in P_k}(\nabla f(\theta_i^k)-\nabla f(\theta^k))\right\|^2 \notag\\
    &&\quad + 2L\gamma^2\|\nabla f(\theta^k)\|^2 \notag\\
    &&\quad + \frac{\gamma^2L}{N_k^2}\sum\limits_{i\in P_k}\EE\left[\|g_i^k - \nabla f(\theta_i^k)\|^2\mid \theta^k\right]\notag\\
    &\overset{\eqref{eq:jensen_ineq},\eqref{eq:bounded_variance}}{\le}& \frac{2\gamma^2L}{N_k}\sum\limits_{i\in P_k}\|\nabla f(\theta_i^k)-\nabla f(\theta^k)\|^2\notag\\
    &&\quad + 2L\gamma^2\|\nabla f(\theta^k)\|^2 + \frac{\gamma^2L\sigma^2}{N_k}\notag\\
    &\overset{\eqref{eq:L_smoothness_def}}{\le}& \underbrace{\frac{2L^3\gamma^2}{N_k}\sum\limits_{i\in P_k}\|\theta_i^k - \theta^k\|^2}_{2L^3\gamma^2 V_k} + 2L\gamma^2\|\nabla f(\theta^k)\|^2\notag\\
    &&\quad + \frac{\gamma^2L\sigma^2}{N_{\min}}. \label{eq:key_lemma_non_cvx_tech_4}
\end{eqnarray}
Plugging \eqref{eq:key_lemma_non_cvx_tech_3} and \eqref{eq:key_lemma_non_cvx_tech_4} in \eqref{eq:key_lemma_non_cvx_tech_2}, we obtain
\begin{eqnarray}
    \EE\left[f(\theta^{k+1}) - f(\theta^k)\mid \theta^k\right] &\le& -\frac{\gamma}{2}\left(1 - 4L\gamma\right)\|\nabla f(\theta^k)\|^2 + \frac{\gamma L^2}{2}\left(1 + 4L\gamma\right)V_k + \frac{L\gamma^2\sigma^2}{N_{\min}}\notag\\
    &&\quad + \EE\left[\langle\nabla f(\theta^k), \theta^{k+1} - \widehat{\theta}^{k+1} \rangle + L\|\theta^{k+1} - \widehat{\theta}^{k+1}\|^2\mid \theta^k\right].\notag
\end{eqnarray}
Next, we take the full expectation from the both sides of the above inequality, apply the tower property \eqref{eq:tower_property} and take into account that $\gamma \le \nicefrac{(1-2\delta_{pv,1})}{8L}$:
\begin{eqnarray*}
    \EE\left[f(\theta^{k+1}) - f(\theta^k)\right] &\le& -\frac{\gamma}{2}\left(1 - 4L\gamma\right)\EE\left[\|\nabla f(\theta^k)\|^2\right] + \frac{\gamma L^2}{2}\left(1 + 4L\gamma\right)\EE[V_k] + \frac{L\gamma^2\sigma^2}{N_{\min}}\\
    &&\quad + \EE\left[\langle\nabla f(\theta^k), \theta^{k+1} - \widehat{\theta}^{k+1} \rangle + L\|\theta^{k+1} - \widehat{\theta}^{k+1}\|^2\right]\\
    &\overset{\eqref{eq:key_lemma_non_cvx_tech_1}}{\le}& -\frac{\gamma}{2}\left(1 - 2\delta_{pv,1} - 4L\gamma\right)\EE\left[\|\nabla f(\theta^k)\|^2\right] + \frac{\gamma L^2}{2}\left(1 + 4L\gamma\right)\EE[V_k] \notag\\
    &&\quad + L\gamma^2\left(\frac{\sigma^2}{N_{\min}} + \delta_{pv,2}^2\right)\\
    &\le& -\frac{(1-2\delta_{pv,1})\gamma}{4}\EE\left[\|\nabla f(\theta^k)\|^2\right] + \gamma L^2 \EE[V_k]\notag\\
    &&\quad + L\gamma^2\left(\frac{\sigma^2}{N_{\min}} + \delta_{pv,2}^2\right).
\end{eqnarray*}
Summing up the obtained inequalities for $k = 0,\ldots, K-1$ and rearranging the terms, we derive
\begin{eqnarray*}
    \frac{(1-2\delta_{pv,1})\gamma}{4}\sum\limits_{k=0}^{K-1}\EE\left[\|\nabla f(\theta^k)\|^2\right] &\le& \sum\limits_{k=0}^{K-1} \EE\left[f(\theta^k) - f(\theta^{k+1})\right] + \gamma L^2\sum\limits_{k=0}^{K-1} \EE[V_k]\notag\\
    &&\quad + KL\gamma^2\left(\frac{\sigma^2}{N_{\min}} + \delta_{pv,2}^2\right)\\
    &=& f(\theta^0) - \EE[f(\theta^{K})] + \gamma L^2\sum\limits_{k=0}^{K-1} \EE[V_k] \\
    &&\quad + KL\gamma^2\left(\frac{\sigma^2}{N_{\min}} + \delta_{pv,2}^2\right)\\
    &\le& f(\theta^0) - f_* + \gamma L^2\sum\limits_{k=0}^{K-1} \EE[V_k]\\
    &&\quad + KL\gamma^2\left(\frac{\sigma^2}{N_{\min}} + \delta_{pv,2}^2\right),
\end{eqnarray*}
where $f_*$ is a uniform lower bound for $f$.
\end{proof}
The next step towards completing the proof of Theorem~\ref{thm:non_cvx_convergence} gives the upper bound for $\sum_{k=0}^{K-1} \EE[V_k]$ that appeared in \eqref{eq:key_lemma_non_cvx}.

\begin{lemma}\label{lem:V_k_lemma_non_cvx}
    Let $f_1 = \ldots = f_N = f$ be $L$-smooth and bounded from below by $f_*$, and Assumptions~\ref{as:bounded_var}~and~\ref{as:averaging_quality} hold with $\Delta_{pv}^k = \delta_{pv,1}\gamma\EE[\|\nabla f(\theta^k)\|^2] + L\gamma^2\delta_{pv,2}^2$, $\delta_{pv,1}\in [0,\nicefrac{1}{2})$, $\delta_{pv,2}\ge 0$. Then, for any $K \ge 0$ the iterates produced by Moshpit SGD with $\gamma \le \nicefrac{1}{\left(4\sqrt{e}L(\tau-1)\right)}$ satisfy
    \begin{eqnarray}
        \sum\limits_{k=0}^{K-1}\EE[V_k] &\le& 8e\gamma^2(\tau-1)^2\sum\limits_{k=0}^{K-1}\EE[\|\nabla f(\theta^k)\|^2] + 4\gamma^2K\left(2\delta_{aq}^2 + e(\tau-1)\sigma^2\right) ,\label{eq:V_k_lemma_non_cvx}
    \end{eqnarray}
    where $V_k = \frac{1}{N_k}\sum_{i\in P_k}\|\theta_i^k - \theta^k\|^2$ and $\theta^k = \frac{1}{N_k}\sum_{i\in P_k}\theta_i^k$.
\end{lemma}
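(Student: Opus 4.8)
The plan is to mirror the two-part structure of the convex estimate in Lemma~\ref{lem:V_k_lemma_cvx}, but to replace the gradient-contraction argument (which relied on strong convexity through Lemma~\ref{lem:gd_contraction}) by a direct unrolling of the local steps combined with Lemma~\ref{lem:lemma_i_1_gorbunov}. For $k = a\tau$ the bound is immediate from inequality~\eqref{eq:quality_of_avg} of Assumption~\ref{as:averaging_quality}, which already controls $\EE[V_{a\tau}]$ by $\gamma^2\delta_{aq}^2$. So the real work concerns indices $k = a\tau + t'$ with $t'\in\{1,\ldots,\tau-1\}$, i.e. those strictly between two successive Moshpit All-Reduce rounds, during which each participating peer performs only local SGD steps.

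First I would condition on the peer sets $\{P_l\}_{l\ge a\tau}$ exactly as in the convex proof and unroll the local updates: for each $i\in P_k$, $\theta_i^k = \theta_i^{a\tau} - \gamma\sum_{l=a\tau}^{k-1} g_i^l$. Using the mean-minimizing property $V_k \le \frac{1}{N_k}\sum_{i\in P_k}\|\theta_i^k - c\|^2$ with the reference point $c = \frac{1}{N_k}\sum_{j\in P_k}\theta_j^{a\tau}$ and then $\eqref{eq:a+b}$, I would split $V_k$ into (i) an initial consensus gap $\frac{2}{N_k}\sum_{i\in P_k}\|\theta_i^{a\tau}-c\|^2$ and (ii) a local-drift term $2\gamma^2\frac{1}{N_k}\sum_{i\in P_k}\|\sum_{l=a\tau}^{k-1} g_i^l\|^2$. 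The initial gap is bounded, after passing from the $P_k$-restricted average back to the $P_{a\tau}$-average via the relation $2N_{(a+1)\tau}\ge N_{a\tau}$ from Assumption~\ref{as:averaging_quality}, by a multiple of $\EE[V_{a\tau}]\le\gamma^2\delta_{aq}^2$, contributing the $\delta_{aq}^2$ term.

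Next I would apply Lemma~\ref{lem:lemma_i_1_gorbunov} to the sum of the at most $t'\le\tau-1$ stochastic steps inside the drift term, splitting each $g_i^l$ into its conditional mean $\nabla f(\theta_i^l)$ and the zero-mean noise $g_i^l-\nabla f(\theta_i^l)$. This yields a deterministic part proportional to $e(\tau-1)\sum_l\|\nabla f(\theta_i^l)\|^2$ and a variance part bounded by $e(\tau-1)\sigma^2$ through Assumption~\ref{as:bounded_var} (accounting for the $e(\tau-1)\sigma^2$ contribution). The gradient norms $\|\nabla f(\theta_i^l)\|^2$ are then converted to $\|\nabla f(\theta^l)\|^2$ by $L$-smoothness via $\|\nabla f(\theta_i^l)\|^2 \le 2\|\nabla f(\theta^l)\|^2 + 2L^2\|\theta_i^l-\theta^l\|^2$, whose averaged second term is exactly $2L^2 V_l$; summing over $k$ and noting each index $l$ is reused in at most $\tau-1$ outer indices produces the $8e\gamma^2(\tau-1)^2\sum_k\EE[\|\nabla f(\theta^k)\|^2]$ term.

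The main obstacle is that this last smoothness step reintroduces $V_l$ on the right-hand side, so summation over $k$ gives a self-referential inequality of the form $\sum_k\EE[V_k] \le c\,\gamma^2 L^2(\tau-1)^2\sum_k\EE[V_k] + (\text{target terms})$. The resolution is that the per-period structure caps the coefficient at $\cO(\gamma^2 L^2(\tau-1)^2)$, so the stepsize restriction $\gamma \le \nicefrac{1}{(4\sqrt{e}L(\tau-1))}$ forces it strictly below $1$, letting the $\sum_k\EE[V_k]$ terms be absorbed into the left-hand side. Taking full expectations by the tower property~\eqref{eq:tower_property} and inserting the base-case bound~\eqref{eq:quality_of_avg} for the $k=a\tau$ contributions then yields~\eqref{eq:V_k_lemma_non_cvx}. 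I expect the careful bookkeeping of the vanishing peers — tracking how the changing averages $\theta^k$ and cardinalities $N_k$ interact when passing between reference points — to be the most error-prone part, though it is controlled entirely by the monotonicity and doubling conditions already imposed in Assumption~\ref{as:averaging_quality}.
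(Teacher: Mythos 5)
Your proposal is correct and follows essentially the same route as the paper's proof: decompose $V_k$ (via the mean-minimizing property) into the consensus gap at the last synchronization plus the local drift, bound the drift with Lemma~\ref{lem:lemma_i_1_gorbunov} and Assumption~\ref{as:bounded_var}, convert $\|\nabla f(\theta_i^l)\|^2$ to $\|\nabla f(\theta^l)\|^2$ by smoothness (reintroducing $V_l$), handle the peer-set mismatch via the monotonicity and doubling conditions, and absorb the resulting $8e\gamma^2L^2(\tau-1)^2\sum_k\EE[V_k]$ term into the left-hand side using the stepsize bound $\gamma\le\nicefrac{1}{(4\sqrt{e}L(\tau-1))}$. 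The only cosmetic differences (using the $P_k$-restricted average of $\theta_j^{a\tau}$ as the reference point rather than $\theta^{a\tau}$, and treating $k=a\tau$ as a separate base case) change nothing of substance.
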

\begin{proof}
    First of all, consider $k$ such that $k = a\tau + t'$ for some $t'\in [0,\tau)$. Let $\EE_g[\cdot]$ denote the expectation conditioned on $\{P_t\}_{t=a\tau}^{(a+1)\tau-1}$. Then
     \begin{eqnarray}
         \EE_g[V_k] &=& \frac{1}{N_k}\sum\limits_{i\in P_k}\EE_g\left[\|\theta_i^k - \theta^k\|^2\right] \overset{\eqref{eq:variance_decomposition}}{\le} \frac{1}{N_k}\sum\limits_{i\in P_k}\EE_g\left[\|\theta_i^k - \theta^{a\tau}\|^2\right] \notag\\
         &=& \frac{1}{N_k}\sum\limits_{i\in P_k}\EE_g\left[\left\|\theta_i^{a\tau} - \theta^{a\tau} - \gamma\sum\limits_{t=a\tau}^{k-1} g_i^t\right\|^2\right]\notag\\
         &\overset{\eqref{eq:a+b}}{\le}& \frac{2}{N_k} \sum\limits_{i\in P_k}\EE_g\left[\|\theta_i^{a\tau} - \theta^{a\tau}\|^2\right] + \frac{2\gamma^2}{N_k}\sum\limits_{i\in P_k}\EE_g\left[\left\|\sum\limits_{t=a\tau}^{k-1} g_i^t\right\|^2\right]. \label{eq:V_k_lemma_non_cvx_tech_1}
     \end{eqnarray}
     Next, we estimate the second term in the right-hand side of \eqref{eq:V_k_lemma_non_cvx_tech_1} using Lemma~\ref{lem:lemma_i_1_gorbunov}:
     \begin{eqnarray}
         \frac{2\gamma^2}{N_k}\sum\limits_{i\in P_k}\EE_g\left[\left\|\sum\limits_{t=a\tau}^{k-1} g_i^t\right\|^2\right] &\overset{\eqref{eq:lemma_i_1_gorbunov}}{\le}& \frac{2e\gamma^2(k - a\tau)}{N_k} \sum\limits_{i\in P_k} \sum\limits_{t=a\tau}^{k-1}\EE_g[\|\nabla f(\theta_i^t)\|^2]\notag\\
         &&\quad + \frac{2e\gamma^2}{N_k}\sum\limits_{i\in P_k} \sum\limits_{t=a\tau}^{k-1}\EE_g[\|g_i^t - \nabla f(\theta_i^t)\|^2]\notag\\
         &\overset{\eqref{eq:a+b},\eqref{eq:bounded_variance}}{\le}& 4e\gamma^2(\tau-1) \sum\limits_{t=a\tau}^{k-1}\EE_g[\|\nabla f(\theta^t)\|^2] \notag\\
         &&\quad+ 4e\gamma^2(\tau-1) \sum\limits_{t=a\tau}^{k-1}\frac{1}{N_k}\sum\limits_{i\in P_k}\EE_g[\|\nabla f(\theta_i^t) - \nabla f(\theta^t)\|^2] \notag\\
         &&\quad+ 2e\gamma^2 (k - a\tau)\sigma^2\notag\\
         &\overset{\eqref{eq:L_smoothness_def}}{\le}& 4e\gamma^2(\tau-1) \sum\limits_{t=a\tau}^{k-1}\EE_g[\|\nabla f(\theta^t)\|^2]\notag\\
         &&\quad + 4e\gamma^2L^2(\tau-1) \sum\limits_{t=a\tau}^{k-1}\frac{N_t}{N_k}\cdot\frac{1}{N_t}\sum\limits_{i\in P_t}\EE_g[\|\theta_i^t - \theta^t\|^2]\notag\\
         &&\quad + 2e\gamma^2(\tau-1)\sigma^2\notag\\
         &\le& 4e\gamma^2(\tau-1) \sum\limits_{t=a\tau}^{k-1}\EE_g[\|\nabla f(\theta^t)\|^2] \notag\\
         &&\quad + 8e\gamma^2L^2(\tau-1) \sum\limits_{t=a\tau}^{k-1}\EE_g[V_t] + 2e\gamma^2(\tau-1)\sigma^2,\notag
     \end{eqnarray}
     where in the last two inequalities we use $N_k = |P_k| \le |P_{k-1}| = N_{k-1}$ for all $k\ge 1$ and $N_{a\tau} \le 2 N_{(a+1)\tau}$ for all integer $a \ge 0$. Plugging this inequality in \eqref{eq:V_k_lemma_non_cvx_tech_1} and taking the full expectation from the result, we get
     \begin{eqnarray}
         \EE[V_k] &\le& 2\EE\left[\frac{1}{N_k}\sum\limits_{i\in P_k}\|\theta_i^{a\tau} - \theta^{a\tau}\|^2\right] + 4e\gamma^2(\tau-1) \sum\limits_{t=a\tau}^{k-1}\EE[\|\nabla f(\theta^t)\|^2]\notag\\
         &&\quad + 8e\gamma^2L^2(\tau-1) \sum\limits_{t=a\tau}^{k-1}\EE[V_t] + 2e\gamma^2(\tau-1)\sigma^2\notag\\
         &\le& 4\EE\left[\frac{1}{N_{a\tau}}\sum\limits_{i\in P_{a\tau}}\|\theta_i^{a\tau} - \theta^{a\tau}\|^2\right] + 4e\gamma^2(\tau-1) \sum\limits_{t=a\tau}^{k-1}\EE[\|\nabla f(\theta^t)\|^2] \notag\\
         &&\quad + 8e\gamma^2L^2(\tau-1) \sum\limits_{t=a\tau}^{k-1}\EE[V_t] + 2e\gamma^2(\tau-1)\sigma^2\notag\\
         &\overset{\eqref{eq:quality_of_avg}}{\le}& 4e\gamma^2(\tau-1) \sum\limits_{t=a\tau}^{k-1}\EE[\|\nabla f(\theta^t)\|^2] + 8e\gamma^2L^2(\tau-1) \sum\limits_{t=a\tau}^{k-1}\EE[V_t]\notag\\
         &&\quad + 2\gamma^2\left(2\delta_{aq}^2 + e(\tau-1)\sigma^2\right),\notag
     \end{eqnarray}
     where in the second inequality we also use $N_k = |P_k| \le |P_{k-1}| = N_{k-1}$ for all $k\ge 1$ and $N_{a\tau} \le 2 N_{(a+1)\tau}$ for all integer $a \ge 0$. Summing up the obtained inequalities for $k = a\tau, a\tau+1,\ldots, K'$ for some $K' \in[a\tau, (a+1)\tau-1]$ we derive
     \begin{eqnarray*}
         \sum\limits_{k=a\tau}^{K'}\EE[V_k] &\le& 4e\gamma^2(\tau-1)\sum\limits_{k=a\tau}^{K'} \sum\limits_{t=a\tau}^{k-1}\EE[\|\nabla f(\theta^t)\|^2] + 8e\gamma^2L^2(\tau-1) \sum\limits_{k=a\tau}^{K'}\sum\limits_{t=a\tau}^{k-1}\EE[V_t]\\
         &&\quad + 2\gamma^2(K'-a\tau+1)\left(2\delta_{aq}^2 + e(\tau-1)\sigma^2\right)\\
         &\le& 4e\gamma^2(\tau-1)^2\sum\limits_{k=a\tau}^{K'} \EE[\|\nabla f(\theta^k)\|^2] + 8e\gamma^2L^2(\tau-1)^2 \sum\limits_{k=a\tau}^{K'}\EE[V_k]\\
         &&\quad + 2\gamma^2(K'-a\tau+1)\left(2\delta_{aq}^2 + e(\tau-1)\sigma^2\right)\\
         &\le& 4e\gamma^2(\tau-1)^2\sum\limits_{k=a\tau}^{K'} \EE[\|\nabla f(\theta^k)\|^2] + \frac{1}{2} \sum\limits_{k=a\tau}^{K'}\EE[V_k]\notag\\
         &&\quad + 2\gamma^2(K'-a\tau+1)\left(2\delta_{aq}^2 + e(\tau-1)\sigma^2\right),
     \end{eqnarray*}
     where in the last inequality we use $\gamma \le \nicefrac{1}{\left(4\sqrt{e}L(\tau-1)\right)}$. Rearranging the terms, we get that for $K' \ge 0$
     \begin{eqnarray*}
         \sum\limits_{k=a\tau}^{K'} \EE[V_k] &\le& 8e\gamma^2(\tau-1)^2\sum\limits_{k=a\tau}^{K'}\EE[\|\nabla f(\theta^k)\|^2] + 4\gamma^2(K'-a\tau+1)\left(2\delta_{aq}^2 + e(\tau-1)\sigma^2\right),
     \end{eqnarray*}
     where $a\ge 0$ is an integer such that $a\tau \le K' \le (a+1)\tau - 1$. Summing up the obtained inequalities for $K' = \tau-1, 2\tau-1,\ldots, \tau\lfloor\nicefrac{(K-1)}{\tau}\rfloor - 1, K-1$, we derive \eqref{eq:V_k_lemma_non_cvx}.
\end{proof}

Combining Lemmas~\ref{lem:key_lemma_non_cvx}~and~\ref{lem:V_k_lemma_non_cvx}, we get the following result:
\begin{theorem}[Theorem~\ref{thm:non_cvx_convergence}]
    Let $f_1 = \ldots = f_N = f$, function $f$ be $L$-smooth and bounded from below by $f_*$, and Assumptions~\ref{as:bounded_var}~and~\ref{as:averaging_quality} hold with $\Delta_{pv}^k = \delta_{pv,1}\gamma\EE[\|\nabla f(\theta^k)\|^2] + L\gamma^2\delta_{pv,2}^2$, $\delta_{pv,1}\in [0,\nicefrac{1}{2})$, $\delta_{pv,2}\ge 0$. Then, for any $K \ge 0$ the iterates produced by Moshpit SGD with
    \begin{equation*}
        \gamma \le \min\left\{\frac{1-2\delta_{pv,1}}{8L},\frac{\sqrt{1-2\delta_{pv,1}}}{8\sqrt{e}L(\tau-1)}\right\}
    \end{equation*}
    satisfy
    \begin{eqnarray}
        \EE\left[\|\nabla f(\theta_{\text{rand}}^K)\|^2\right] &\le& \frac{8\Delta_0}{(1-2\delta_{pv,1})K\gamma} \notag\\
        &&\quad + \frac{8L\gamma}{1-2\delta_{pv,1}}\left(\frac{\sigma^2}{N_{\min}} + \delta_{pv,2}^2 + 4\gamma L\left(2\delta_{aq}^2 + e(\tau-1)\sigma^2\right)\right), \label{eq:non_cvx_bound_supp}
    \end{eqnarray}
    where $\Delta_0 = f(\theta^0) - f_*$ and $\theta_{\text{rand}}^K$ is chosen uniformly at random from $\{\theta^0,\theta^1,\ldots,\theta^{K-1}\}$. That is, Moshpit SGD achieves $\EE\left[\|\nabla f(\theta_{\text{rand}}^K)\|^2\right] \le \varepsilon^2$ after 
    \begin{eqnarray}
        \cO\Bigg(\frac{L\Delta_0}{(1-2\delta_{pv,1})^2\varepsilon^2}\Bigg[1 +(\tau-1)\sqrt{1-2\delta_{pv,1}} + \frac{\delta_{pv,2}^2 + \nicefrac{\sigma^2}{N_{\min}}}{\varepsilon^2}&\notag\\
        &\hspace{-2cm} + \frac{\sqrt{(1-2\delta_{pv,1})(\delta_{aq}^2+(\tau-1)\sigma^2)}}{\varepsilon}\Bigg]\Bigg)\label{eq:non_cvx_bound_2_supp}
    \end{eqnarray}
    iterations with
    \begin{equation*}
        \gamma = \min\left\{\frac{1-2\delta_{pv,1}}{8L},\frac{\sqrt{1-2\delta_{pv,1}}}{8\sqrt{e}L(\tau-1)}, \sqrt{\frac{\Delta_0}{LK\left(\delta_{pv,2}^2 + \nicefrac{\sigma^2}{N_{\min}}\right)}}, \sqrt[3]{\frac{\Delta_0}{4L^2\left(2\delta_{aq}^2 + e(\tau-1)\sigma^2\right)}}\right\}.
    \end{equation*}
\end{theorem}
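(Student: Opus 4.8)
The plan is to obtain the result purely by combining the two preceding lemmas and then converting the resulting telescoped bound into an iteration complexity via Lemma~\ref{lem:lemma_i_3_gorbunov}; no genuinely new inequality is required.

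First I would substitute the bound on $\sum_{k=0}^{K-1}\EE[V_k]$ from Lemma~\ref{lem:V_k_lemma_non_cvx} (inequality~\eqref{eq:V_k_lemma_non_cvx}) directly into the right-hand side of \eqref{eq:key_lemma_non_cvx} in Lemma~\ref{lem:key_lemma_non_cvx}. The term $\gamma L^2\sum_k\EE[V_k]$ then splits into a gradient contribution $8e\gamma^3 L^2(\tau-1)^2\sum_k\EE[\|\nabla f(\theta^k)\|^2]$, which is of the same form as the left-hand side, plus additive noise $4\gamma^3 L^2 K(2\delta_{aq}^2+e(\tau-1)\sigma^2)$. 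The key algebraic step is to move that gradient contribution to the left. Since the left-hand side carries the factor $\tfrac{(1-2\delta_{pv,1})\gamma}{4}$, it suffices to require $8e\gamma^3 L^2(\tau-1)^2\le\tfrac{(1-2\delta_{pv,1})\gamma}{8}$, i.e.\ $\gamma\le\tfrac{\sqrt{1-2\delta_{pv,1}}}{8\sqrt{e}L(\tau-1)}$, which is precisely the second branch of the stated step-size restriction. After absorption the coefficient in front of $\sum_k\EE[\|\nabla f(\theta^k)\|^2]$ becomes $\tfrac{(1-2\delta_{pv,1})\gamma}{8}$; dividing through by $\tfrac{(1-2\delta_{pv,1})\gamma K}{8}$ and using that $\theta_{\text{rand}}^K$ is drawn uniformly from $\{\theta^0,\ldots,\theta^{K-1}\}$, so that $\EE[\|\nabla f(\theta_{\text{rand}}^K)\|^2]=\tfrac1K\sum_{k=0}^{K-1}\EE[\|\nabla f(\theta^k)\|^2]$, yields exactly \eqref{eq:non_cvx_bound_supp}.

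Finally I would read off the complexity by matching \eqref{eq:non_cvx_bound_supp} to the template $r_K\le\tfrac{a}{\gamma K}+c_1\gamma+c_2\gamma^2$ of Lemma~\ref{lem:lemma_i_3_gorbunov} with $a=\tfrac{8\Delta_0}{1-2\delta_{pv,1}}$, $c_1=\tfrac{8L}{1-2\delta_{pv,1}}\left(\tfrac{\sigma^2}{N_{\min}}+\delta_{pv,2}^2\right)$, $c_2=\tfrac{32L^2}{1-2\delta_{pv,1}}\left(2\delta_{aq}^2+e(\tau-1)\sigma^2\right)$, and $\gamma_0=\min\{\tfrac{1-2\delta_{pv,1}}{8L},\tfrac{\sqrt{1-2\delta_{pv,1}}}{8\sqrt{e}L(\tau-1)}\}$. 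The lemma's conclusion $r_K=\cO\big(\tfrac{a}{\gamma_0 K}+\sqrt{\tfrac{ac_1}{K}}+\tfrac{\sqrt[3]{a^2c_2}}{K^{2/3}}\big)$ with the optimized $\gamma=\min\{\gamma_0,\sqrt{a/(c_1K)},\sqrt[3]{a/(c_2K)}\}$ then gives, upon setting each summand $\le\varepsilon^2$ and solving for $K$, the four additive terms of \eqref{eq:non_cvx_bound_2_supp}: the $\tfrac{a}{\gamma_0K}$ term splits into the ``$1$'' and ``$(\tau-1)\sqrt{1-2\delta_{pv,1}}$'' contributions according to which branch of $\gamma_0$ is active, the $\sqrt{ac_1/K}$ term produces the $(\delta_{pv,2}^2+\nicefrac{\sigma^2}{N_{\min}})/\varepsilon^2$ factor, and the cube-root term produces the $\sqrt{(1-2\delta_{pv,1})(\delta_{aq}^2+(\tau-1)\sigma^2)}/\varepsilon$ factor.

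I expect the main obstacle to be bookkeeping rather than conceptual. The delicate point is verifying that the threshold needed for the gradient-term absorption is simultaneously consistent with the two thresholds already imposed in Lemma~\ref{lem:key_lemma_non_cvx} ($\gamma\le\tfrac{1-2\delta_{pv,1}}{8L}$) and Lemma~\ref{lem:V_k_lemma_non_cvx} ($\gamma\le\tfrac{1}{4\sqrt{e}L(\tau-1)}$), so that taking their common minimum as $\gamma_0$ is legitimate; since $\sqrt{1-2\delta_{pv,1}}\le1$, the absorption bound is tighter than the $V_k$ bound, so $\gamma_0$ indeed subsumes all three constraints. The remaining care is in ensuring the various constant factors propagated through Lemma~\ref{lem:lemma_i_3_gorbunov} collapse correctly into the claimed $\cO$ rate.
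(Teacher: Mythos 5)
Your proposal is correct and follows essentially the same route as the paper's own proof: plug Lemma~\ref{lem:V_k_lemma_non_cvx} into Lemma~\ref{lem:key_lemma_non_cvx}, absorb the gradient contribution using the second branch of the stepsize bound, divide and use the uniform choice of $\theta_{\text{rand}}^K$ to obtain \eqref{eq:non_cvx_bound_supp}, and then apply Lemma~\ref{lem:lemma_i_3_gorbunov} with exactly the constants $a$, $c_1$, $c_2$, $\gamma_0$ you list. Your check that the absorption threshold $\gamma\le\nicefrac{\sqrt{1-2\delta_{pv,1}}}{(8\sqrt{e}L(\tau-1))}$ subsumes the constraint $\gamma\le\nicefrac{1}{(4\sqrt{e}L(\tau-1))}$ required by Lemma~\ref{lem:V_k_lemma_non_cvx} is also correct, a point the paper leaves implicit.
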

\begin{proof}[Proof of Theorem~\ref{thm:non_cvx_convergence}]
    Plugging the result of Lemma~\ref{lem:V_k_lemma_non_cvx} in the inequality \eqref{eq:key_lemma_non_cvx} from Lemma~\ref{lem:key_lemma_non_cvx}, we obtain
    \begin{eqnarray*}
        \frac{(1-2\delta_{pv,1})\gamma}{4}\sum\limits_{k=0}^{K-1}\EE\left[\|\nabla f(\theta^k)\|^2\right] &\le& f(\theta^0) - f_* + 8e\gamma^3L^2\tau(\tau-1)\sum\limits_{k=0}^{K-1}\EE[\|\nabla f(\theta^k)\|^2] \\
        &&\quad + KL\gamma^2\left(\frac{\sigma^2}{N_{\min}} + \delta_{pv,2}^2\right)\\
        &&\quad + 4KL^2\gamma^3\left(2\delta_{aq}^2 + e(\tau-1)\sigma^2\right)\\
        &\le& f(\theta^0) - f_* + \frac{(1-2\delta_{pv,1})\gamma}{8}\sum\limits_{k=0}^{K-1}\EE\left[\|\nabla f(\theta^k)\|^2\right] \\
         &&\quad + KL\gamma^2\left(\frac{\sigma^2}{N_{\min}} + \delta_{pv,2}^2\right)\\
        &&\quad + 4KL^2\gamma^3\left(2\delta_{aq}^2 + e(\tau-1)\sigma^2\right).
    \end{eqnarray*}
    Next,
    \begin{eqnarray*}
        \frac{1}{K}\sum\limits_{k=0}^K\EE\left[\|\nabla f(\theta^k)\|^2\right] &\le& \frac{8\Delta_0}{(1-2\delta_{pv,1})K\gamma} \\
        &&\quad + \frac{8L\gamma}{1-2\delta_{pv,1}}\left(\frac{\sigma^2}{N_{\min}} + \delta_{pv,2}^2 + 4\gamma L\left(2\delta_{aq}^2 + e(\tau-1)\sigma^2\right)\right),
    \end{eqnarray*}
    where $\Delta_0 = f(\theta^0) - f_*$. Since $\theta_{\text{rand}}^K$ is chosen uniformly at random from $\{\theta^0,\theta^1,\ldots,\theta^{K-1}\}$,
    \begin{equation*}
        \EE\left[\|\nabla f(\theta_{\text{rand}}^K)\|^2\right] \overset{\eqref{eq:tower_property}}{=} \frac{1}{K}\sum\limits_{k=0}^K\EE\left[\|\nabla f(\theta^k)\|^2\right]
    \end{equation*}
    and \eqref{eq:non_cvx_bound_supp} holds. Applying Lemma~\ref{lem:lemma_i_3_gorbunov} to \eqref{eq:non_cvx_bound_supp}, we get the following result: if
    \begin{equation*}
        \gamma = \min\left\{\frac{1-2\delta_{pv,1}}{8L},\frac{\sqrt{1-2\delta_{pv,1}}}{8\sqrt{e}L(\tau-1)}, \sqrt{\frac{\Delta_0}{LK\left(\delta_{pv,2}^2 + \nicefrac{\sigma^2}{N_{\min}}\right)}}, \sqrt[3]{\frac{\Delta_0}{4L^2\left(2\delta_{aq}^2 + e(\tau-1)\sigma^2\right)}}\right\},
    \end{equation*}
    then $\EE\left[\|\nabla f(\theta_{\text{rand}}^K)\|^2\right]$ equals
    \begin{equation*}
        \cO\!\left(\!\frac{L\Delta_0\left(1\!+\! (\tau\!-\!1)\sqrt{1\!-\!2\delta_{pv,1}}\right)}{(1\!-\!2\delta_{pv,1})^2K} + \sqrt{\frac{L\Delta_0\left(\delta_{pv,2}^2\! +\! \nicefrac{\sigma^2}{N_{\min}}\right)}{(1\!-\!2\delta_{pv,1})^2K}} + \frac{\sqrt[3]{L^2\Delta_0^2(\delta_{aq}^2\! +\! (\tau\!-\!1)\sigma^2)}}{(1\!-\!2\delta_{pv,1})K^{\nicefrac{2}{3}}}\!\right)\!,
    \end{equation*}
    which implies the desired convergence result from \eqref{eq:non_cvx_bound_2_supp}.
\end{proof}

\section{Decentralized matchmaking}
\label{sect:matchmaking}

In order to run group all-reduce over unreliable devices, Moshpit Averaging must be able to dynamically form groups of active devices that share the same key $C_i$.
In theory, this matchmaking can be implemented precisely as described in Algorithm~\ref{alg:moshpit}: each peer adds itself to a certain DHT key, waits for a said period of time, and then reads the same key to retrieve a list of its groupmates.

However, in practice, this kind of matchmaking would be extremely fragile: if any peer arrives late (for example, due to latency), it may join the group when other peers have already finished matchmaking. As a result, some workers will treat this peer as active, while others will behave as though there is no such peer at all, breaking the consensus and rendering all peers unable to run all-reduce in a stable manner.

To avoid this and other similar inconsistencies, Moshpit All-Reduce employs a more sophisticated matchmaking protocol with the following guarantees 
\begin{enumerate}
    \item Peers that join the same group are guaranteed to have the same list of groupmates;
    \item The group will have the maximum possible number of peers, unless some of them fail;
    \item If some peers fail, matchmaking will still form the group out of the remaining ones.
\end{enumerate}

To achieve this, each peer first declares itself onto the DHT (as in Algorithm~\ref{alg:moshpit}). Then, peers attempt to form groups by calling the \texttt{REQUEST\_JOIN\_GROUP} remote procedure call. Intuitively, if peer A calls this RPC on peer B, then \textit{peer A requests to join peer B's group}, which can be either accepted or rejected by the group ``leader'' B, which may or may not have other ``followers''.

If a peer is accepted to a group, it commits to stay active (i.e. to await other peers) for a set period of time and perform all-reduce with the peers supplied by the group ``leader''. On the other hand, a peer can be rejected if (a) the potential ``leader'' is already a follower in another group, (b) the group is already running all-reduce, or (c) if the ``leader'' failed or left during matchmaking.

To ensure that this protocol forms groups of maximum size, each peer generates a unique ``priority'' based on its local timestamp\footnote{More specifically, the priority is a tuple of $\texttt{(timestamp, peer\_id)}$, where \texttt{peer\_id} is used to break ties.}. Peers prioritize joining the group of neighbors that have the lowest ``priority''. Under normal circumstances, all workers will join the group of a peer that was first to start matchmaking according to its own local time. However, if this peer has failed or already finished matchmaking, the group will be formed around one of the remaining peers.

Matchmaking for 64 peers can take less than 1 second if all workers are located in the same cloud region and are highly synchronized. However, this can grow to 2.9 seconds for two different cloud regions and up to 9 seconds when training with commodity hardware around the world.

To ensure that this latency does not affect the training performance, Moshpit SGD performs matchmaking asynchronously in the background thread, while the model is accumulating gradients. All peers begin matchmaking 15 seconds before the estimated averaging round, so that in $\ge 95\%$ of averaging iterations, the matchmaking step is already finished by the time peers need to run all-reduce.
\section{Training with a dynamic number of peers}
\label{sect:load_state_from_peers}

Many practical setups with unreliable devices allow peers to join or leave at any time, which can produce undesirable side-effects. For instance, consider a participant that joins the ``swarm'' midway through the training process. If this participant starts with the initial model parameters, it can undo some of the progress made by other peers.

To circumvent this issue, we require each new participant to download the latest parameters from a random up-to-date peer discovered through DHT. The same technique is used to synchronize the optimizer statistics and the learning rate schedule. This protocol is also triggered if a peer becomes desynchronized with others, e.g., after a network freeze.

\section{Load balancing via linear programming}
\label{sect:load_balancing}

When running Moshpit Averaging on heterogeneous devices, one must regularly perform Butterfly All-Reduce among peers with uneven network bandwidth.
In order to speed up the protocol, we can make low-throughput peers receive, average, and send smaller partitions of the averaged vector; conversely, the high-throughput peers can process greater fractions of the input vector.
To compute the optimal partitioning, peers must solve an optimization problem that minimizes the total time spent on communication during all-reduce.

Consider a group of $M$ peers with network bandwidths $b_1, ..., b_M$, defined for simplicity as the minimum of the upload and download speed for each peer. Our objective is to find $w_i$ --- a fraction of all input vectors to be processed by the $i$-th peer.

In Butterfly All-Reduce, each peer $i$ splits its vector into parts and sends these parts to corresponding peers. Since there is no need to send $w_i$ to itself, $i$-th peer will upload a total of $1 - w_i$ of the vector to its peers.
On the receiving side, peer $i$ will average $w_i$ of the vector from all peers in its group. To do so, it must download $M-1$ vector parts of size $w_i$ from all other peers.
After that, peers distribute the averaged parts by running the same procedure in reverse (see Figure~\ref{fig:butterfly_allreduce}).

Thus, the communication time for each peer is proportional to $t_i = (1-w_i+(M-1) w_i) \cdot \frac{1}{b_i}$ and the total runtime of Butterfly All-Reduce is the maximum communication time over all peers: $T = \max_i t_i=\max_i (1-w_i+(M-1) w_i) \cdot \frac{1}{b_i}$. Formally, we minimize $T$ with respect to $w_i$ with two constraints on the fraction weights:
\begin{alignat*}{3}
\min_w&\quad &\max_i &(1-w_i +&(M-1)w_i)\cdot\frac{1}{b_i}&\\
\text{subject to}&\quad& \sum_{i=1}^M w_i = 1&&&\\
&&w_i \geq 0 &&&\forall i=1,\ldots,M
\end{alignat*}

Because the functions being maximized and the constraints are linear in $w_i$, this problem can be reduced to linear programming~\cite{kaplan1974application}. Namely, we can minimize a surrogate variable $\xi$ such that $\forall i, \ \xi \geq (1-w_i+(M-1)\cdot w_i) \cdot \frac{1}{b_i}$. The resulting linear program is formulated as follows:

\begin{alignat*}{3}
\min_{w,\xi}&\quad& \xi && &\\
\text{subject to}&\quad& \sum_{i=1}^M w_i& = 1 &&\\
&\quad& w_i& \geq 0 &&\quad \forall i=1,\ldots,M\\
&\quad&\xi&\geq (1-&w_i+(M-1)w_i)\cdot\frac{1}{b_i}&\quad\forall i=1,\ldots,M
\end{alignat*}

We solve this problem using the interior point method~\cite{andersen} implemented as part of the SciPy package (\texttt{scipy.optimize.linprog}).
Note that depending on the conditions given by participant bandwidth, optimal weights of specific peers might be equal to 0 in some cases. In essence, this allows our method to smoothly interpolate between data parallelism~\cite{valiant1990bridging}, parameter server~\cite{parameter_server_first} and sharded parameter server~\cite{sharded_ps_first} in manner similar to BytePS~\cite{byteps}.

\section{Detailed experimental setup}
\label{sect:detailed_setup}

In this section, we provide the detailed hardware configuration of servers used for each of our distributed training experiments.

\subsection{ImageNet training}\label{sect:detailed_setup_resnet}

Both homogeneous and heterogeneous training setups for ImageNet are provisioned in our on-premise infrastructure across multiple data centers and an office space (for the heterogeneous setup only).

\paragraph{Homogeneous.}For the homogeneous setup, we use 16 identical instances with the following specifications:
\begin{itemize}
    \item \textbf{GPU:} V100-PCIe,
    \item \textbf{CPU:} 6 vCPUs (Xeon E5-2650v4),
    \item \textbf{RAM:} 64GB.
\end{itemize}

\paragraph{Heterogeneous.}In turn, the heterogeneous setup contains multiple instance types listed in Table~\ref{fig:tab_setup_resnet}:
\begin{table}[h]
\centering
\caption{\textbf{Heterogeneous} setup for ImageNet training.}
\label{fig:tab_setup_resnet}
\renewcommand{\arraystretch}{1}
\begin{tabular}{@{}cccccc@{}}
\toprule
Instances & GPUs & GPU type & Cores & RAM, GB & CPU type \\ 
\midrule
4            & 1      & V100-PCIe  & 6        & 64     & E5-2650v4 \\
17           & 2      & GTX 1080Ti & 8        & 64     & E5-2650v4 \\
7            & 1      & GTX 1080Ti & 4        & 32     & E5-2650v4 \\
16           & 1      & P40  & 4        & 32     & E5-2667v2 \\
20           & 1      & M40-24GB  & 4        & 32     & E5-2667v2 \\

\bottomrule
\end{tabular}
\end{table}

\subsection{ALBERT training}\label{sect:detailed_setup_albert}

\paragraph{Homogeneous.}For the homogeneous setup, we use a single virtual machine with the following specifications:
\begin{itemize}
    \item \textbf{GPU:} $8{\times}$ V100-PCIe,
    \item \textbf{CPU:} 48 vCPUs (Xeon E5-2650v4),
    \item \textbf{RAM:} 488GB.
\end{itemize}

At the time of writing, the cloud rent cost for this instance is \textbf{\$24.48} per hour.

\paragraph{Heterogeneous.}Our heterogeneous setup is composed of two parts: AWS EC2 Spot instances and crowdsourced machines from the \texttt{Vast.ai} marketplace. For spot instances, we picked the smallest suitable instance size available from the cloud provider and further limited their bandwidth to 1Gb/s\footnote{We use \texttt{tc qdisc} Linux utility to artificially limit the network throughput, similarly to~\cite{MLSYS2019_d09bf415}}. As for marketplace instances, we report the hardware specifications for each worker gathered 1 hour after the start of ALBERT training.

Since both cloud and marketplace instances are preemptible, the actual cost of the server fleet will vary based on the current price. For simplicity, we report the maximum hourly price we ended up paying for this instance (enforced via maximum bid). Finally, some marketplace instances have missing specifications, such as unknown CPU type. This is likely caused by non-standard virtualization configured by the device owner. The resulting fleet configuration, shown in Table~\ref{fig:tab_setup}, costs up to \$15.43/hour, depending on the number of active instances.

\begin{table*}[ht!]
\centering
\caption{\textbf{Heterogeneous} setup for ALBERT training.}
\label{fig:tab_setup}
\small
\setlength{\tabcolsep}{2pt}
\hspace{7pt}\begin{tabular}{@{}ccccccc@{}}
\toprule
GPU           & Cores & RAM, GB & CPU type                       & Download, Mb/s & Upload, Mb/s &
Cost, \$/hour \\ 
\midrule
\multicolumn{7}{c}{Preemptible \texttt{g4dn.xlarge} instances ($32{\times}$)} \\
\midrule
T4            & 4         & 16     & Xeon Platinum 8259CL           & 1000          & 1000        & 0.1578         \\

\midrule
\multicolumn{7}{c}{Marketplace instances} \\    
\midrule
GTX 1070Ti    & 6         & 16     & E5-2640                        & 425           & 255         & 0.036         \\
GTX 1070Ti    & 6         & 16     & i3-6100T                       & 121           & 36          & 0.06          \\
GTX 1080Ti    & 4         & 20     & i3-6096P                       & 817           & 308         & 0.101         \\
GTX 1080Ti    & 20        & 129    & E5-2630v4                      & 660           & 475         & 0.182         \\
GTX 1080Ti    & 1         & 16     & i7-7700K                       & 245           & 210         & 0.302         \\
GTX 1080Ti    & 48        & 97     & Xeon Platinum 8124             & 583           & 539         & 0.217         \\
GTX 1080Ti    & 10        & 16     & Unknown                        & n/a           & n/a           & 0.15          \\
GTX 1080Ti    & 4         & 16     & Xeon Gold 6149                 & 98            & 100         & 0.2           \\ %
GTX 1080Ti    & 4         & 16     & Xeon Gold 6149                 & 99            & 98          & 0.2           \\ %
GTX 1080Ti    & 4         & 16     & Xeon Gold 6149                 & 99            & 99          & 0.2           \\ %
GTX 1080Ti    & 4         & 16     & Xeon Gold 6149                 & 99            & 99          & 0.2           \\ %
RTX 2070S     & 24        & 32     & E5-2620v2                      & 199           & 25          & 0.199         \\
RTX 2070S     & 32        & 97     & E5-2650                        & 162           & 64          & 0.285         \\
RTX 2080      & 6         & 16     & E5-2620v3                      & 271           & 287         & 0.25          \\
RTX 2080      & 24        & 32     & E5-2630v3                      & 199           & 25          & 0.302         \\
RTX 2080S     & 4         & 32     & E5-2697v4                      & 101           & 99          & 0.292         \\ %
RTX 2080S     & 4         & 32     & E5-2697v4                      & 93            & 99          & 0.292         \\ %
RTX 2080S     & 4         & 32     & E5-2697v4                      & 94            & 98          & 0.292         \\ %
RTX 2080S     & 4         & 32     & E5-2697v4                      & 94            & 98          & 0.292         \\ %
RTX 2080S     & 4         & 32     & E5-2697v4                      & 100           & 99          & 0.292         \\ %
RTX 2080Ti   & 4         & 16     & Ryzen Threadripper 3960x       & 279           & 271          & 0.35          \\
RTX 2080Ti   & 8         & 129    & E5-2670v3                      & 616           & 672          & 0.201         \\
RTX 2080Ti   & 6         & 32     & E5-2620v3                      & 217           & 61           & 0.22          \\
RTX 2080Ti   & 8         & 16     & E5-2697v2                      & 100           & 58           & 0.3           \\
RTX 2080Ti   & 8         & 21     & E5-2697v2                      & 145           & 49           & 0.243         \\
RTX 2080Ti    & 12        & 32     & Unknown                        & 111          & 92          & 0.326         \\
RTX 2080Ti    & 12        & 64     & E5-2690v3                      & 205          & 61          & 0.549         \\
RTX 3080      & 16        & 16     & i7-10700K                      & 69           & 49          & 0.462         \\
RTX 3090      & 14        & 32     & E5-2695v3                      & 93           & 37          & 0.498         \\
RTX 3090      & 16        & 32     & Ryzen 9 3950X                  & 338          & 38          & 0.511         \\
Titan RTX     & 4         & 32     & Xeon W-3223                   & 321           & 115          & 1             \\
Titan RTX     & 4         & 32     & Xeon Gold 6149                 & 99           & 100         & 0.702         \\ %
Titan V       & 8         & 32     & i7-7700K                       & 97           & 50          & 0.282         \\
V100-FHHL     & 8         & 60     & Xeon Gold 6148                 & 544          & 584         & 0.39          \\
\midrule
\multicolumn{6}{c}{Total hourly cost (as listed):} &\bf 15.43 \\    
\bottomrule
\end{tabular}
\end{table*}

\section{Additional averaging experiments}
\label{sect:extra_averaging}

In this section, we evaluate the averaging precision with the same methodology as in~\ref{sect:experiments_averaging}, but for multiple different worker configurations. 

Table~\ref{tab:full_averaging} provides the complete results of our experiments that were used to make conclusions in the main experimental section: instead of reporting the mean squared error for different iterations, we provide the number of rounds that was required to achieve the error of $10^{-9}$ and $10^{-4}$.

In Figure~\ref{fig:many_averagings}, plots 1--5 explore several combinations of grid sizes and failure rates, whereas plot 6 (bottom right) demonstrates a setup with the same number of peers ($10^6$) arranged into several different grid sizes and its relation to convergence. Note that $M{=}32$ outperforms the alternatives only for the specific failure rate of $0.001$.

\begin{table}[ht]
\centering
\caption{Averaging performance of different algorithms. Values denote the number of iterations required to achieve the error of $10^{-9}$ ($10^{-4}$ in parentheses), the best result is in bold.}
\vspace{1em}
\label{tab:full_averaging}
\begin{tabular}{@{}llccccc@{}}
\toprule
$N$  & $p$   & All-Reduce  & Gossip      & PushSum     & Random groups & Moshpit   \\ \midrule
512  & 0     & \bf 1.0 (1.0)   & 50.0 (50.0) & 47.6 (15.6) & 6.1 (3.0)     & 8.2 (3.5) \\
512  & 0.001 & \bf 1.6 (1.6)   & 50.0 (50.0) & 47.6 (15.6) & 6.3 (3.0)     & 8.1 (3.7) \\
512  & 0.005 & 10.9 (10.9) & 50.0 (50.0) & 47.8 (15.6) & \bf 6.3 (3.0)     & 8.7 (3.9) \\
512  & 0.01  & 41.7 (41.7) & 50.0 (50.0) & 47.8 (15.6) & \bf 6.6 (3.0)     & 9.1 (3.9) \\ \midrule
768  & 0     & \bf 1.0 (1.0)   & 50.0 (50.0) & 43.2 (13.8) & 6.2 (3.0)     & 6.0 (3.0) \\
768  & 0.001 & \bf 1.8 (1.8)   & 50.0 (50.0) & 43.2 (13.8) & 6.5 (3.0)     & 6.2 (3.0) \\
768  & 0.005 & 28.7 (28.7) & 50.0 (50.0) & 43.2 (14.1) & \bf 6.6 (3.0)     & \bf 6.6 (3.0) \\
768  & 0.01  & 50.0 (50.0) & 50.0 (50.0) & 43.9 (14.2) & 7.0 (3.0)     & \bf 6.8 (3.0) \\ \midrule
900  & 0     & \bf 1.0 (1.0)   & 50.0 (50.0) & 45.0 (14.7) & 6.4 (3.0)     & 5.0 (2.8) \\
900  & 0.001 & \bf 1.8 (1.8)   & 50.0 (50.0) & 45.0 (14.7) & 6.3 (3.0)     & 5.5 (3.0) \\
900  & 0.005 & 50.0 (50.0) & 50.0 (50.0) & 45.2 (14.7) & 6.7 (3.0)     &\bf  5.9 (3.0) \\
900  & 0.01  & 50.0 (50.0) & 50.0 (50.0) & 45.6 (14.9) & 7.0 (3.1)     & \bf 6.4 (3.1) \\ \midrule
1024 & 0     & \bf 1.0 (1.0)   & 50.0 (50.0) & 49.0 (16.2) & 6.2 (3.0)     & 2.0 (2.0) \\
1024 & 0.001 & \bf 2.0 (2.0)   & 50.0 (50.0) & 49.0 (16.3) & 6.5 (3.0)     & 3.4 (2.2) \\
1024 & 0.005 & 42.6 (42.6) & 50.0 (50.0) & 49.5 (16.3) & 6.7 (3.0)     & \bf 5.4 (2.9) \\
1024 & 0.01  & 50.0 (50.0) & 50.0 (50.0) & 49.5 (16.3) & 6.9 (3.1)     & \bf 5.9 (3.0) \\ \bottomrule
\end{tabular}
\end{table}

\begin{figure}[h]
    \centering
    \includegraphics[width=\linewidth]{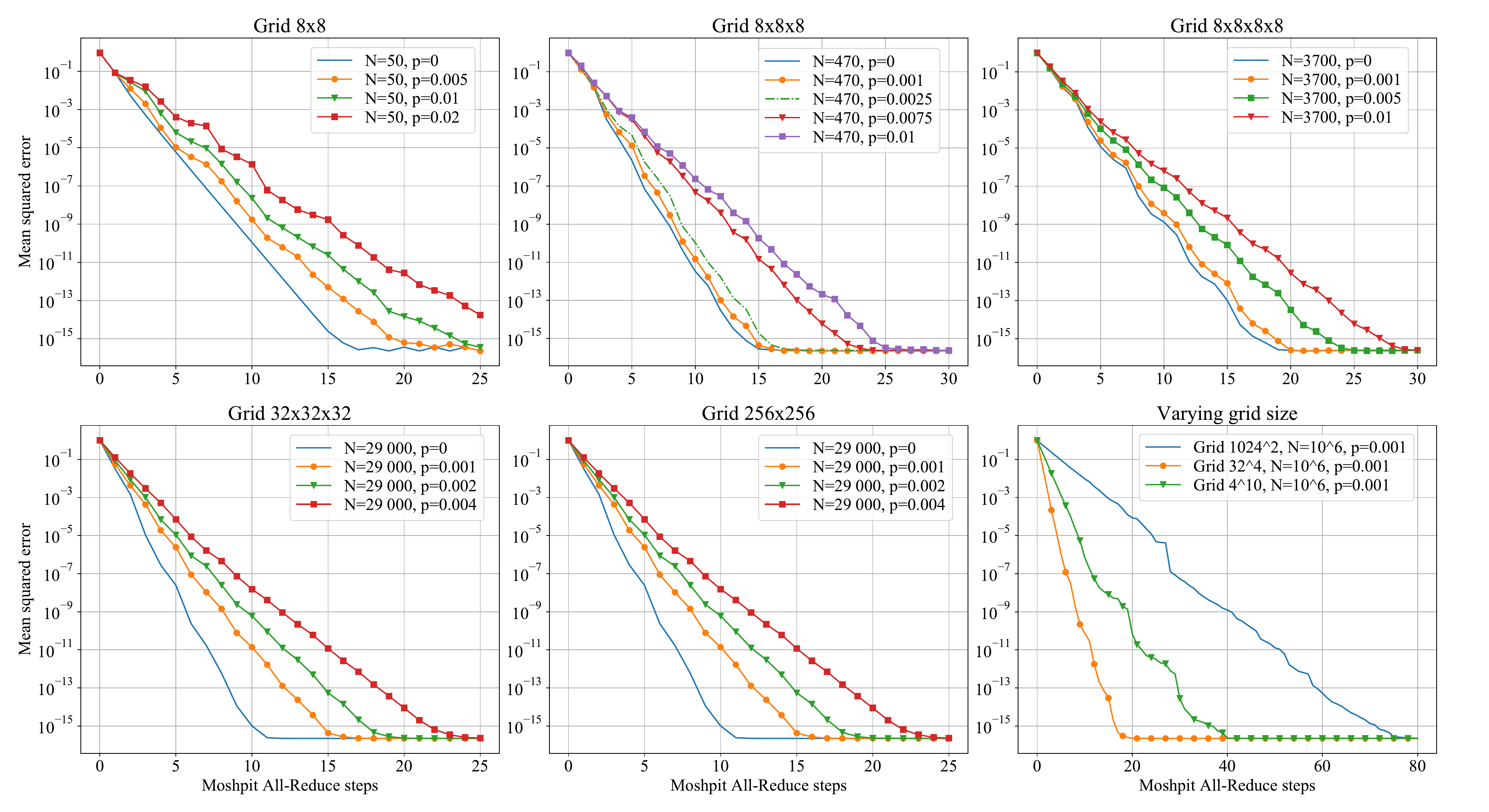}
    \vspace{-20pt}
    \caption{Averaging error of Moshpit All-Reduce as a function of the iteration number for different configurations and failure rates.}
    \label{fig:many_averagings}
\end{figure}

\section{Additional image classification experiments}
\label{sect:extra_classification}

Aside from the two evaluation scenarios provided in~\ref{sect:experiments_vision}, we also measure the performance of Moshpit-SGD in a non-distributed setup, i.e. on a single server with multiple GPUs. We conduct this experiment on the same $8{\times}$ V100 machine that was used in the \textbf{homogeneous} setup for training ALBERT (see Appendix~\ref{sect:detailed_setup_albert}).

\begin{figure}[h]
    \centering
    \begin{tabular}{cc}
    \hspace{-10pt}
        \includegraphics[width=0.5\textwidth]{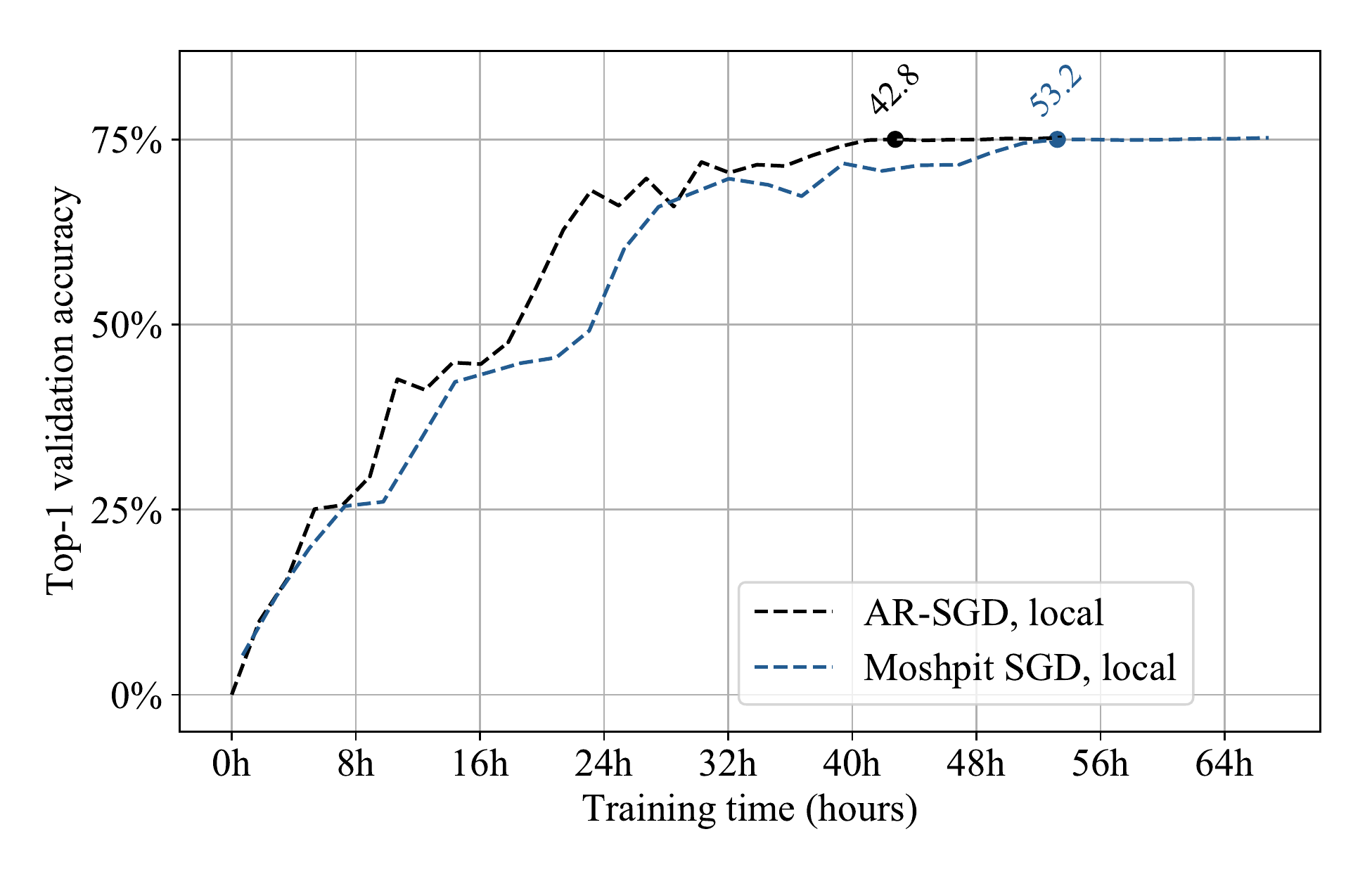} &
        \includegraphics[width=0.5\textwidth]{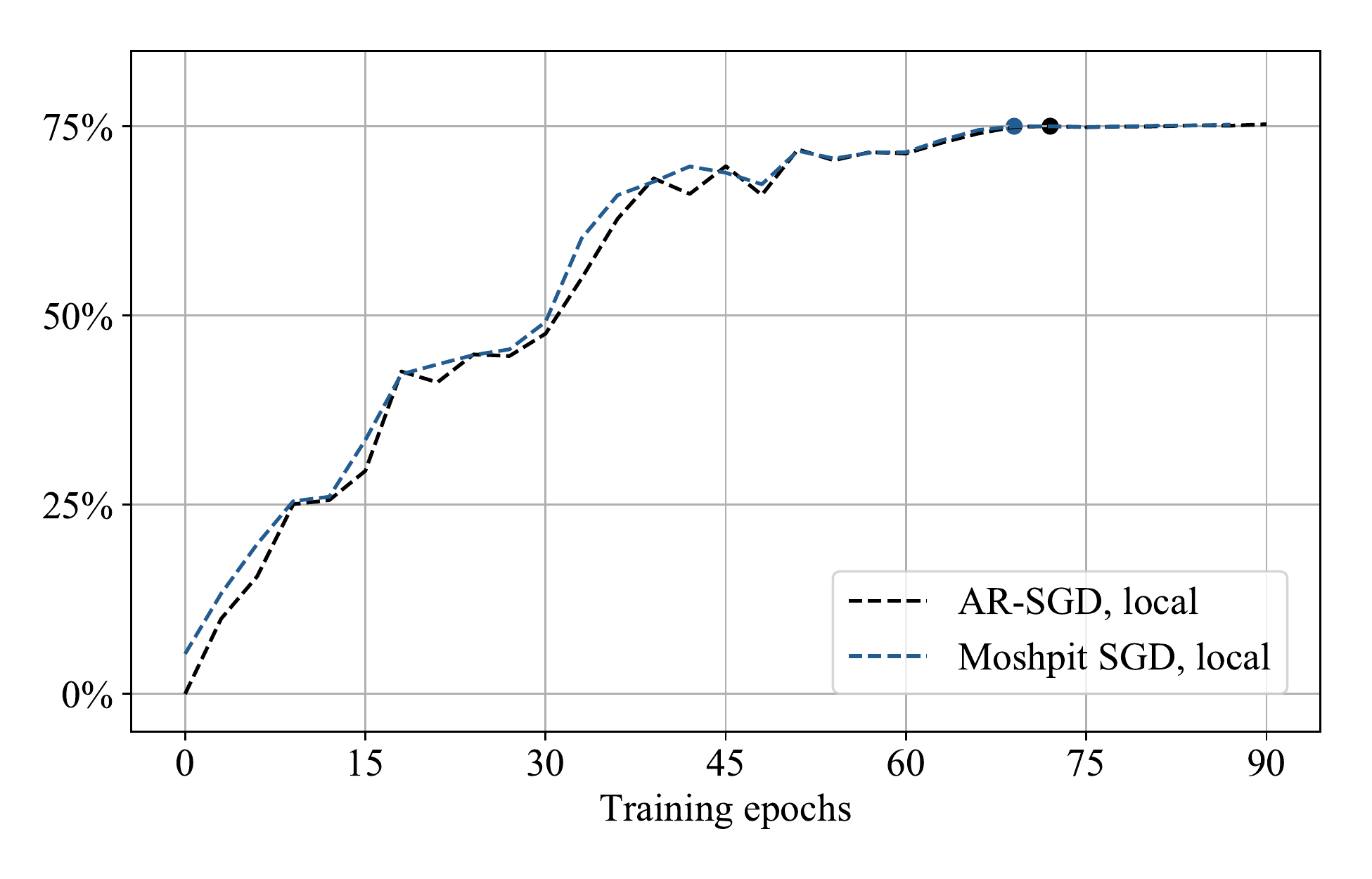}
    \end{tabular}
    \caption{
    ResNet-50 top-1 validation accuracy on ImageNet when training on a single node with $8{\times}$ V100-PCIe GPUs.
    \textbf{(Left)} Convergence in terms of training time, \textbf{(Right)} Convergence in terms of training epochs}
    \label{fig:resnet_local}\vspace{-8pt}
\end{figure}

As Figure~\ref{fig:resnet_local} demonstrates, Moshpit SGD is slower than AR-SGD by approximately $25\%$. This result is expected, since our implementation of Moshpit All-Reduce is more general and communicates over a TCP connection, whereas AR-SGD uses direct peer-to-peer GPU communication over PCIe. On average, this incurs a slowdown of $27\%$ in terms of training time.

\end{document}